\def\eqref#1{equation~\ref{#1}}
\def\1{\bm{1}}
\def\ra{{\textnormal{a}}}
\DeclareMathAlphabet{\mathsfit}{\encodingdefault}{\sfdefault}{m}{sl}
\SetMathAlphabet{\mathsfit}{bold}{\encodingdefault}{\sfdefault}{bx}{n}
\newtheorem{theorem}{Theorem}[section]
\newtheorem{lemma}[theorem]{Lemma}
\newtheorem{corollary}[theorem]{Corollary}
\newtheorem{claim}[theorem]{Claim}
\theoremstyle{definition}
\newtheorem{definition}[theorem]{Definition}
\newtheorem{remark}[theorem]{Remark}
\newtheorem{assumption}[theorem]{Assumption}
\draw\expandafter[\CF@arrow@current@style,-CF]
\draw\expandafter[\CF@arrow@current@style,-CF](\CF@arrow@start@node)--(\CF@arrow@end@node);%
    \def\curvedarrow@style{shorten <=\CF@arrow@offset,shorten >=\CF@arrow@offset,}%
\draw\expandafter[\curvedarrow@style,-CF](\CF@arrow@start@name)..controls#1..(\CF@arrow@end@name);
\title{Provable Hierarchical Lifelong Learning with a Sketch-based Modular Architecture\thanks{Alphabetical ordering denotes equal contribution.}}
\author{
Zihao Deng\\
Washington U.\ St.\ Louis\\
zihao.deng@wustl.edu
\and
Zee Fryer\\
Google Research\\
zeef@google.com
\and
Brendan Juba\\
Washington U.\ St.\ Louis\\
bjuba@wustl.edu
\and
Rina Panigrahy \& Xin Wang\\
Google Research\\
\{rinap,wanxin\}@google.com
}
\newcommand{\la}{\textlangle}
\renewcommand{\ra}{\textrangle}
\begin{document}

\maketitle

\begin{abstract}
We propose a modular architecture for lifelong learning of hierarchically structured tasks. Specifically, we prove that our architecture is theoretically able to learn tasks that can be solved by functions that are learnable given access to functions for other, previously learned tasks as subroutines. We empirically show that some tasks that we can learn in this way are not learned by standard training methods in practice; indeed, prior work suggests that some such tasks cannot be learned by \emph{any} efficient method without the aid of the simpler tasks. We also consider methods for identifying the tasks automatically, without relying on explicitly given indicators.
\end{abstract}

\section{Introduction}\label{sec:intro}

How can complex concepts be learned? Human experience suggests that hierarchical structure is key: the complex concepts we use are no more than simple combinations of slightly less complex concepts that we have already learned, and so on. This intuition suggests that the learning of complex concepts is most tractably approached in a setting where multiple tasks are present, where it is possible to leverage what was learned from one task in another. Lifelong learning~\cite{silver2013lifelong,chen2018lifelong} captures such a setting: we are presented with a sequence of learning tasks and wish to understand how to (selectively) transfer what was learned on previous tasks to novel tasks. We seek a method that we can analyze and prove leverages what it learns on simple tasks to efficiently learn complex tasks; in particular, tasks that could not be learned without the help provided by learning the simple tasks first.

In this work, we propose an architecture for addressing such problems based on creating new modules to represent the various tasks. Indeed, other modular approaches to lifelong learning \cite{yoon2018lifelong,rusu2016progressive} have been proposed previously. But, these works did not consider what we view as the main advantage of such architectures: their suitability for theoretical analysis. We prove that our architecture is capable of efficiently learning complex tasks by utilizing 
the functions learned to solve previous tasks as components in an algorithm for the more complex task. In addition to our analysis proving that the complex tasks may be learned, we also demonstrate that such an approach can learn functions that standard training methods fail to learn in practice, including some that are believed not to be learnable, even in principle~\cite{klivans2009cryptographic}.
%their relationships to simpler tasks. Such relationships include sharing both the functions computed to solve such tasks, as well as certain kinds of declarative knowledge that can be extracted from the data provided for these tasks, in the form of a knowledge graph. 
We also consider methods for automatically identifying whether a learning task posed to the agent matches a previously learned task or is a novel task. 
% We show an example 

We note briefly that a few other works  considered lifelong learning from a theoretical perspective. An early approach by \cite{solomonoff1989system} did not seriously consider computational complexity aspects.
\cite{ruvolo2013ella} gave the first provable lifelong learning algorithm with such an analysis. But, the transfer of knowledge across tasks in their framework was limited to feature learning. In particular, they did not consider the kind of deep hierarchies of tasks that we seek to learn.

\subsection{Overview of the architecture}

The main technical novelty in our architecture over previous modular lifelong learners is that ours uses a particular type of internal data structure called \emph{sketches} \cite{ghazi2019recursive,panigrahy2019does}. All such data, including inputs from the environment, outputs from a module for another task, decisions such as choosing an action to take, or even descriptions of the modules themselves, are encoded as such sketches. Although sketches have a dense (vector) representation, they can also be interpreted as a kind of structured representation \cite[Theorem 9]{ghazi2019recursive} and are {\em recursive}; that is, they point to the previous modules/events that they arose from (Figure~\ref{fig:architecture-and-sketch}, right). However, in order to construct these sketches in \cite{ghazi2019recursive}, the structure of the network is assumed to be given. No algorithms for constructing such a hierarchical network of modules from training data were known. In this work we show a method to construct such a hierarchical network from training data. We provide an architecture and algorithms for learning from a stream of training inputs that produces such a network of modules over time.
% In this work, we construct these sketches without knowing the network structure beforehand.  
This includes challenges of identifying each module, and 
discovering which other modules it depends on. 

Our architecture can be viewed as a variant of the Transformer architecture \cite{radford2021learning, shazeer2017outrageously}, particularly the Switch Transformer \cite{fedus2021switch} in conjunction with the idea of Neural Memory \cite{sketchmem2021}. Instead of having a single feedforward layer, the Switch Transformer has an array of feedforward layers that an input can be routed to at each layer. Neural Memory on the other hand is a large table of values, and one or a few locations of the memory can be accessed at each layer of a deep network. In a sense the Switch Transfomer can be viewed as having a memory of possible feedforward layers (although they use very few) to read from. It is viewing the memory as holding ``parts of a deep network'' as opposed to data, although this difference between program and data is artificial: for example, embedding table entries can be viewed as ``data'' but are also used to alter the computation of the rest of the network, and in this sense act as a ``program modifier''.

\begin{figure}
\begin{subfigure}{0.6\textwidth}
\includegraphics[trim = 20 75 20 100 , clip, width=\textwidth]{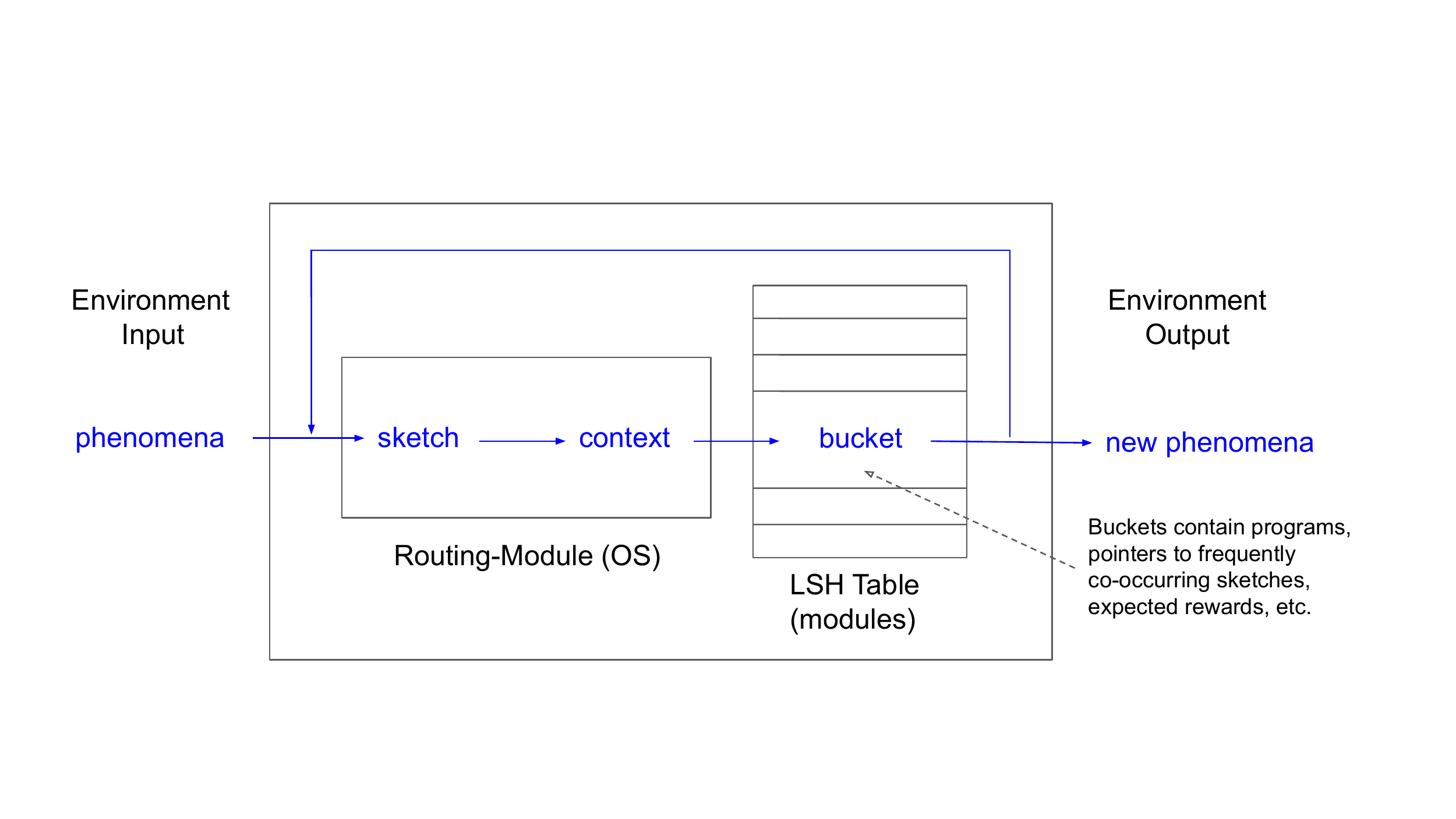}
\end{subfigure}
\begin{subfigure}{0.35\textwidth}
\begin{flushright}
\includegraphics[trim = 0 230 310 0, clip, width=\textwidth]{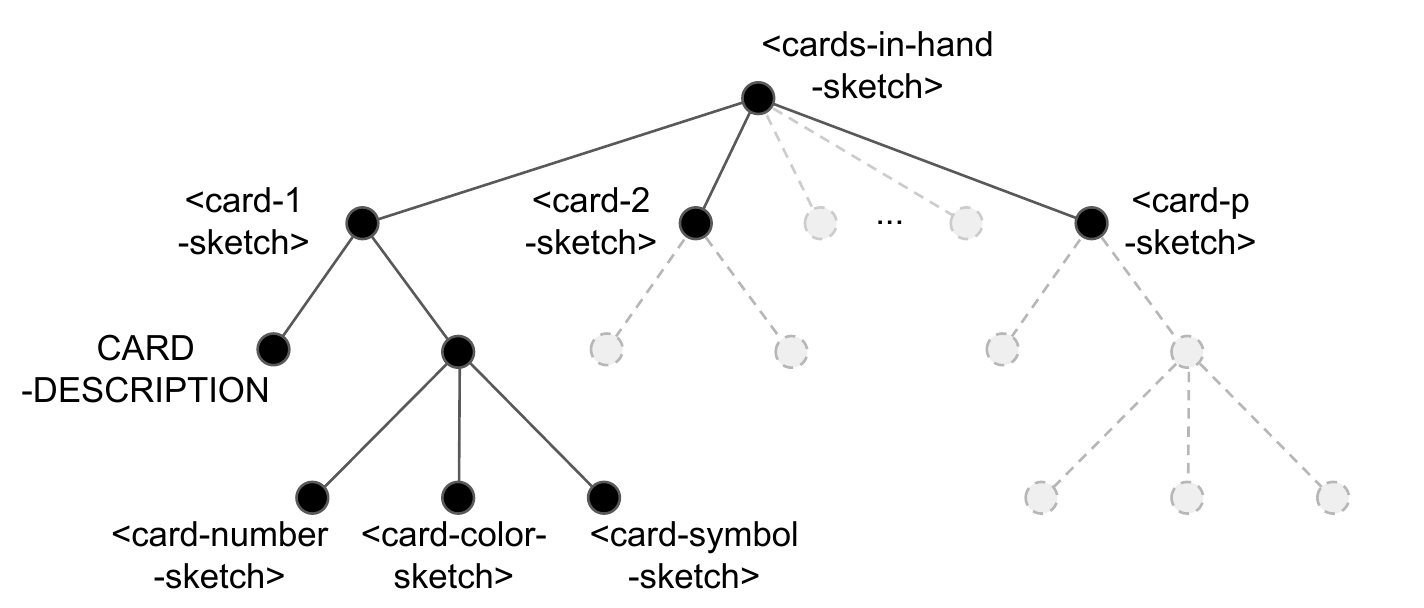}
\end{flushright}
\end{subfigure}
\caption{{\em (left)} The Routing-module (OS) routes sketches to the programs in the LSH table, which in turn produces sketches that are fed back to the OS in addition to sketches of inputs from the environment. The OS, while shown here as a distinct module, could itself be a module (program) in the LSH hash table. {\em (right)} Sketch of a hand of cards during a card game. The \la cards-in-hand-sketch \ra is a tuple of $p$ sub-sketches (one for each card), and each \la card-$i$-sketch \ra is itself a compound sketch: for example, \la card-1-sketch \ra consists of the CARD-DESCRIPTION label/type and three sub-sketches describing the card's number, color, and symbol.}
\label{fig:architecture-and-sketch}
\end{figure}

The key component of our architecture is a locality sensitive hash (LSH) table based memory (see \cite{sketchmem2021}) that holds  sketches of data (such as inputs) and {\em modules} or programs (think of an encoding of a small deep network) that handles such sketches (Figure~\ref{fig:architecture-and-sketch}, left).  The ``operating system'' of our architecture executes the basic loop of taking sketches (either from the environment or from internal modules) and routing/hashing them to the LSH table to execute the next module that processes these sketches. These modules produce new sketches that are fed back into the loop.

New modules (or concepts) are formed simply by instantiating a new hash bucket whenever a new frequently-occurring context arises, i.e. whenever several sketches hash to the same place; the context can be viewed as a function-pointer and the sketch can be viewed as arguments for a call to that function. Frequent subsets of sketches may be combined
% based on {\em attention} 
to produce {\em compound} sketches. Finally we include pointers among sketches based on co-occurrence and co-reference in the sketches themselves. These pointers form a knowledge graph: for example if the inputs are images of pairs of people where the pairs are drawn from some latent social network, then assuming sufficient sampling of the network, this network will arise as a subgraph of the graph given by these pointers. The main loop allows these pointers to be dereferenced by passing them through the memory table, so they indeed serve the intended purpose.

The main idea of the architecture is that all events produce sketches, which can intuitively be thought of as the  ``mind-state'' of the system when that event occurs. The sketch-to-sketch similarity property (see below) combined with a similarity-preserving hash function ensures that similar sketches go to the same hash bucket (Appendix~\ref{appendix:lsh}); thus the hash table can be viewed as a content addressed memory. See Figure~\ref{fig:architecture-and-sketch} for an illustration of this.
We remark that the distances between embeddings of scene representations were used to automatically segment video into discrete events by \cite{franklin2020structured}, and obtained strong agreement with human annotators. The thresholded distance used to obtain the segmentation is analogous to our locality-sensitive hashes, which we use as context sketches.
% Randomly assigning events (by using Chinese Restaurant Process) based on vague geometric measurement has been considered by \cite{franklin2020structured}. Here replace the random process with hashing and use a vague task descriptor.  

A sketch can be viewed at different levels of granularity before using it to access the hash table; this becomes the {\em context} of the sketch. Each bucket contains a program that is executed when a sketch arises that indexes into that bucket. The program in turn produces outputs and new sketches that are routed back to the hash table. The system works in a continuous loop where sketches are coming in from the environment and also from previous iterations; the main structure of the loop is:

\adjustbox{scale=0.8,center}{
\begin{tikzcd}[every matrix/.append style={nodes={font=\small}}]
Phenomena \rar{input} & sketch \rar{f} & context \rar{h}  & bucket \rar  & program  \arrow[bend right=30, swap]{lll}{produces} \rar{{\tiny output}} & Phenomena
\end{tikzcd}
}
%[[maybe move this loop earlier and tie in into the architecture principles: Rina]]

Thus external and internal inputs arrive as sketches that are converted into a coarser representation using a function $f$ (see Section~\ref{sec:architecture-principles} below) and then hashed to a bucket using a locality-sensitive hash function $h$. The program at that bucket is executed to produce an output-sketch that is fed back into the system and may also produce external outputs. This basic loop (described in Algorithm~\ref{algo:informal_execution-loop}) is executed by the routing module, which can be thought of as the operating system of the architecture.

\section{Sketches review}\label{sec:sketch-review}

% {\color{blue} FOR RINA: trim the rest of this section.}

Our architecture relies on the properties of the sketches introduced in \cite{ghazi2019recursive}. In this section we briefly describe the key properties of these sketches; the interested reader is referred to \cite{ghazi2019recursive, sketchmem2021} for details.

A \emph{sketch} is a compact representation of a possibly exponentially-wide ($d\times N$) matrix in which only a small number $k$ of the columns are nonzero, that supports efficient computation of inner products, and for which encoding and decoding are performed by linear transformations. For concreteness, we note that sketches may be computed by random projections to $\mathbb{R}^{d'}$ for $d'\geq kd\log N$; the Johnson-Lindenstrauss Lemma then guarantees that inner-products are preserved.

For our purposes, we suppose modules $M_1,\ldots,M_N$ produce vectors $x_1,\ldots,x_N\in\mathbb{R}^d$ as output, where only $k$ of the modules produce (nonzero) outputs. We view the sparse collection of module outputs as a small set of pairs of the form \{[$M_{i_1}$, $x_{i_1}$],\ldots,[$M_{i_k}$, $x_{i_k}$] \}: For example an input image has a sketch that can be thought of as a tuple [IMAGE, \la bit-map-sketch\ra ]. An output by an image recognition module that finds a person in the image can be represented as [PERSON, [\la person-sketch\ra , \la position-in-image-sketch\ra ]); here IMAGE and PERSON can be thought of as ``labels''. If the outputs of these modules are vector embeddings in the usual sense, then indeed the inner products measure the similarity of the objects represented by the output embeddings.

Observe that the constituent individual vectors $x_j$ in a sketch may themselves be sketches. E.g., \la person-sketch{\ra}  could be set of such pairs 
\{[NAME,\la name-sketch\ra ], [FACIAL-FEATURES,\la facial-sketch\ra ], [POSTURE,\la posture-sketch\ra ]\}, and an image consisting of multiple people could be mapped by our recognition module to a set \{\la person-1-sketch\ra , \la person-2-sketch\ra ,\ldots,\la person-$k$-sketch\ra \}.
Note if the if the tuple is very large, we will not be able to recover the sketch of each of its members but only get a ``average" or summary of all the sketches -- however if a member has high enough relative weight (see \cite[Section 3.3]{ghazi2019recursive}) it can be recovered.  Appendix~\ref{appendix:sketching} discusses how large objects can be stored as sketches hierarchically.

Indeed, following \cite{ghazi2019recursive}, the outputs of modules in our architecture will be tuples that, in addition to an ``output'' component, represent the input sketches which, in turn, represent the modules that produced those inputs, e.g., 
 \{[MODULE-ID,\la module-id\ra ], [OUTPUT-SKETCH,\la output-sketch\ra ],  [RECURSIVE-INPUT-SKETCH, \la recursive-input-sketch\ra ]\}. By recursively unpacking the input sketch, it is possible to reconstruct the history of computation that produced the sketch.

\subsection{Principles of the architecture}\label{sec:architecture-principles}

The following are the guiding principles behind the architecture. 
% This gives a high level overview of the overall architecture, and the exact details based on these principles are given in later sections. 

%{\bf Rina: Would be good to get the (chemical equation like) little flow equation from line 111 in previous version}

\begin{enumerate}
\item {\bf Sketches.}
%\begin{itemize}
% \iffalse
% [This is repeating previous material]
% \item Every phenomenon/event is sketched.
% \item Frequently occurring sequences of sketches are sketched into a compound sketch. Even programs / program-execution-traces can be sketched in addition to external events like inputs and outputs. (This creates shortcuts at a bucket (see below) and acts as a ``cache'' of frequent pathways). Remember the sequence by both frequency and rewards. 
% \fi
%\item
All phenomena (inputs, outputs, commonly co-occurring events, etc) are represented as sketches.
%\item 
There is a {\em function from sketch to context} $f: S\rightarrow C$ that gives a coarse grained version of the sketch. This is obtained by looking at the fields in the sketch $S$ that are high level labels and dropping fine details with high variance such as attribute values; it essentially extracts the ``high-level bits'' in the sketch $S$.
%\end{itemize}
\item {\bf Hash table indexed by context that is robust to noise.} {\em (more details in Appendix~\ref{sec:appendixarchitectureprinciples})}
%\begin{itemize}
%\item 
The hash function $h: C\rightarrow \text{[hash-bucket]}$ is ``locality sensitive'' in the sense that similar contexts are hashed to the same bucket with high probability.
%\item 
Each hash bucket may contain a trainable program $P$, and summary statistics as described in Figure~\ref{fig:hashbucket}. We don't start to train $P$ until the hash bucket has been visited a sufficient number of times.
(Note: A program may not have to be an explicit interpretable program but could just be an ``embedding'' that represents (or modifies) a neural network.)
%\end{itemize}
\item {\bf Routing-module.} {\em (can be implemented by Alg.\ref{algo:fdecisiontree} )}
%\begin{itemize}
%\item 
Given a set of sketches from the previous iteration, the routing module identifies the top ones, applies the context function $f$ followed by the hash function $h$ to route them to their corresponding buckets. 
Before feeding these to $f$, the routing module picks a subset of sketches and combines them into compound sketches. The routing module makes all subset-choosing decisions 
\end{enumerate}

%In keeping with the modular flavor of our architecture, the precise implementation details of each component are often irrelevant so long as it can perform certain basic tasks. We therefore describe the main functionality of each component here and give examples of possible basic implementations. It should be clear how more complicated versions (e.g. modifications of attention, more sophisticated tree search algorithms) can be swapped in as needed for specific model implementations.

In addition, we can also keep  associations  of frequently co-occuring sketch contexts as edges across buckets forming knowledge graph. Please see Appendix~\ref{appendix:architecture} and \ref{app:knowledgegraph} for details. 

% In lifelong learning, we are given learning samples in an online fashion.
% Each time we receive an input learning sample, we will call Alg.\ref{algo:informal_execution-loop}. 
% In the following execution Alg.\ref{algo:informal_execution-loop} for the sake of concreteness we restrict to how the architecture handles one input sketch at a time. 

%(here we ignore knowledge graph edges; see Appendix~\ref{app:knowledgegraph} for generalized version).
\begin{algorithm}[t]
\SetStartEndCondition{ }{}{}%
\SetKwProg{Fn}{def}{\string:}{}
\SetKwFunction{Range}{range}%%
\SetKw{KwTo}{in}\SetKwFor{For}{for}{\string:}{}%
\SetKwIF{If}{ElseIf}{Else}{if}{:}{elif}{else:}{}%
\SetKwFor{While}{while}{:}{fintq}%
\DontPrintSemicolon%
\AlgoDontDisplayBlockMarkers\SetAlgoNoEnd\LinesNumbered%
\SetAlgoSkip{}

\KwIn{input sketch $T$ (this sketch may contain a desired output for training)} 
{\tt current-sketches} $\leftarrow \{T\}$ \;

\While{{\tt current-sketches} is not empty}{
{\tt current-programs} $\leftarrow$ $\emptyset$\;

\ForEach{sketch $S$ in {\tt current-sketches}\, }{
extract context $C = f(S)$ \;

update access-frequency-count of bucket $h(C)$ \;
% store $S$ as an outgoing edge of $h(C)$, if there are too many sketches store a compound sketch. Store pointers to co-referencing/co-occuring sketches buckets. \;

\uIf{bucket $h(C)$ has a program $P$}{append $(S,P)$ to {\tt current-programs}
}
\uElse{
\uIf{bucket $h(C)$ is frequently accessed}
    {initialize program at $h(C)$ with some random program
    % with program from similar non-empty context bucket 
    and mark it for training.\;
    
    Fetch programs $P_i$ (possibly by some similarity criterion), append those $(S,P_i)$ to {\tt current-programs}}}
    }

Routing module chooses some subset of {\tt current-programs}, runs each program on its associated sketch, appends outputs to {\tt current-sketches}\; \label{Routing:choose_current_programs}

Append sketches on outgoing edges of accessed buckets to {\tt current-sketches}\; 

\uIf{any of the programs are marked for training}
    {routing module picks one or some of them and trains them, and may choose to stop execution loop}
    
\uIf{any of the sketches is of (a special) type OUTPUT sketch}
    {routing module picks one such, outputs that sketch or performs that action, and may choose to stop execution loop}

\uIf{any of the sketches is of type REWARD sketch (say for correct prediction or action)}
    { %routing module 
    updates the reward for this bucket and propagates those rewards to prior buckets}    
    
Routing module 
picks  $k$ combinations of sketches in  {\tt current-sketches}, and combine them into compound sketches:  
% and 
% obtain a compound sketches 
$S_1, \dots, S_k$ 
% uses attention to combine elements of {\tt current-sketches} into at most $k$ compound sketches $S_1, \dots, S_k$ 
(may produce 0 sketches)\; \label{Routing:combining_current_sketches}

{\tt current-sketches} $\leftarrow \{S_1, \dots, S_k\}$\;
}
\caption{Informal presentation of the main execution loop 
%to show how the architecture components fit together; details 
%such as how the routing module chooses programs, what counts as a ``nearby'' bucket, when to terminate the loop, etc are discussed in later sections.
}
\label{algo:informal_execution-loop}
\end{algorithm}

Thus new modules (or concepts) are formed simply by a new frequently occurring context (see earlier papers on how sketches are stored in LSH based sketch memory). Since sketches are fed to the programs indexed by their context, the context can be viewed as a function-pointer and the sketch can be viewed as arguments for a call to that function; multiple arguments can be passed by using a compound sketch. Programs that call other modules can be represented as a computation DAG over modules at the nodes 
%(see Claim~\ref{claim:functioncall} to see how this happens)
.

\section{Independent tasks and 
architecture v0}\label{sec:v0}

% We begin by introducing the architecture in a simplified setting, where dependence between tasks is not considered. The learning problem here is straightforward, and we consider it for the sake of exposition. We will extend this basic architecture to handle complex tasks in Section~\ref{sec:hll}. 

% \subsection{Life long learning of independent tasks}\label{sec:v0-task}
Our learning problem follows a formulation similar to \cite{ruvolo2013ella}. In a lifelong learning system, we are facing a sequence of supervised learning tasks: ${\cal{Z}}^{(1)}, \dots, {\cal{Z}}^{(T_{max})}$. In contrast to \cite{ruvolo2013ella}, at each step we will generally obtain a single input (in the form of sketch $(s_t, {\mathbf{x}}^{(t)},\mathbf{y}^{(t)})$) that contains DATA 
% ${\bf{x}}^{(t)} \in {{\cal{X}}^{(t)}}^{d}$, TARGET ${y}^{(t)} \in {{\cal{Y}}^{(t)}}$ and task descriptor sketch (vector) $s_t$, 
${\bf{x}}^{(t)} \in {{\cal{X}}^{(t)}}$, TARGET ${\bf{y}}^{(t)} \in {{\cal{Y}}^{(t)}}$ and task descriptor sketch (vector) $s_t$, 
% where the $t$th task is given by a hidden function ${\hat{\phi}}^{(t)}: {{\cal{X}}^{(t)}}^{d} \to {{\cal{Y}}^{(t)}}$ that we want to learn, 
where the $t$th task is given by a hidden function ${\hat{\phi}}^{(t)}: {{\cal{X}}^{(t)}} \to {{\cal{Y}}^{(t)}}$ that we want to learn, 
and ${\hat{\phi}}^{(t)}({\bf{x}}^{(t)})={y}^{(t)}$. We assume that that the tasks are uniformly distributed, and the distributions over task data are stationary: i.e., at each step, the task is sampled uniformly at random, and for the sampled task $t$, the data is sampled independently 
from a fixed distribution 
% on ${{\cal{X}}^{(t)}}^{d}$ for $t$.
on ${{\cal{X}}^{(t)}}$ for $t$.
% For example, when ${{\cal{X}}^{(t)}} = \mathbb{R}$ and ${\cal{Y}}^{(t)} = \{-1, +1\}$ (or $ \mathbb{R}$), then each task is a standard classification task (resp.\ regression task). But, distinct tasks $t$ and $t'$ may have different data distributions, and may have different task functions $\hat{m}}^{(t)}$ and $\hat{m}}^{(t')}$.
%In a more general case, ${\mathbf{x}}^{(t)}$, ${y}^{(t)}$ can both be a vector or sketch.
In this setting, we assume the task functions are all members of a common, known class of functions  $\mathcal{M}$ for which there exists an efficient learning algorithm $\mathcal{A}_{\mathcal{M}}$, i.e., $\mathcal{A}_{\mathcal{M}}$ satisfies a standard PAC-learning guarantee: when provided with a sufficiently large number of training examples $M$, with probability $1-\delta$ $\mathcal{A}_{\mathcal{M}}$ returns a function that agrees with the task function with probability at least $1-\epsilon$ on the task distribution. For example, SGD learns  a certain class of neural networks $\mathcal{M}$ with a small constant depth. Indeed, we stress that this setting does not require transfer across tasks. 
Our architectures are instantiated by a choice of hash function $h$ and a context function $f$. 
% The hash function $h$ can be any locality-sensitive hash function; see Appendix~\ref{sec:appendixarchitectureprinciples} for more details.
% Similarly the context function $f$ can be any function that extracts useful information from a sketch and explores which components of this information are most useful to the routing module in deciding where to go next. 
Architecture v0 uses a very simple context function $f$: it projects the sketch down to the task descriptor $t$, dropping the DATA ${\mathbf{x}}^{(t)}$ and TARGET ${y}^{(t)}$ components. (Other combinatorial decisions in the routing module are NO-OPs.)
Each time we receive an input learning sample, we will call Alg.~\ref{algo:informal_execution-loop} (input is a single sketch).
%We will consider a slightly more complex variant in Architecture v1 below, and in Appendix~\ref{sec:appendixfdecisiontreealgo} we will see a uniform way to learn $f$ as if it were a separate modular task in itself by viewing it as a decision tree of modules.

\begin{claim}\label{claim:independent-tasks}
Given an error rate $\epsilon>0$ and confidence parameter $\delta>0$ and $N$ independent tasks, each of which require at most $M=M(\epsilon,\delta)$ examples to learn to accuracy $1-\epsilon$ with probability $1-\delta$, and training data as described in above, with probability $1-(N+1)\delta$, Architecture v0 learns to perform all $N$ tasks to accuracy at least $1-\epsilon$ in $O(M N\log \frac{N}{\delta})$ steps.
\end{claim}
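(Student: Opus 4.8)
The plan is to decompose the claim into two nearly independent failure modes and combine them with a union bound that produces exactly the stated $(N+1)\delta$: a \emph{sampling} (coupon-collector) event guaranteeing that every task accumulates enough examples, and $N$ \emph{learning} events guaranteeing that $\mathcal{A}_{\mathcal{M}}$ succeeds on each task. First I would dispatch the routing behavior of v0, which is the structural backbone of the whole argument. Because the context function $f$ projects each input sketch onto its task descriptor $s_t$ and discards the DATA and TARGET fields, and distinct tasks carry distinct descriptors, the hash $h$ sends every sample of task $i$ to one dedicated bucket and keeps the $N$ tasks in separate buckets. Hence bucket $i$ sees precisely an i.i.d.\ stream of labeled examples from task $i$'s stationary distribution, which it stores (in its summary statistics) until training is triggered; I would fix the ``frequently accessed'' threshold to $M$, so that as soon as a bucket collects $M$ examples the routing module invokes $\mathcal{A}_{\mathcal{M}}$ on them. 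Here one ``step'' is one received sample, i.e.\ one call to Algorithm~\ref{algo:informal_execution-loop}.

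For the sampling bound, note that in $S$ steps the count $X_i$ of task-$i$ examples is $\mathrm{Binomial}(S,1/N)$ with $\E[X_i]=S/N$. Choosing $S=\Theta\big(N(M+\log(N/\delta))\big)$ makes $\E[X_i]\ge 2M$ and $\E[X_i]\ge 8\log(N/\delta)$ simultaneously; the multiplicative Chernoff lower tail $\Pr[X_i\le(1-\gamma)\E[X_i]]\le \exp(-\gamma^2\E[X_i]/2)$ with $\gamma=\tfrac12$ then gives $\Pr[X_i<M]\le\exp(-\E[X_i]/8)\le\delta/N$, and a union bound over the $N$ buckets yields probability at least $1-\delta$ that every task collects at least $M$ examples. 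This is routine; I would only verify the constant and remark that the bound $O(MN\log(N/\delta))$ in the statement is a convenient over-estimate of the sharper $O\big(N(M+\log(N/\delta))\big)$ produced here.

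Next, conditioned on bucket $i$ holding at least $M$ i.i.d.\ examples, the PAC guarantee for $\mathcal{A}_{\mathcal{M}}$ (applied to the first $M$ of them) produces, with probability at least $1-\delta$, a program agreeing with $\hat{\phi}^{(i)}$ on a $1-\epsilon$ fraction of the task distribution; a union bound over the $N$ tasks caps the total learning-failure probability at $N\delta$. Combining, on the intersection of the sampling-success event and the $N$ learning-success events—whose complement has probability at most $\delta+N\delta=(N+1)\delta$—every task is both trained and learned to accuracy $1-\epsilon$, all within $S=O(MN\log(N/\delta))$ steps, which is the claim.

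The main obstacle is not any single inequality but making the reduction airtight. I must argue carefully that the examples a bucket accumulates are genuine i.i.d.\ draws from the task distribution, so the PAC hypothesis applies (this uses stationarity and independent sampling), and that the sampling event and the learning events may be legitimately union-bounded. The clean way to handle the latter is to define each learning event via a \emph{hypothetical} run of $\mathcal{A}_{\mathcal{M}}$ on $M$ fresh i.i.d.\ samples from task $i$, which is well-defined irrespective of the sampling outcome and, by stationarity, has the same distribution as the actual first $M$ arrivals. I also need distinct task descriptors not to collide under $h$ so the per-bucket i.i.d.\ structure is preserved; in v0 this is immediate since the context \emph{is} the descriptor. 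Everything else—the Chernoff computation and the two union bounds—is standard.
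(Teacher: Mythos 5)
Your proposal is correct and follows essentially the same route as the paper: route each sample by its task descriptor to a dedicated bucket so the problem decomposes into $N$ independent supervised tasks, then learn each with $\mathcal{A}_{\mathcal{M}}$. The paper's own proof is only the one-line routing argument; you supply the Chernoff/coupon-collector step and the $(N+1)\delta$ union bound that the paper leaves implicit, and these check out.
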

\iffalse{
This follows from the fact that the problem essentially breaks down into $n$ separate supervised learning tasks. In the learning algorithm we simply route each sample using $f$ and $h$ to its corresponding task according to its task descriptor and use the learning algorithm $\mathcal{A}_{\mathcal{M}}$ to train the function $\hat{m}_t$ in the corresponding hash bucket.  
The algorithm for v0 falls into the framework of Algo.1. However, $f$ and $h$ are restricted and line 9, 10, 17 now becomes NO-OP. Because in v0 the tasks are independent. 

Here, we assumed that each task has a fixed, unique task descriptor. Using the locality-sensitive hash function, it is straightforward to extend v0 slightly to the case where each task is represented by noisy but well-separated task description vectors (see Appendix~\ref{sec:appendixarchitecturev0}). 
}\fi

\iffalse{
{\color{gray} 
Further, we can relax the assumption of independence by considering $n$ tasks that are all variants of an existing global-program, and also allow some noise in the contexts,

SOMETHING ABOUT ``VARIANTS OF EXISTING GLOBAL PROGRAM" -- WHAT DOES THIS MEAN??

\begin{theorem}
If there is a ball of points of radius $\delta$ in sketch space so that all those points should go to the same program then if we use $r$ LSH functions and examine each of them for $r \ge n^{O(\delta)} $ the program will get programmed in {\em any} of the several hash locations the points map to (as long as the points are picked randomly) from the ball.
\end{theorem}
}}\fi

\section{Hierarchical lifelong learning and architecture v1}
\label{sec:hll}

% {\color{blue} See if we can merge into one section}

% \subsection{Formulation of hierarchical lifelong learning}
% \label{v1:problem_formulation}
We follow a similar problem formulation as in Sec.\ref{sec:v0}, but in this case a task can depend on other tasks. We assume that the structure of dependencies can be described by a degree-$d$ directed acyclic graph (DAG), in which the nodes correspond to tasks. 
Each task $t$ depends on at most $d$ other tasks $t'_1, \dots, t'_d$, indicated by the nodes in the DAG with edges to its node, and the task is to compute the corresponding function ${\hat{\phi}}^{(t)} = {\phi}^{(t)} ({\hat{\phi}}^{(t'_1)}, \dots, {\hat{\phi}}^{(t'_d)})$ where $ {\phi}^{(t)} \in \mathcal{M}$. 
If $t'_1, \dots, t'_d$ are sources in the DAG (no incoming edges) then ${\hat{\phi}}^{(t'_i)} \in \mathcal{M}$. 
We assume that all tasks share a common input distribution. 
% Thus that all tasks share a common input distribution. 
% Thus, all tasks share a common domain ${\cal{X}}}^{d}$ in this setting.
We will call the functions from $\mathcal{M}$ \emph{atomic modules}, since they are the building blocks of this hierarchy. We will call functions that call other functions in the DAG, such as ${\hat{\phi}}^{(t)}$ above, a \emph{compound module}. As before, we assume $\mathcal{M}$ is a learnable function class. However, ${\hat{\phi}}^{(t)}$ might not belong to a learnable function class due to its higher complexity.
Here, we will assume moreover that the algorithm $\mathcal{A}_{\mathcal{M}}$ for learning $\mathcal{M}$ is robust to label noise. Concretely, we will assume that if an $\epsilon$-fraction of the labels are corrupted by an adversary, then $\mathcal{A}_{\mathcal{M}}$ produces an $O(\epsilon)$-accurate function. We note that methods are known to provide SGD with such robustness for strongly convex loss functions, even if the features are corrupted during training~\cite{diakonikolas2019sever} (see also, e.g., \cite{li2020gradient,shah2020choosing}).
In this setting, we assume that the tasks are again sampled uniformly at random, and that the data is sampled independently from a common, fixed distribution %on ${\cal{X}}}^{d}$ 
for all tasks.

As with the architecture  v0, %in Sec.~\ref{architecture:v0}, 
v1 uses any locality sensitive hash function $h$ and a context function $f$ that projects the input sketch down to the task descriptor, discarding other components. The primary modifications are that
\begin{compactenum}
\item v1 tracks whether tasks are ``well-trained,'' freezing their parameters when they reach a given accuracy for their level of the hierarchy, and
\item until a ``well-trained'' model is found, we train candidate models for the task in parallel that use the outputs of each possible subset of up to $d$ well-trained modules as auxiliary inputs.
\end{compactenum} 
We will maintain a global task level $L$, initially $0$. We define the target accuracy for level-$L$ tasks to be $\epsilon_L=(2dC)^L\epsilon$, where $C$ is the constant under the big-O for the guarantee provided by our robust training method; we let $M_L$ denote the sample complexity of robustly learning members of our class $\mathcal{M}$ to accuracy $1-Cd\epsilon_{L-1}$ with confidence $1-\delta$ when a $1-d\epsilon_{L-1}$-accurate model exists. We check if any tasks became well-trained in level $L-1$, and if so, for all tasks that are not yet well-trained, we initialize models for all combinations of up to $d-1$ other well-trained tasks for each such new task. Each model is of the form $\phi( \hat{\phi}_{i_1}(x),..., \hat{\phi}_{i_k}(x) )$, where $i_1,\ldots i_k$ ($k\leq d$) is the corresponding subset of well-trained tasks such that at least one has level $L-1$.
On each iteration, the arriving example is hashed to a bucket for task $t$. We track the number of examples that have arrived for $t$ thus far at this level. 
For the first $M'$ examples that arrive in a bucket, we pass the example to the training algorithms for each model for this task, which for example completes another step of SGD. 
Once $M_L$ examples have arrived, we count the fraction of the next  $O(\frac{d}{\epsilon_L}\log\frac{N}{\delta})$  examples that are classified correctly by each of the models.  We thus check if its empirical accuracy is guaranteed to be at least $1-\epsilon_L$ with high probability. If the empirical accuracy is sufficiently high, we mark the task as well-trained and use this model for the task, discarding the rest of the candidates.  Once all of the tasks are well-trained or have obtained $M_L +O(\frac{d}{\epsilon_L}\log\frac{N}{\delta})$ examples since the global level incremented to $L$, we increment the global level to $L+1$.

\begin{lemma}
Suppose at each step, a task $t$ is chosen uniformly random from the set of tasks $\{t_1, \dots, t_N\}$ in a DAG of height $\ell$, along with one random sample $(x, y)$ where ${\hat{\phi}}^{(t)}(x) = y$. Then after 
$\ell M N \ln(1/\delta)$ steps all the tasks will be well-trained 
{ (training error rate $\leq \epsilon_L$ for each module at level $L$) }
 w.h.p. We will call SGD $O( \ell M N^{(1+d)} \ln(1/\delta))$ times during the training. Here, $M$ is
%  the sample complexity of the atomic module, which is 
 the upper bound of all $M_L$. 
%  which depends on its corresponding function class $\mathcal{M}$. 
\end{lemma}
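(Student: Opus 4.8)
The plan is to run an induction on the height of the DAG to control correctness and error propagation, and to overlay a coupon-collector/Chernoff argument to count steps and SGD calls. First I would argue that the global level $L$ tracks DAG height: define the height of a node as the length of the longest directed path from a source into it, so a task $t$ with dependencies $t'_1,\dots,t'_d$ has $\mathrm{height}(t)=1+\max_i \mathrm{height}(t'_i)$. I would then prove by induction on $L$ that once phase $L$ (the period during which the global level equals $L$) ends, every task of height $\le L$ is well-trained to error $\le\epsilon_L$. The base case $L=0$ is the set of source tasks, which lie in $\mathcal{M}$ and receive uncorrupted labels, so $\mathcal{A}_{\mathcal{M}}$ learns them from their first $M_0$ examples.

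For the inductive step, a height-$L$ task $t$ has all dependencies of height $\le L-1$, with at least one of height exactly $L-1$; by the inductive hypothesis these are already frozen and well-trained at the start of phase $L$. Hence the candidate model that uses precisely the true dependency set $\{t'_1,\dots,t'_d\}$ (including the newly frozen level-$(L-1)$ module) is among the $O(N^{d})$ candidates instantiated for $t$ in phase $L$. The heart of the argument is to show this correct candidate attains accuracy $1-\epsilon_L$: replacing the true subfunctions by their frozen approximations changes an effective training label $\phi^{(t)}(\hat{\phi}^{(t'_1)}(x),\dots,\hat{\phi}^{(t'_d)}(x))$ from the clean label only when some dependency errs, which by a union bound occurs on at most a $d\epsilon_{L-1}$ fraction of examples. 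Invoking the assumed adversarial label-noise robustness of $\mathcal{A}_{\mathcal{M}}$ (an $\eps$-fraction of corrupted labels yields a $C\eps$-accurate model), the learned model has error at most $Cd\epsilon_{L-1}=\tfrac12\epsilon_L$ by the exact choice $\epsilon_L=(2dC)^L\epsilon$. The subsequent $O(\tfrac{d}{\epsilon_L}\log\tfrac{N}{\delta})$ verification examples then certify empirical accuracy $\ge 1-\epsilon_L$ w.h.p.\ by Hoeffding, so $t$ is marked well-trained and its correct model is retained.

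Next I would count steps. Since a phase only ends once every task has either become well-trained or accumulated $M_L+O(\tfrac{d}{\epsilon_L}\log\tfrac{N}{\delta})\le M$ examples (taking $M=\max_L M_L$, enlarged to absorb the verification count), the length of a phase is governed by how long uniform sampling over $N$ tasks takes to hit each task at least $M$ times. A Chernoff bound on each task's individual count shows that after $O(MN\ln(1/\delta))$ steps every task is hit at least $M$ times except with probability $N\,\delta^{\Omega(M)}$, and a union bound over tasks and the $\ell$ phases keeps the total failure probability small; summing over the $\ell$ phases yields the claimed $\ell M N\ln(1/\delta)$ steps. The SGD count then follows immediately: each arriving example is routed to a single bucket and triggers one SGD update on each of that bucket's $\binom{N}{\le d}=O(N^{d})$ candidate models, so the number of SGD calls is the step count times $O(N^{d})$, i.e.\ $O(\ell M N^{1+d}\ln(1/\delta))$.

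I expect the main obstacle to be the error-propagation bookkeeping in the inductive step rather than the counting: one must check that the geometric schedule $\epsilon_L=(2dC)^L\epsilon$ is \emph{exactly} self-consistent with the robustness constant $C$ and the degree $d$ (the cancellation $Cd\cdot(2dC)^{L-1}=\tfrac12(2dC)^L$ is what makes the induction close), and, more delicately, that the ``noise'' injected by imperfect frozen submodules genuinely falls under the adversarial corruption model $\mathcal{A}_{\mathcal{M}}$ is assumed to tolerate. In particular the $d\epsilon_{L-1}$ bound is on the \emph{fraction} of mislabeled training examples actually seen, so a further concentration argument (and its inclusion in the overall union bound) is needed to ensure the realized corruption fraction, not merely its expectation, stays within the tolerated range across all candidate models and phases.
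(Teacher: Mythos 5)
Your proposal is correct and follows essentially the same route as the paper's proof: induction on DAG height, a waiting-time/coupon-collector argument showing each task accumulates $M$ examples in $O(MN\ln(1/\delta))$ steps per level, and multiplying the step count by the $O(N^d)$ candidate models per bucket to get the SGD count. The only difference is that you spell out the error-propagation bookkeeping (the $\epsilon_L=(2dC)^L\epsilon$ schedule and the reduction of imperfect frozen submodules to adversarial label noise), which the paper treats as part of the architecture's setup in Section 4 rather than inside the lemma's proof; this added detail is a strict improvement in rigor, not a different argument.
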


In the above discussion we argued at an algorithmic level and ignored the specific architecture details of which buckets the ${N \choose d}$ candidate modules are trained and how eventually a single compound module gets programmed in the bucket $h(s_t)$. See Appendix~\ref{sec:appendixarchitecturev1} for those details.

\section{architecture v2: Tasks without precise explicit descriptions
% with complex dependencies
}\label{section:v1_5}

We follow a similar problem formulation as in Sec.~\ref{sec:hll}.  
% where we assumed that a clear task descriptor $s_t$ was given explicitly for each task.
However now clear task descriptors may not be provided externally, but may implicitly depend on the output of a previous module. 
% For example an image recognition task that first goes through a first level image type classification module.
% Consider an example where a dog looks at the scenario and checks if a person is pointing to a door or putting food in the dish or another dog is barking at it and based on this initial classification it gets routed to a separate downstream module for each scenario type.
% Another example would be 5 digit recognition: input 5 digit image, output the value; builds upon two modules: an image segmentor that produces 5 smaller images, a 1 digit recognizer that takes a smaller image and outputs one digit.
(detailed examples in Appendix.\ref{sec:appendixarchitecturev2} ).
The following definitions and assumption differ from Sec.~\ref{sec:hll}.

\begin{definition}[Tasks]
Let $\cal{U}$ be a space of all (potentially recursive) sketches that include the input and output of all modules. 
($\cal{U}$ can be polymorphic, that is, it can contain multiple different data types).  
Each task $t_i$ is a mapping  $:
\mathcal{U} \to \cal{U}$. 
The input distribution of $t_i$, $\mathcal{D}_{i}$, is supported on $\cal{U}$. 
% We are given training data in the form of $(s_{t_i}, {x}^{(t_i)}, {y}^{(t_i)})$ where ${\hat{\phi}}^{(t_i)}( {x}^{(t_i)} ) = {y}^{(t_i)}$, and $s_{t_i}$ is a vague description of the task $t_i$.
\end{definition}

\begin{definition}[Latent dependency DAG]
The latent {\em dependency} DAG is a DAG with nodes corresponding to tasks $t_1,..,t_N$ and edges indicating dependencies. Each task at an internal node depends on at most $d$ other tasks ($d$  may not be known to the learner, but is a small quantity). 
% Similar to ${\hat{\phi}}^{(t)} = {\phi}^{(t)} ({\hat{\phi}}^{(t'_1)}, \dots, {\hat{\phi}}^{(t'_d)})$ in Section.~\ref{architecture:v1}. 
% This latent DAG only defines the dependencies; 
\end{definition}
% The exact nature of dependency is defined as follows.
\begin{definition}[Latent circuit]
% however, for the exact details of the dependence, 
Given a dependency DAG,
for each task $t_i$ there is a latent circuit with gates (nodes) corresponding to the tasks $t'_i$ that it depends on. 
In this circuit for $t_i$, there are (potentially) multiple sinks (nodes with no outgoing edges). 
The output of these sinks will be the inputs to  some atomic module, which gives the output of $t_i$. 
There are multiple atomic internal modules for each $t_i$ and the circuit routes each example to one of these modules. 
Each $t_i$ is ``vague'' in the sense that there are multiple modules that can cater to an example of this task.
% of atomic modules from $\mathcal{M}$, 
% which take the final outputs of a sequential processing of the $t'_i$ tasks in a topological order defined by the mini-circuit as inputs. 
% In other words there are multiple sinks in the circuit. 
% There is final 
% The output of these sinks together form the output of the task $t_i$.  
\end{definition}

\begin{definition}[Hidden task description / Context]
% There is a latent instantiation of the $f$ function that is applied to the set of the outputs of the sinks of the of circuit. 
Given the circuit of each $t_i$,
there is a fixed (unknown) subset of the outputs of the circuit that give a context value that uniquely identifies $t_i$.
There exists a bound $g_i$ on the number of context values for $t_i$.
There is one atomic module for each context. 
% this instantiation of $f$ for this task $t_i$. 
We let $G$ be an integer such that $\sum_ig_i\leq G$. 
% Then $f$ routs to one of $g_i$ atomic modules based which of the $g_i$ context values that it receives from this input. 
% We will also assume the $f$ function works by picking some subset of fields in the input sketch (possibly recursively if the sketch has a recursive multilayer structure). 
% We will also assume that each task outputs a sketch with a constant number of fields.
\end{definition}

\begin{assumption}[No distribution shift]

For a latent dependency DAG and circuit for task $t_i$,
suppose $t_j$ is one of the nodes in the circuit of task $t_i$, and
let $ \mathbf{x}  \sim \mathcal{D}_{i}$ be the input to $t_i$.
% (During training of $t_i$, $x$ is given as external input in the training data). 
% Both $X$ and $Y_j$ can be input that contains multiple types of data. 
% At the end of the mini-DAG of $t_j$, 
% there is a module $m_j$ (atomic or compound) such that $m_j: x_j \to y_j$, 
% where $x_j$ is the output of the mini-DAG corresponding to $t_j$ ( there are gi seperate atomic moduels). 
For each $\mathbf{x}_{j}$ computed as an input to $t_j$ when the circuit is evaluated on $\bf{x}$, 
we assume $\mathbf{x}_{j}$ belongs to $\mathcal{D}_{j}$.  
% Note that $X$ is a universal set. $x_i$ can come from $U$. 

% We use similar formulation as Sec.\ref{v1:problem_formulation}.
% Assume that each task $t_i$ has an input distribution $D_i$. 
% Now the hierarchical tasks we want to solve have the following two properties:
% Firstly: if task $t_i$ is in the latent DAG, then its input coming from lower levels of the DAG must be whitin the corresponding task distribution of $t_i$. 
% Secondly: each $t_i$ has a set of stable context sketches within all its data set, and two distinct tasks have two different contexts that can map them into two different buckets.  

\end{assumption}

Given the problem set-up above, we present our main result for this section:

\begin{theorem}[Learning DAG using v2]\label{Architecture:v15_main_theorem}
Given a latent dependency DAG of tasks over $N$ nodes and height $\ell$, and a circuit per internal node in the DAG, 
there exists an architecture v2 that learns all these tasks (up to error rate $\epsilon_L$ as defined in Sec.\ref{sec:hll}) with at most $O(\ell G M 2^{O(d^2 +  d \log(N/d))})$ steps. 
% Where $G$ is the total number of distinct context values over all the nodes in the dag.
%Conditioning on having identifying the k tasks that a new task depends on, the new task can be learned in at most $O(2^{k^2}M)$ examples.
\end{theorem}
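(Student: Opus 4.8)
The plan is to prove Theorem~\ref{Architecture:v15_main_theorem} by induction on the height $\ell$ of the latent dependency DAG, processing the DAG one level at a time exactly as in the v1 Lemma of Section~\ref{sec:hll}, but augmenting the per-task training loop with a search over the hidden contexts. I would maintain a global level counter $L$ and reuse the level-indexed target accuracies $\epsilon_L=(2dC)^L\epsilon$ and sample sizes $M_L$ from Section~\ref{sec:hll}; at level $0$ the source tasks lie in $\mathcal{M}$ and are learned directly by the robust learner $\mathcal{A}_{\mathcal{M}}$, giving the base case. The induction hypothesis is that after the level-$(L-1)$ phase every atomic module whose context sits at level $\le L-1$ has been trained to accuracy $1-\epsilon_{L-1}$ and frozen.

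The genuinely new ingredient relative to v1 is that the task descriptor is not supplied: for a task $t_i$ the routing context is an \emph{unknown} subset of the outputs of the $\le d$ sub-modules it depends on, and $t_i$ is ``vague,'' sending different examples to different atomic modules (one per context value, $\sum_i g_i\le G$ of them in total). I would therefore, for each not-yet-trained task, instantiate a family of candidate compound modules indexed by (a) a choice of up to $d$ already-frozen sub-modules, of which there are $\binom{N}{\le d}=2^{O(d\log(N/d))}$, and (b) for each such choice, a candidate context, i.e.\ a subset of the selected sub-modules' output coordinates together with a hypothesized context value, of which there are $2^{O(d^2)}$. Each candidate is trained in parallel by $\mathcal{A}_{\mathcal{M}}$ on the examples it captures under its hypothesized routing, and tested on a held-out block of $O(\frac{d}{\epsilon_L}\log\frac{N\,2^{O(d^2+d\log(N/d))}}{\delta})$ examples; the $\log$ is inflated so as to union-bound over all candidates, ensuring that with probability $\ge 1-\delta$ no candidate is declared well-trained unless its true accuracy exceeds $1-\epsilon_L$.

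For correctness I would invoke the no-distribution-shift assumption: when $\mathbf{x}\sim\mathcal{D}_i$ and the correct sub-modules are selected, each sub-module $t_j$ receives an input drawn from its own distribution $\mathcal{D}_j$, so by the induction hypothesis its output is wrong on at most an $\epsilon_{L-1}$-fraction. Hence the correct candidate (correct sub-modules, correct context subset) sees labels corrupted on at most a $d\epsilon_{L-1}$-fraction, and by the assumed $O(\epsilon)$-robustness of $\mathcal{A}_{\mathcal{M}}$ it trains to accuracy $1-Cd\epsilon_{L-1}\ge 1-\epsilon_L$ and passes the test; any incorrect candidate either fails or, by the union bound above, still has true accuracy $\ge 1-\epsilon_L$, so adopting whichever candidate passes is safe. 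This is precisely the error-propagation bookkeeping of v1 and yields the claimed final accuracy $\epsilon_L$.

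Finally, for the step count: because no descriptor is given, an arriving example cannot be sent to a single known bucket, so the architecture must wait for each hypothesized context to become frequently accessed, and the access rate of any one correct context is diluted across the $G\cdot 2^{O(d^2+d\log(N/d))}$ hypotheses being explored. Applying the waiting-time argument of the v1 Lemma — the probability a fixed context fails to recur within $\tau$ steps is at most $e^{-\tau/(G\,2^{O(d^2+d\log(N/d))})}$ — a coupon-collector bound over the candidate buckets fills the $\le G$ relevant buckets with their $M$ examples within $O(GM\,2^{O(d^2+d\log(N/d))})$ steps of one level-phase, and summing over the $\ell$ sequential phases gives the stated $O(\ell GM\,2^{O(d^2+d\log(N/d))})$. \textbf{The main obstacle} I expect is the context-discovery step: proving that among the exponentially many candidate (sub-module, context-subset) pairs the genuinely correct routing is identified while all spurious candidates are rejected with high probability. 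This is exactly where the no-distribution-shift assumption and the inflated union bound (which drives the combinatorial factor into both the validation-block size and the step count) are essential; the error-propagation and waiting-time components are then routine adaptations of the v1 analysis.
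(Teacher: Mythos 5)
Your proposal is correct in substance and arrives at the theorem's bound through the same combinatorial accounting as the paper ($2^{O(d\log(N/d))}$ choices of sub-modules times $2^{O(d^2)}$ choices for the internal structure, $M$ examples per atomic module, a factor $G$ for the spread over atomic modules, and a factor $\ell$ for the sequential levels), but the mechanism you use for the structure search is genuinely different. The paper does \emph{not} enumerate and validate candidates in parallel; it implements every discrete choice (which sub-modules, which fields of their outputs form the context, how the circuit is wired) as a single probabilistic decision tree (Algorithm~\ref{algo:fdecisiontree}), where each example takes one random walk to a leaf, the correct leaf is reached with probability $p = 1/2^{O(d^2+d\log(N/d))}$, and Claim~\ref{claim:decision-tree-appendix} (high temperature for $O((1/p)\log(1/\delta))$ steps, then freeze) guarantees the reward-maximizing path hardens; the step count is then $O(M/p)$ per atomic module via the corollary in Appendix~\ref{sec:appendixarchitecturev2}. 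Your version is essentially the v1 strategy scaled up: instantiate all $(\text{sub-module set},\text{context})$ hypotheses explicitly, train each with $\mathcal{A}_{\mathcal{M}}$, and accept via a held-out test whose sample size is inflated to union-bound over all candidates. What your route buys is a cleaner statistical guarantee — the explicit union bound ensuring no spurious candidate is adopted is a point the paper's proof glosses over entirely — and a more direct reuse of the v1 error-propagation bookkeeping. What the paper's route buys is that the search itself lives inside the modular architecture (each decision-tree node is a hash-bucket module tracking rewards), which is why the same mechanism can be reused for the reinforcement-learning extension in Appendix~\ref{sec:architecture-v3}. Two minor mismatches worth noting: your $2^{O(d^2)}$ counts context subsets, whereas the paper's $3^{\binom{d}{2}}$ counts the possible wirings among the $d$ chosen sub-modules — a complete enumeration should really cover both, though either way the total stays within $2^{O(d^2)}$; and your ``dilution'' argument for why the exponential factor enters the \emph{step} count (rather than only the computation count) needs care, since if every arriving example were truly fed to all candidates in parallel the step count would not pick up that factor — the paper avoids this tension because in its scheme each example follows only one random path, so the $1/p$ waiting time is intrinsic.
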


\iffalse{
Our architecture will learn these tasks by  training possibly multiple atomic modules per task. 
In addition to these trained atomic modules there will be additional probabilistic decision nodes and a subgraph of such nodes may form a decision-tree or a decision subgraph that act as glue across the atomic module subsets for different task.  
At each decision node it may branch to the next level nodes probabilistically. Over time the probabilities may get trained so that the optimal branch is taken to maximize prediction reward (in a sense, these nodes ``realize" their reward from a downstream node where the actual prediction reward is obtained). 
-- {\color{gray} in this sense they can be viewed as states in an RL problem as the reward is ``delayed". (See section \ref{sec:architecture-v3} to see how RL fits into our architecture more broadly).} 
To get a sense of such probabilistic decision tree let us see how the $f$ function could use this in the following subsection.
In v1 the $f$ function was very simple: for the input sketch [t,X] it was drop X and keep t; for the sketch produced from running the $d$ child tasks [[t,x], [[M1-id, M1-output],..., [Md-id, Md-output]] it is [t,[M1-id,..,Md-id] Now $f$ can become more complicated -- it can be any subset of fields in the sketch.
}\fi

\subsection{Context function as a decision tree}\label{sec:v2.decision-tree}
In architecture v2 we use a more complicated context function $f$ to extract the stable context for each task. 
The context function can be implemented as a modular decision tree where each node is a separate module.
We are given a compound sketch $[S_1,..,S_k]$ where we assume the sketches are ordered by importance (e.g., based on
% {\color{gray}rewards/}
frequency: if there are $m$ hash buckets we will only track contexts that appear at least $O(1/m)$ fraction of the time, while others get ``timed out`` --  we assume $m \ge G 2^{O(d^2 +  d \log(N/d))}$). We  apply $f$ recursively and then over $f(S_1),..,f(S_k)$ from left to right in a decision tree where each branch either keeps or drops each item and stops or continues based on what obtains the highest rewards, tracked at each node (subtree) of the decision tree.
Thus the context function can be implemented as a recursive call to a decision tree { f}([$S_1,..,S_k$]) = {\bf DecisionTree}([f($S_1$),..,f($S_k$)]) (see
Alg.2
% Alg.\ref{sec:appendixfdecisiontreealgo}
)---each node of the decision tree will be implemented in a  separate module (hash bucket). 
% A decision tree can be implemented as a modular walk down the nodes of the tree from the root to the leaf. 
% The Sketch-to-context function $f$ can be implemented as a decision tree with the task-id as the first branch from the root. 
%It then goes over the fields of the sketch from left to right probabilistically keeping or dropping each field.
% (Figure~\ref{sec:appendixfdecisiontreealgo})

% \textbf{FOR ZIHAO: TRY CLEANING THE PSEUDO-CODE FORMAT}
\begin{algorithm}
% \label{sec:appendixfdecisiontreealgo}

\SetStartEndCondition{ }{}{}%
\SetKwProg{Fn}{def}{\string:}{}
\SetKwFunction{Range}{range}%%
\SetKw{KwTo}{in}\SetKwFor{For}{for}{\string:}{}%
\SetKwIF{If}{ElseIf}{Else}{if}{:}{elif}{else:}{}%
\SetKwFor{While}{while}{:}{fintq}%
\DontPrintSemicolon%
\AlgoDontDisplayBlockMarkers\SetAlgoNoEnd\LinesNumbered%
\SetAlgoSkip{}

{\bf DecisionTree}([$C_1,..C_k$]) = {\bf TreeWalk}([], [$C_1,..C_k$]) \;

\BlankLine
{\bf TreeWalk}($l$, [$C_i,..,C_k$]) = branch-based-on-argmax ( \;
% /* $l$ is the subset of fields from the sketch from the prefix processed so far, $C_i,..,C_k$ is the remaining part of the sketch. $l$ is used as the context for this current decision tree node and [$C_1,..,C_k$] is the input sketch. Each distinct value of $l$ is a separate decision tree node */
% \;

\Indp    reward(h([TREE-WALK, $l$.append($C_i$)])]) :{\bf TreeWalk}($l$.append($C_i$), [$C_{i+1},..,C_k$]) 
    /* keep $C_i$ in $l$ and proceed to next field */\; 
    
    reward(h([TREE-WALK, $l$])]): {\bf TreeWalk}(
    % l.append($C_i$), [$C_{i+1},..,C_k$])
    $l$, [$C_{i+1},..,C_k$])
    /* drop $C_i$ and proceed to next field */\;
    
    reward(h([TREE-WALK, $l$.append(END-WALK-SYMBOL)]) ) : 
    {\bf TreeWalk}($l$, [])
    /* exit the walk and output $l$ */\;
    
\Indm)
/* $l$ is the subset of fields from the sketch from the prefix processed so far, $C_i,..,C_k$ is the remaining part of the sketch. 
$l$ is used as the context for this current decision tree node and [$C_1,..,C_k$] is the input sketch. 
Each distinct value of $l$ is a separate decision tree node */
% \BlankLine
% {\bf f}([$S_1,..,S_k$]) = {\bf DecisionTree}([f($S_1$),..,f($S_k$)]) \;
\caption{
Decision Tree
% A decision tree can be implemented as a modular walk down the nodes of the tree from the root to the leaf. The Sketch-to-context function $f$ can be implemented as a decision tree with the task-id as the first branch from the root, It then goes over the fields of the sketch from left to right probabilistically keeping or dropping each field.
}
\label{algo:fdecisiontree}
\end{algorithm}

The branch statement is branching to a one of the three buckets:
h([TREE-WALK, l.append($C_i$)]), h([TREE-WALK, $l$])], or h([TREE-WALK, $l$.append(END-WALK-SYMBOL)]) based on the rewards%
% (where the initial rewards are all $0$, in which case we flip a coin and randomly branch to one of them with probability $1/3$)
; each bucket continues the decision tree walk with the rest of the entries in the list of contexts.
Note that during training the branch will be a probabilistic softmax rather than a deterministic argmax, with a temperature parameter $T$ that controls the exploration of the branches and decreases eventually to near $0$; thus the probability of each branch is proportional to $e^{- R_{branch}/T }$, where $R_{branch}$ is the reward of the branch. Initially all rewards are 0 and so all branching probabilities are all equal to $1/3$ (but there could be some other priors). Over time as the temperature is lowered, the probability concentrates gradually on the bucket with maximum reward. See Appendix \ref{sec:appendixarchitecturev2} for full details.

% {\color{blue} FOR RINA: trim the rest of this section.}

\begin{claim}\label{claim:decision-tree-appendix}
If $p$ is the initial probability of taking the optimal reward path to the leaf in the DecisionTree algorithm above, there is a schedule for the temperature in Algorithm~\ref{algo:fdecisiontree}, so that in $O(1/p \log 1/\delta)$ tree walk steps the modules at the nodes of the tree will converge so that the decision tree achieves optimal rewards with high probability $1-\delta$.
\end{claim}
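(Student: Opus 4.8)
The plan is to exhibit an explicit two-phase temperature schedule and reduce the claim to a standard geometric waiting-time estimate. Write $v_0 \to v_1 \to \cdots \to v_h$ for the optimal path from the root to the optimal leaf of the decision tree of Algorithm~\ref{algo:fdecisiontree}, and let $b_i^\star$ be the branch taken at node $v_i$ along this path. By definition, $p$ is the probability that a single tree walk follows all of $b_0^\star,\ldots,b_{h-1}^\star$ when every node branches uniformly (all recorded rewards equal, so the softmax at each node is uniform over its at-most-three children). The schedule I would use is: hold the temperature at its initial value for the first $n = \lceil (1/p)\ln(2/\delta)\rceil$ walks (the \emph{exploration} phase), then drop it to $0$, i.e.\ switch each node to a deterministic argmax over its recorded branch rewards (the \emph{exploitation} phase). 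The whole argument then splits into ``hit the optimal path once'' and ``lock it in.''

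\textbf{Exploration phase.} The first step is to argue that the per-walk probability of traversing the entire optimal path stays equal to $p$ until that path is first completed. Here I would use the structure of the reward: the optimal leaf is the unique source of positive reward (more generally, there is a reward gap $\gamma$ separating the optimal leaf from every deviation), and the reward recorded for a branch is the best downstream reward observed through it. Then, before any walk has reached the optimal leaf, \emph{every} recorded reward in the tree is still at its baseline value, so the softmax at every node is exactly uniform and a single walk follows the optimal path with probability exactly $p$, independent of the temperature. Consequently the number of walks until the optimal path is first traversed is dominated by a geometric random variable with success probability $p$, so $\Pr[\text{not traversed within } n \text{ walks}] \le (1-p)^n \le e^{-pn} \le \delta/2$ for the chosen $n$. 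Thus with probability at least $1-\delta/2$ some exploration walk follows $b_0^\star,\ldots,b_{h-1}^\star$ to the optimal leaf.

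\textbf{Exploitation phase.} Conditioned on that event, after the successful walk each node $v_i$ on the optimal path has its branch $b_i^\star$ recorded with the maximal reward $R^\star$, strictly exceeding the recorded reward of every sibling (which never reaches the optimal leaf and so stays below $R^\star$ by the gap $\gamma$). Hence when the temperature is set to $0$, the argmax at each $v_i$ selects $b_i^\star$, the tree deterministically walks the optimal path, and it attains the optimal reward. Folding the two phases together, after $n = O((1/p)\log(1/\delta))$ walks the node modules have converged to the optimal decisions with probability at least $1-\delta$, which is the claim. (If one insists on a strictly positive final temperature rather than an exact argmax, choosing $T \le R^\star/\ln(4h/\delta)$ makes each node pick $b_i^\star$ with probability $\ge 1-\delta/(2h)$, and a union bound over the $h$ nodes absorbs the remaining $\delta/2$.)

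\textbf{Main obstacle.} The delicate point is the non-decrease of the optimal-path probability below $p$ during exploration. This is immediate in the clean model where deviations earn only the baseline reward, since then all recorded rewards remain tied until the optimal leaf is first reached. In the general case a sibling branch can accrue positive (but sub-$R^\star$) reward before $b_i^\star$ has recorded $R^\star$ — the usual credit-assignment difficulty, since the reward of $b_i^\star$ is only realized once the \emph{entire} downstream suffix is also optimal — temporarily biasing a node away from the optimal branch. I would handle this by keeping the exploration temperature $T_{\mathrm{exp}} = \Theta(R^\star h)$ large enough that every branch stays within a factor $1+O(1/h)$ of uniform, so that the per-node distortions compound to at most a constant factor over the whole depth-$h$ path and the per-walk optimal-path probability remains $\ge cp$ for an absolute constant $c$; the waiting-time bound then degrades only by a constant and still reads $O((1/p)\log(1/\delta))$. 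Verifying that a single such $T_{\mathrm{exp}}$ works simultaneously at all nodes, while remaining compatible with the argmax switch that locks in once $R^\star$ is observed, is the step I expect to require the most care.
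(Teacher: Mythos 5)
Your proposal is correct and matches the paper's proof essentially step for step: the paper also uses a two-phase schedule that holds the temperature very high (it takes $T=\infty$, making every node exactly uniform) for $O((1/p)\log(1/\delta))$ walks and then freezes it to near zero so the softmax becomes an argmax that locks in the recorded optimal path. The ``main obstacle'' you flag is dispatched trivially in the paper by that choice of $T=\infty$ during exploration, which keeps the per-walk success probability exactly $p$ regardless of any rewards accrued on sibling branches.
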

\begin{proof}
We will keep a very high initial temperature $T$ (say $\infty$) for $O((1/p)\log (1/\delta))$ tree walk steps and then suddenly freeze it to near zero (which converts the softmax to a max) after these steps are finished. In these initial steps with high probability $1-\delta$ the optimal path to the leaf will have been visited at least $1/\delta$ times. Since each node is tracking the optimal rewards in its subtree, the recorded best path from root will have tracked at least this optimal reward. 
% In any task if the probability of picking the right sequence of decisions for perform the task is $p$ and it takes M examples to train the task, then the task can be learned in M/p steps assuming all previous task it is dependent on are already trained. Any future calls to the decision tree will now use this recorded best path.
\end{proof}

% {\color{gray} In section\ref{sec:appendixarchitecturev3} we show how this decision tree can be generalized to any RL state graph.}

\subsection{Incrementally learning a new Node (implicit task)}
In this subsection we provide an induction proof sketch for Theorem \ref{Architecture:v15_main_theorem}.  
In the previous subsection, we saw how the context function can be implemented as a probabilistic decision tree. 
Other functions of the routing module that involve making 
% a set of discrete choices 
subset-choosing decisions
(such as Lines \ref{Routing:choose_current_programs} \& \ref{Routing:combining_current_sketches} in Alg.~\ref{algo:informal_execution-loop}):  for example, selecting a subset of $d$ pre-existing modules as children of a new task in v1 can be done using a separate decision tree 
(e.g.\ Alg.~2) where 
% given a number of candidate modules for a task t [t,[M1-id,...,Mk-id]], 
one needs to select a subset of at most $d$. This again becomes very similar to the operation of the context function: we just need to input all matured modules of the previous layer to Alg.\ 2 and find the $\leq d$ child modules. 
In architecture v2 any subset-choosing decision in our architecture can be done by using Alg.\ 2. 

\iffalse{ 
\begin{lemma}[Inductive lemma]
In any new task $t$ with task descriptor $s_t$ that build upon previously existing tasks that have already been learned to perform well. 
By induction the probability of picking the right sequence of decisions for perform the new task is $p = 1/  2^{O(d^2 +  d \log(N/d))}$ (including which identifying which previous possibly implicit tasks it depends on and wiring them correctly with the right contexts) and it takes $M$ examples to train the task, then the task can be learned in $O(M 2^{O(d^2 +  d \log(N/d))}  )$ steps assuming we have already learned to perform all previous task it is dependent on. 
\end{lemma}
}\fi

% Proof sketch:
The learning algorithm follows the framework of Alg.\ref{algo:informal_execution-loop}. 
The circuit routing is also done by Alg.~\ref{algo:fdecisiontree}: we feed all the $O( {N \choose d} 3^{d \choose 2} )$ candidate edges of the circuit to Alg.~\ref{algo:fdecisiontree}, which finds the correct subset. 
The inductive guarantee that lower-level tasks are well-trained comes from the bottom-up online algorithm of v1. 
Modules %and decision-tree-nodes 
are marked as mature based on performance, and new modules are only built on top of mature previous nodes.
The probability of picking the right sequence of decisions for perform the new task is $p = 1/  2^{O(d^2 +  d \log(N/d))}$ (including which identifying which previous possibly implicit tasks it depends on and wiring them correctly with the right contexts) and it takes $M$ examples to train the task, then the task can be learned in $O(M 2^{O(d^2 +  d \log(N/d))}  )$ steps per atomic module (see Appendix \ref{sec:appendixarchitecturev2} for full details).

\section{Experiments}
We empirically examine two tasks for which learning benefits from using a modular architecture in this section. We compare an ``end to end'' learning approach to a modular learning approach which explores a DAG of previously learned tasks probabilistically.

\subsection{Learning intersections of halfspaces}
Learning intersections of halfspaces has been studied extensively, see for example \cite{klivans2009cryptographic}. We first describe the experiment setting. Let $K$ be the number of hyperplanes, $D$ feature space dimension, we generate the following data:  hyperplane coefficients $w_k \in \mathbb{R}^{D}$, $k=1, 2, ..., K$ whose components are independent and follow standard normal distribution; 2.feature $x_i \in \mathbb{R}^{D}$, $i = 1, 2, ..., N$, independently chosen uniformly from $[-1, 1]$. And we have $y_i = \prod_{k \in [K]} \mathrm{sgn}(w_k \cdot x_i)$, where $\mathrm{sgn}$ is the sign function.

While learning a single halfspace $K=1$ is easily solved by a two-layer network with ReLU activation, it becomes much more difficult for neural networks to learn when $K$ grows. This can be observed in Figure-\ref{fig:halfspaces}, where a 3-layer neural network is used to learn the intersections.

For a modular approach, we follow Algorithm-\ref{alg:halfspace}, which is a simplified version of Algorithm--\ref{algo:informal_execution-loop} and it probabilistically route to sub-modules. The input data are batches of triplets $\{(k, x_i^k, y_i^k)\}_{i \in [B]}$, where $B$ is the batch size, $k \in [K + 1]$ is the task id, $x^1_i = ... = x^{K+1}_i = x_i$,  $y_i^k = \mathrm{sgn}(w_k \cdot x_i)$ for $k \in [K]$ and $y_i^{K+1} = \prod_{k \in [K]} y_i^k$, and we maintain a task list $T$ and module list $\Phi$.

%Initially $T = [K+1]$ and $M = \emptyset$. Upon encountering a batch of input samples, with probability $p_{\mathrm{atom}}$, the algorithm picks $k \in T$ randomly and trains a module $f_k$ that maps $x_i^k$ to $y_i^k$. If the module has sufficiently high accuracy, the module is added to $M$ and $k$ is removed from $T$. If the $K+1$'s module has sufficiently high accuracy, we stop and output this module. With probability $1 - p_{\mathrm{atom}}$, the algorithm picks a $k \in T$ and does the following: for each $f \in M$, it picks $f$ with probability $p_{\mathrm{transform}}$, set $x_i^k \leftarrow \mathrm{concat}(x_i^k, f(x^i_k)$, trains a module $f_k$ that maps $x_i^k$ to $y_i^k$, add the module and remove the task id in a similar fashion as above. The process repeats a few epochs on the input data. 

The results are plotted in Figure-\ref{fig:halfspaces}, with $K=1, 2, ..., 10$,  $D = 100$,  $p_{\mathrm{atomic}} = 0.5$ and $ p_{\mathrm{compound}} = 0.75$. For the modular approach, all the modules are $3$ layer fully-connected network of the same size and are trained for $10$ epochs. For the end-to-end approach, a single $3$ layer fully-connected network with $10$x hidden units of the modular models and is trained until convergence.  We observe for large $K$ ($K \geq 7$ in the figure), the end-to-end approach fails at the task while the modular approach continues to have good performance. See appendix for more details.

%%\begin{figure}
%%\includegraphics[width=0.8\textwidth]{}
%%\caption{Intersection of halfspaces: end to end approach vs modular approach.}
%%\label{fig:halfspaces}
%%\end{figure}

%
\begin{figure}
    \vspace{-3mm}
    \centering
    
    \begin{minipage}[b]{.45\textwidth}
    \begin{algorithm}[H] 
    \caption{Probabilistic routing algorithm}\label{alg:halfspace}
 	\begin{algorithmic}[1]
 	    \Statex \textbf{Input}: Batches of $\{(k, x_i^k, y_i^k)\}_{i \in [B]}$, $k$ the same within a batch.
	    \Statex \textbf{Constants}: $p_{\mathrm{atomic}}$, $p_{\mathrm{compound}}$.
	    \Statex \textbf{Initialization}:  Set of modules $\Phi = \emptyset$, set of task ids $T = [K+1]$
	    \Statex Repeat the following steps:
 	     \Statex \hspace{2pt} 1. w.p. $p_{\mathrm{atomic}}$,  train an atomic module $\phi_k$ that maps $x_i^k$ to $y_i^k$ (note we keep a separate copy of $\phi_k$ for each different DAG structure based on iteration choices in step 2 from the original input to $x_i^k$). If training succeeds, set $\hat{\phi}_k$ to be the DAG upto $\phi_k$ and add it to $\Phi$.
 	    \Statex  \hspace{2pt} 2. w.p. $1 - p_{\mathrm{atomic}}$, for each $\hat{\phi} \in \Phi$, w.p. $p_{\mathrm{compound}}$, set $x_i^k \leftarrow \mathrm{concat}(x_i^k, \phi(x_i^k))$.
% 	    \State %%Hash input: $h_i =  \mathrm{LSH}_i(x)$ %for $i=1,2,..,k$
% 	    %\State Hash the query vector $x$ is  into $k$ bucket ids using $k$ independent LSH functions $h_1, ..., h_k$.
% 	    \State %%Retrieve from memory: $z_{h_i(x)}$

 	    %\State %%Compute sketch: $y = \sum_i R_i z_{h_i(x)}$
 	    \Statex \textbf{Return} $M$.
 	\end{algorithmic}
    \end{algorithm}
    \end{minipage}\hfill
    \includegraphics[width=.54\textwidth]{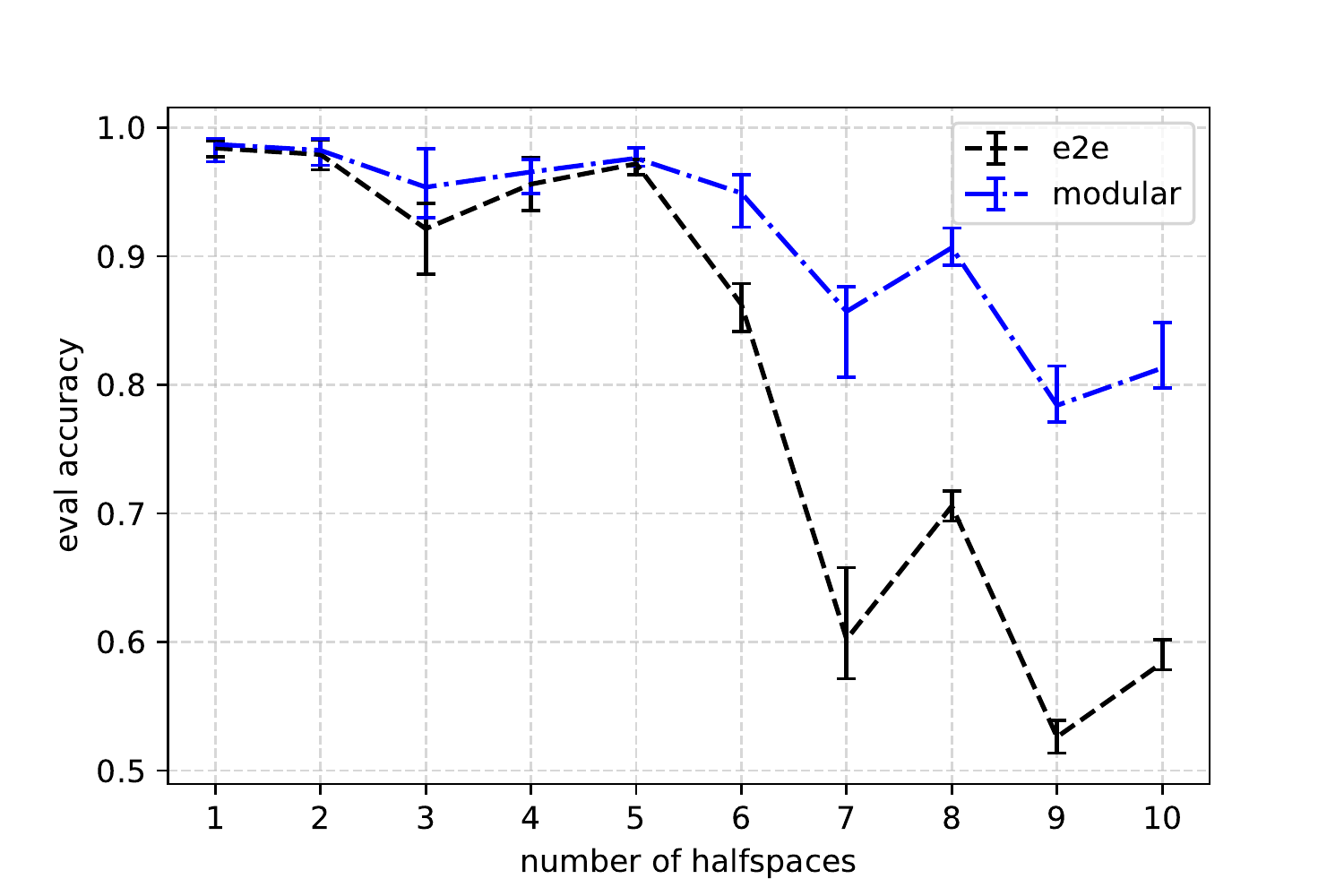}
    \caption{Intersection of halfspaces \textbf{Left:} Pseudocodes for the modular approach (``w.p." is abbreviation for ``with probability").
    \textbf{Right:} Modular approach continuous to have good performance while end-to-end approach fails to learn for $K \geq 7$.}
    \label{fig:halfspaces}
    \vspace{-3mm}
\end{figure}

\subsection{Five digit recognition}
In this experiment, we compare the ``end to end" approach and a modular approach for the task of recognizing $5$-digit numbers, where the input is an image that contains $5$ digits from left to right, and the expected output is the number that is formed from concatenating the $5$ digits. This task is described in Example 2 of Appendix F. Note in this task, we have $3$ sub-tasks: task-1 is single digit recognition, task-2 is image segmentation, and task-3 is $5$ digit recognition. 

For the ``end to end" approach, we train a convolutional neural network to predict the 5 digit number (see appendix for more details). For the modular approach, the input data are batches of triplets $\{(k, x_i^{(k)}, y_i^{(k)})\}_{i \in [B]}$, where $B$ is the batch size, $k \in [3]$, $x_i^{(1)} = x_i^{(2)} = x_i^{(3)} =x_i$ is the image. $y_i^{(1)}$ is the single digit label, $y_i^{(2)}$ is $5$ segmentation coordinate pairs (upper-left and lower-right coordinates), and $y_i^{(3)}$ is the 5 digit number label. We also maintain a task list $T$ and module list $\Phi$.  For training an atomic module in Algorithm-\ref{alg:halfspace}, we only allow the module to take inputs of the same modality (i.e. either only image or only digits, discarding the others).

We construct the training and test datasets by concatenating $5$ images from the MNIST dataset. The results of the two approaches are compared in Table-\ref{tab:digits}. We observe the modular approach achieves higher accuracy and has less variance with the same training steps.

\begin{table}
    \centering
    \vspace{2mm}
    \begin{tabular}{@{}ccccc@{}}
    \hline
    method & accuracy & steps (1-digit) &  steps (segmentation) & steps (5-digits) \\
    \hline
    end-to-end   & $74.5 \pm 4.5 $  \%  & NA & NA & $18760$  \\
    modular   & $92.0 \pm 0.5$\% & 2560 & 640 & $18760 \pm 9380$   \\
    \hline
    \end{tabular}
    \vspace{-1mm}
    \caption{Comparison of end-to-end and modular algorithms for 5-digits recognition: accuracy and number of training steps for different tasks to succeed. Note here each step is processing one batch with a batch size of $128$, and we consider a task successful if the accuracy is above $90\%$.}
    \label{tab:digits}
    \vspace{-5mm}
\end{table}

\section{Discussion and Future Work}
We saw a uniform continual learning architecture that learns tasks hierarchically based on sketches, LSH, and modules. We also show how a knowledge graph is formed across hash buckets as nodes and formally show its utility (for e.g. for finding common friends in a social network) in Appendix~\ref{app:knowledgegraph}. Extensions of the decision tree learning to solve reinforcement learning tasks are shown in Appendix~\ref{sec:architecture-v3}. Although our inputs were labeled with a unique task description vector, we note our architecture works even with noisy but well-separated contexts (see Appendix~\ref{sec:appendixarchitecturev0}). A weakness of our work is that we have ignored how logic or language could be handled in this architecture -- while perhaps there could be separate compound modules for those kinds of tasks, we leave that topic open.

\section*{Acknowledgements}
ZD and BJ were supported by NSF grants CCF-1718380 and IIS-1908287.

% \bibliography{references}
% \bibliographystyle{iclr2022_conference}

\newpage

\appendix
% \section{Appendix}
\section{Locality Senstive Hashing}\label{appendix:lsh}
{\bf Locality Sensitive Hashing} (LSH) is a popular variant of hashing that tends to hash similar objects to the same buckets. Let us look at an LSH that maps an input to one (or a few locations) out of the  $m$ hash buckets. It is well-known that LSH provably provides sub-linear query time and sub-quadratic space complexity for approximate nearest neighbor search. More specifically, fix $0 < r_1 < r_2$, where $r_1$ is the threshold for nearby points, and $r_2$ is the threshold for far-away points, i.e. for $x, y \in \mathbb{R}^d$, we say $x$ and $y$ are nearby if $|x - y|_2 \leq r_1$ and they are far-away if $|x - y|_2 \geq r_2$, where $|x|_2$ is the $2$-norm of the vector $x$. Let $c = r_2/r_1 > 0$ denote the distance gap as a ratio. Let $p_1 \le Pr(h(x) = h(y): |x - y|_2 \leq r_1)$ and $p_2 \ge Pr(h(x) = h(y): |x - y|_2 \geq r_2)$ denote lower and upper bounds on the collision probability of nearby points and far-away points, respectively. Define $\rho = \frac{\log (1/p_1)}{\log (1/p_2)}$. Then LSH-based nearest neighbor search has a $O(n^{\rho})$  query time and $O(n^{1 + \rho})$ space complexity for a $c$ approximate nearest neighbor query \cite{andoni2015practical, andoni2014beyond, andoni2015optimal}.

One example LSH function uses random hyperplane based LSH \cite{charikar2002similarity} to map a vector into a hash bucket, although other types of hashing such could be used as well -- for example min-hash \cite{broder1997resemblance} could be used on a set or a tuple object to map that object to a discrete hash bucket. 

\section{Architecture}\label{appendix:architecture}

\subsection{Sketches Review}\label{appendix:sketching}

Our architecture relies heavily on the properties of the sketches introduced in \cite{ghazi2019recursive}. In this section we briefly describe some of the key properties of these sketches; the interested reader is referred to \cite{ghazi2019recursive,panigrahy2019does,sketchmem2021} for the full details.

A sketch represents any event, an input or an output at a module. It may represent an ``object'' that may recursively contain a (unordered)set or a (ordered)tuple of sketches. 

Any input or output of a module can be represented by a {\em sketch}. 
For example an input image has a sketch chat can be thought of as a tuple [IMAGE, \la bit-map-sketch\ra ]. An output by an image recognition module that finds a person in the image can be represented as [PERSON, [\la person-sketch\ra , \la position-in-image-sketch\ra ]); here IMAGE, PERSON can be thought of as a ``labels''. However the sketch may be more complicated like an object for example the \la person-sketch \ra could in turn be set of such pairs 
\{[NAME,\la name-sketch \ra], [FACIAL-FEATURES,\la facial-sketch \ra], [POSTURE,\la posture-sketch \ra]\}. Thus a sketch could be represented as a tree. Further there may be compound sketches that consist of a set of sketches. For example an image consisting of multiple people could be a set \{\la person-1-sketch \ra, \la person-2-sketch \ra,..,\la person-$k$-sketch \ra\}.

%[Figure of a tree/dag that shows logical view of a sketch; see also commented text in tex file].

{\bf Sketches can be used to backtrack the chain of modules that produced it}: An output sketch may also recursively point to the input sketch and the modules it came from, e.g. 
recursive-sketch(output) = \{[OUTPUT-SKETCH,\la output-sketch \ra], [MODULE-ID,\la module-id \ra], [RECURSIVE-INPUT-SKETCH, \la recursive-input-sketch \ra]\}. By keeping recursive-input-sketch to some depth, we can find find the entire tree or DAG of modules that produced this output sketch. A method for representing such structured sketches as a dense vector using subspace embeddings (each object sketch is embedded into a random subspace for that type of object) is provided in \cite{ghazi2019recursive}. There is way to sketch the outputs of a modular network so that similar finding lead to similar sketches; the main idea is that similar input phenomena will cause almost the same set of modules to fire with similar output embeddings. See \cite[Theorem2]{ghazi2019recursive}.

{\bf Types are encoded in subspaces}: An object of a particular ``type'' is represented by a sketch that embeds it in a specific random subspace that uniquely determines the type.
A set of ``type, value" pairs can be sketched by packing each type in a separate subspace by using random matrices (the actual distribution is more complicated to prove stronger robustness guarantees see \cite[Theorem 1]{ghazi2019recursive}).

{\bf Dense representations of sketches}: As described above, an object containing sub-objects of types $T_1, T_2, T_3$ can be represented by the set 
$s = \{[T_1,s_1], [T_2,s_2], [T_3,s_3]\}$ where $s_1, s_2, s_3$ are sketches of the sub-objects. In \cite{ghazi2019recursive} a method is given for converting this into a dense representation $r(s)$, which we summarize here.

A dense representation $r(s)$ of this can be obtained recursively as
$r(s) = R_{T1} r(s_1) + R_{T2} r(s_2) + R_{T3} r(s_3) $ where the $R's$ are random matrices that depend on the type $T_i$ with output dimension large enough to recover the sub-sketches. The $R$ is drawn from a distribution given by $(I + R')/2$ where $R'$ has mean $0$ (the exact distribution can be found in \cite{ghazi2019recursive}). This ensures that $r(s)$ has some similarity to $r(s_1), r(s_2), r(s_3)$.
%A set of sketches.
Thus a compound sketch has some ``similarity'' to each of its components. A sketch is recursive in the sense that it is a compound sketch of all its components/subtrees -- lower level subtrees get exponentially decreasing weight (see \cite[Theorem 1]{ghazi2019recursive}). Any component sketch with high enough weight can be recovered. Further those with weights below a threshold may be retrieved from buckets in the hash table (see section~\ref{sec:storinglargeobjects}). Also, from the compound sketch of a large number of sketches the average value of the component-sketches can be recovered (see Claim 5 in \cite{ghazi2019recursive}). 
A tuple $[s_1, s_2, s_3]$ can simply be thought of as the set $\{ [1,s_1], [2,s_2],[3,s_3] \}$.

A set of sketches of the same type can be sketched by using a local LSH table. The set of sketches landing at each bucket is sketched recursively. This gives an array of sketches. The sketch of this array is the final sketch.

Note if the if the set very large, we will not be able to recover the sketch of each of its members but only get a ``average" or summary of all the sketches -- however if a member has high enough relative weight (see \cite[Section 3.3]{ghazi2019recursive}) it can be recovered.

\subsubsection{Storing large objects}\label{sec:storinglargeobjects}

Large objects such as long strings can be stored as compound-sketch that is sketched recursively into smaller and smaller sequence of sketches. Memory of a sequence of events can be stored as sketches in buckets that link to each other that can be retrieved later when it needs to be replayed. A string of length $n$ can be sent to a CNN that uses patches of size $s$ with stride of $s/2$, producing $2n/s$ patches and their corresponding sketches. These sketches may be stored in a hash table. These $2n/s$ patch sketches could further be sketched in the same way till we get a single compound sketch at the top. This ``tree'' of sketches can be implicitly stored in a hash table. The final top sketch serves as a summary of the entire string -- it can be used to find substrings that have very high frequency -- for example if a patch occurs a large fraction of times that can be inferred from the top level sketch even without looking at the rest of the sketches in the tree. The sketch of a large object can implicitly be used as a {\em pointer} to that object.

Programs can also be viewed as strings of instructions or strings of matrices. By using the above method large programs can be stored and accessed in the hash memory.

\subsection{Architecture Principles}\label{sec:appendixarchitectureprinciples}

The following generalizes the architecture principles and algorithm~\ref{algo:informal_execution-loop} to include knowledge graph edges that keep track of frequent associations (see Appendix \ref{app:knowledgegraph} for applications of such associations and Appendix \ref{sec:appendixarchitecturev2} for RL applications)

\begin{enumerate}
\item Sketches.
\begin{itemize}
% \iffalse
% [This is repeating previous material]
% \item Every phenomenon/event is sketched.
% \item Frequently occurring sequences of sketches are sketched into a compound sketch. Even programs / program-execution-traces can be sketched in addition to external events like inputs and outputs. (This creates shortcuts at a bucket (see below) and acts as a ``cache'' of frequent pathways). Remember the sequence by both frequency and rewards. 
% \fi
\item All phenomena (inputs, outputs, commonly co-occurring events, etc) are represented as sketches.
\item There is a {\em function from sketch to context} $f: S\rightarrow C$ that gives a coarse grained version of the sketch. This is obtained by looking at the fields in the sketch $S$ that are high level labels and dropping fine details with high variance such as attribute values; it essentially extracts the ``high-level bits'' in the sketch $S$.
\end{itemize}
\item Hashtable indexed by context that is robust to noise.
\begin{itemize}
\item The hash function $h: C\rightarrow \text{[hash-bucket]}$ is ``locality sensitive'' in the sense that similar contexts are hashed to the same bucket with high probability.
\item Each hash bucket may contain a trainable program $P$, and summary statistics as described in Figure~\ref{fig:hashbucket}. We don't start to train $P$ until the hash bucket has been visited a sufficient number of times.
(Note: A program may not have to be an explicit interpretable program but could just be an ``embedding'' that represents (or modifies) a neural network.)\end{itemize}
\item Routing-module (OS).
\begin{itemize}
\item Given a set of sketches from the previous iteration, the routing module identifies the top ones, applies the $f$ function followed by $h$ to route them to their corresponding buckets. Before applying $f$ it may use {\em attention} to combine certain subsets of sketches into a compound sketch. 

%\item Given a sketch $S$, the sketch-to-context function $f$ from (1) chooses some subset of the components of $S$ to keep; the hash of the resulting context $C = f(S)$ determines which bucket is visited next.[[This seems same as (1). why not merge it with that.]]
%\item If the input consists of multiple sketches, an attention module combines the sketches to get a (weighted) combined tuple sketch before routing.
\end{itemize}
\item Knowledge graph of edges.
\begin{itemize}
%\item If the attention/routing module sees sketch $S_1$ is frequently followed by $S_2$ and that this yields high reward, then the knowledge graph stores edges from the hash buckets of both  $S_1$ and  $S_2$ to the bucket of the compound sketch $[S_1,S_2]$. 
\item Information about frequently co-occurring sketches (e.g. if sketch $S_1$ is frequently followed by sketch $S_2$) is stored as edges connecting hash table buckets that form a knowledge graph.

% \iffalse
% \begin{itemize}
% \item Variant of above: Treat frequent co-occurring sketches as first class objects that are sketched. This sketch ``refers'' to its endpoint sketches. The out-going edges in a bucket are simply based on ``reference'' to/from other sketches. (Don't need $k$-center version any more as clustering is done by the $f$, $h$ functions).
% \end{itemize}
% \fi

\item When the routing module visits a bucket, in addition to the program $P$, it can also extract the sketches on the outgoing edges at that hash bucket. One could also view the program $P$ as the ``default edge'' at that bucket.
\end{itemize}
\end{enumerate}

\begin{algorithm}[t]
\SetStartEndCondition{ }{}{}%
\SetKwProg{Fn}{def}{\string:}{}
\SetKwFunction{Range}{range}%%
\SetKw{KwTo}{in}\SetKwFor{For}{for}{\string:}{}%
\SetKwIF{If}{ElseIf}{Else}{if}{:}{elif}{else:}{}%
\SetKwFor{While}{while}{:}{fintq}%
\DontPrintSemicolon%
\AlgoDontDisplayBlockMarkers\SetAlgoNoEnd\LinesNumbered%
\SetAlgoSkip{}

\KwIn{input sketch $T$ (this sketch may contain a desired output for training)} 
{\tt current-sketches} $\leftarrow \{T\}$ \;
\While{{\tt current-sketches} is not empty}{
{\tt current-programs} $\leftarrow$ $\emptyset$\;
\ForEach{sketch $S$ in {\tt current-sketches}\, }{
extract context $C = f(S)$ \;

update access-frequency-count of bucket $h(C)$ \;

Store $S$ as an outgoing edge of $h(C)$, if there are too many sketches store a compound sketch. Store pointers to co-referencing/co-occurring sketches buckets. \;

\uIf{bucket $h(C)$ has a program $P$}{append $(S,P)$ to {\tt current-programs}
}
\uElse{
\uIf{bucket $h(C)$ is frequently accessed}
    {initialize program at $h(C)$ with program from nearest non-empty context bucket and mark it for training}
    
fetch programs from nearby trained buckets (with similar contexts), append those $(S,P_i)$ to {\tt current-programs}}}

Routing module chooses some subset of {\tt current-programs}, runs each program on its associated sketch, appends outputs to {\tt current-sketches}\;

Append sketches on outgoing edges of accessed buckets to {\tt current-sketches}\;

\uIf{any of the programs are marked for training}
    {routing module picks one or some of them and trains them, and may choose to stop execution loop}

\uIf{any of the sketches is of (a special) type OUTPUT or ACTION sketch}
    {routing module picks one such, outputs that sketch or performs that action, and may choose to stop execution loop}

\uIf{any of the sketches is of type REWARD sketch (say for correct prediction or action)}
    {routing module updates the reward for this bucket and propagates those rewards to prior buckets}    

Routing module uses attention to combine elements of {\tt current-sketches} into at most $k$ compound sketches $S_1, \dots, S_k$ (may produce 0 sketches)\;

{\tt current-sketches} $\leftarrow \{S_1, \dots, S_k\}$\;
}
\caption{Informal presentation of the main execution loop 
%to show how the architecture components fit together; details 
%such as how the routing module chooses programs, what counts as a ``nearby'' bucket, when to terminate the loop, etc are discussed in later sections.
}
\label{algo:execution-loop-full}
\end{algorithm}

The system works in a continuous loop where sketches are coming in from the environment and also from previous iterations; the main structure of the loop (recall Figure~\ref{fig:architecture-and-sketch}) is:

\adjustbox{scale=0.8,center}{
\begin{tikzcd}[every matrix/.append style={nodes={font=\small}}]
Phenomena \rar{input} & sketch \rar{f} & context \rar{h}  & bucket \rar  & program  \arrow[bend right=30, swap]{lll}{produces} \rar{{\tiny output}} & Phenomena
\end{tikzcd}
}

Our architecture can be viewed as a variant of the Transformer architecture \cite{radford2021learning, shazeer2017outrageously}, particularly the Switch Transformer \cite{fedus2021switch} in conjunction with the idea of Neural Memory \cite{sketchmem2021}. Instead of having a single feedforward layer, the Switch Transformer has an array of feedforward layers that an input can be routed to at each layer. Neural Memory on the other hand is a large table of values, and one or a few locations of the memory can be accessed at each layer of a deep network. In a sense the Switch Transfomer can be viewed as having a memory of possible feedforward layers (although they use very few) to read from. It is viewing the memory as holding ``parts of a deep network'' as opposed to data, although this difference between program and data is artificial: for example, embedding table entries can be viewed as ``data'' but are also used to alter the computation of the rest of the network, and in this sense act as a ``program modifier''.

New modules (or concepts) are formed simply by instantiating a new hash bucket whenever a new frequently-occurring context arises, i.e. whenever several sketches hash to the same place; the context can be viewed as a function-pointer and the sketch can be viewed as arguments for a call to that function. Frequent subsets of sketches may be combined based on {\em attention} to produce {\em compound} sketches. Finally we include pointers among sketches based on co-occurrence and co-reference in the sketches themselves. These pointers form a knowledge graph: for example if the inputs are images of pairs of people where the pairs are drawn from some latent social network, then assuming sufficient sampling of the network, this network will arise as a subgraph of the graph given by these pointers. The main loop allows these pointers to be dereferenced by passing them through the memory table, so they indeed serve the intended purpose.

Thus external inputs and internal events arrive as sketches that are converted into a coarser representation using the $f$ function that gets mapped to a bucket using hash function $h$; the program at that bucket is executed to produce an output-sketch that is fed back into the system and may also produce external outputs. 
This basic loop is executed by the routing-module which can be thought of as the operating-system of the architecture. In each iteration the routing-module gathers the sketches output from the modules executed in the previous rounds, along with the input sketches from the environment and retains the top $k$ based on some notion of weight/importance (this could be a combination of frequency and rewards, which is tracked in the buckets corresponding to the sketches). It may also use attention to combine certain subsets of these. These are then routed using the $f$ followed by the $h$ function to their respective modules in the hash buckets. The programs in these buckets execute the corresponding sketches producing new sketches (these new sketches may also produce outputs or actions into the environment) that are sent back into the current collection of sketches. Each bucket also tracks other co-occurring/co-referenced sketches which may also be retrieved when that bucket is visited.
In Algorithm \ref{algo:execution-loop-full} we have under-specified and left out how the routing module makes the discrete choices. We will show a simple method is to implement it as a decision tree that makes probabilistic choices that eventually converge to an optimal set of deterministic choices (see DecisionTree Algorithm.\ref{algo:fdecisiontree}). 

{\bf Hash function $h$}:
The hash function $h$ is an LSH function, so similar contexts are hashed to the same bucket. When the model encounters a sketch whose context is unfamiliar (i.e. is sufficiently far away from any existing contexts) a new hash bucket is instantiated for that context. Each bucket contains (see Figure ~\ref{fig:hashbucket}):

% \begin{wrapfigure}{r}{0.40\textwidth}
% \vspace{-6mm}
%   \centering
%   \begin{minipage}{0.35\textwidth}
%     \centering
\begin{figure}
\centering
\includegraphics[trim = 150 80 260 70 , clip, height =3cm]{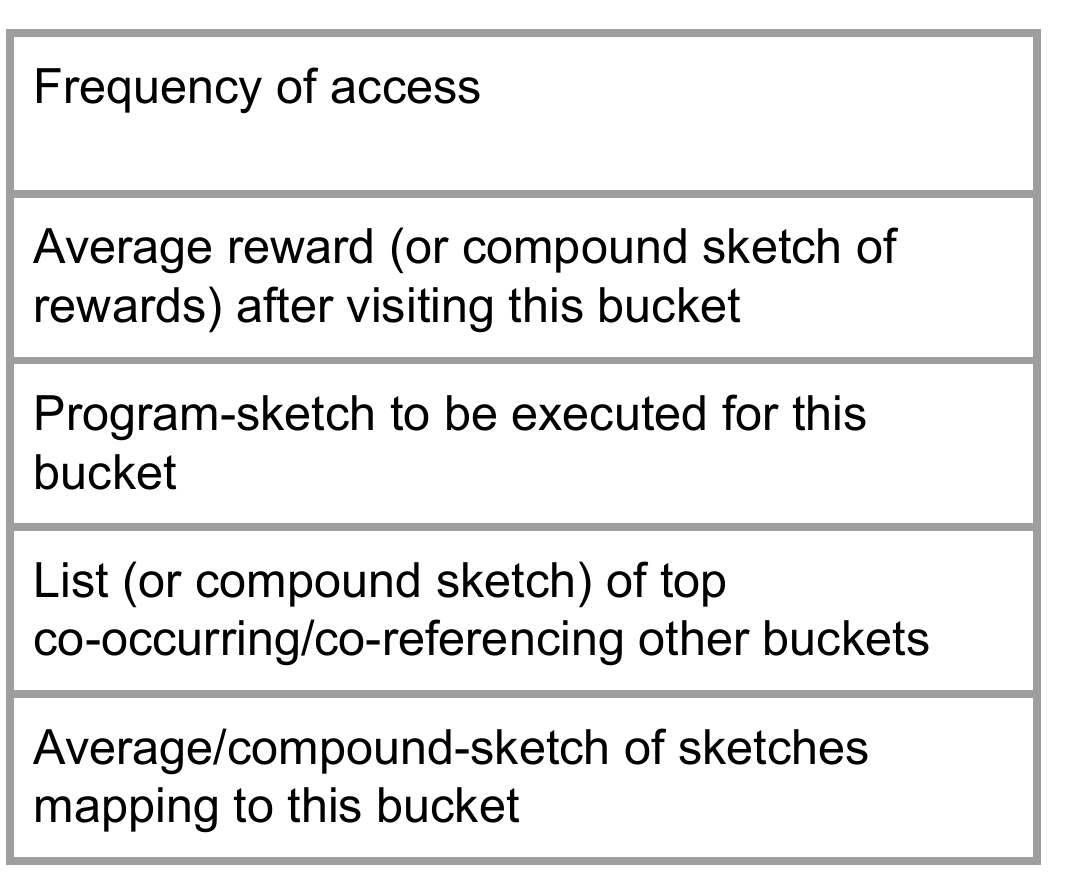}
\caption{Fields in a hash bucket, including the program-sketch (if it exists), pointers to related buckets, and summary statistics such as frequency of access.
The average or compound sketch of all the sketches mapping to this bucket can be used to identify the different frequent pathways that led to this bucket. The average or compound sketch of all the rewards-sketches can be used to identify the different frequent pathways from this bucket that lead to rewards and their reward values. 
}
\label{fig:hashbucket}
\end{figure}
%  \end{minipage}
%  \vspace{-5mm}
% \end{wrapfigure}

%{\color{blue}[Show these fields in a figure: Rina]}
\begin{enumerate}
\item The program $P$ which is a learned a function, e.g. a trainable neural net, for that bucket.
\item summary statistics, e.g. frequency counter, reward/quality score.
\item summary of sketches that point to this bucket. The summary could be a compound sketch of things mapping here. Note that the compound sketch contains information about the average value (see Claim 5 in \cite{ghazi2019recursive}).
\item local information about the knowledge graph, e.g. outgoing edges from that bucket.
\end{enumerate}

{\bf Handling hash bucket collisions from $h$}:
To handle collisions, if instead of using one hash function $h$ if we use $r$ of them (for some integer $r \geq 1$) then with high probability for each sketch there will be at least one distinct bucket (in fact at least a constant fraction will be distinct) as long as its context is far enough from that of all others. In case many similar contexts hash to the same bucket, the that bucket will have a high frequency count. In that case when the routing module encounters such a bucket, it could use additional LSH bits to rehash such sketches to new buckets which is likely to put them into different buckets -- this could be repeated until we get to buckets that have bounded frequency counts.

{\bf Task}: A task refers to logically coherent subset of training examples from the external world with a specific processing to be applied to each of those examples to produce a desired output for each of them. For example in the simplest case each example may come with a specific task-description sketch or identifier that specifies the task. However the task description may not be explicit in the input, but may be identified after routing the input through a sub-dag of modules. Each task maps to a unique context which is determined by applying the $f$ function on the sketch at some level.

{\bf Learning the program in a bucket}: Each bucket contains a trainable function from a certain class (such as neural networks of a fixed small depth). More generally it could represent a vectorized ``embedding'' that modifies another global network or produces another network. This function may depend on the output of other buckets as prerequisite inputs. 

{\bf Module}: A module refers to a program in a bucket. This may either be an atomic Module  or a compound Module that can call other atomic or compound modules (thus it is a sub-dag over other modules). Thus a compound module in a bucket may  {\em recursively} ``call'' functions in other buckets realized as a frequently seen computation DAG of sketches flowing through modules. For example the entire architecture can be thought of as one giant compound module defined by the initial module where all input sketches are sent (think that there is some default ``boot" module where all inputs are sent; this boot module iteratively calls the routing module and modules in the hash buckets where derived sketches get routed -- note that iterative calls can also be implemented as a tail recursion where the first iteration recursively hands off the processing for the remaining iterations.
 
%(see Claim~\ref{claim:functioncall} for one example of how this can happen). 
As the routing module explores, it records sequences of modules that led to high reward at this bucket by sketching the path and storing it as an outgoing edge in the knowledge graph. Over time, this edge ``hardens'' into the default path for this task: it becomes so high weight compared to the other edges that $f$ automatically focuses on it and follows it deterministically. We call this the ``program'' for this bucket. 

{\bf Attention}: We use ``attention'' in a very broad sense, meaning not just the mechanism as it appears in e.g. transformer architectures but more generally as a method of combining sketches based on pairwise similarity and/or relevance into a weighted tuple/set. We could use attention to extract components from one sketch based on another sketch and/or edges between their buckets. 
This attention can be used for example to connect the spoken name of a person to their image in a group photo via an edge in the knowledge graph that captures the frequent co-occurrence of the spoken name with the face. For example, imagine a picture of many people with one of their names being spoken. First the picture goes to visual module, which identifies that there are faces and sends it to facial-recognition module; this finds multiple familiar faces, and the bucket of one of these faces has an existing outgoing edge pairing it with the audio of the name. The similarity between these sketches is reflected in the weights generated by the attention module and the result is a combined sketch connecting this face in the picture with the audio input.

{\bf Routing module}: The routing module applies function $f$ that maps sketches to context followed by hash function $h$ that maps the context to a bucket. The function $f$ can be viewed as extracting a coarse representation of the sketch by extracting stable fields such as labels and dropping high variance ones. Since there may be several options in designing $f$ for a certain type of sketch, there may be some exploration where it makes probabilistic choices and later converges to a specific choice. For example there may be some probabilistic choices at each step regarding which components of a sketch to keep before routing the sketch to a hash bucket. For now we assume that routing module starts with a very simple set of rules and refines its probability distribution for each bucket over time based on success or failure of its choices (e.g. whether the loss score for that example is below some threshold).

We start with a very simple definition of the function $f$  in version v0 (section~\ref{sec:v0}) that simply drops certain fields in the sketch. In section~\ref{sec:v2.decision-tree}  we show how this can be formalized in the framework of our architecture by viewing this operation that picks a subset of the fields in an input sketch as yet another modular decision tree task by using sub-modules.

{\bf Knowledge graph}: The knowledge graph is implicitly forms by the pointers from a bucket to other frequently co-occurring buckets -- if there are too many such we may retain only the top few in addition to storing a compound sketch of the co-occurring sketches. Although in the architecture principles we said that knowledge graph edges are formed by buckets of frequently co-occurring sketches $S_1, S_2$ pointing to each other, we can also achieve this by simply creating a compound sketch for the frequently co-occurring pair $[S_1, S_2]$ and having edges between co-referencing sketches $[S_1, S_2]$ and $S_1, S_2$ each; thus all edges would be co-referencing edges. [[Rina: make this precise. into a definition]]. Thus by simply creating compound sketches and having pointers between the compound sketch and the component sketches and vice versa we automatically capture frequent co-occurence -- note that any sketch (including compound sketches) are persisted in hash buckets only when they occur with a frequency that exceeds a certain threshold (see details in a few paragraphs below)

{\bf Mature and immature buckets}: 
For simplicity we may think of some buckets  whose quality score is above a certain (user-defined) threshold as mature bucket that are marked as ``well trained". The parameters of the bucket's function may still be updated and further refined as new sketches arrive, but its main program is frozen and no further exploration is required in terms of solidifying choices of the routing module (say for the $f$-function) in handling sketches that map to this bucket. An immature bucket is one that is not fully trained: the routing module may not have found a good sequence of modules for this task yet, may not have figured out good choices for routing sketches and applying the $f$ function for sketches that map to this bucket, and/or the program in the bucket may not have been fully trained. Other modules cannot call an immature module as part of their program.

{\bf Backpropagation}: The parameters of the neural net in a bucket are updated whenever the example is evaluated in that bucket, that is whenever the routing module decides to stop exploring and train in that bucket. If the loss in the bucket is below some threshold then the knowledge graph is also updated, with a copy of the sketch being recorded as a co-occurring sketch for all modules in the execution pathway. 

{\bf Distinguished modules}: We assume that the architecture is provided with certain basic ``hardcoded'' modules where necessary, for example specialized audio and image processing modules with pre-trained CNNs to extract raw audio-visual data and embed it into a representation space. This is by analogy to humans, who have to learn to interpret data from their senses but don't have to evolve from scratch the concept of eyes.  We may also assume the existence of an ``input module'' that is the first module where all external inputs such as images, audio, text are first routed to. This module may separate modalities from the input sketch which may individually get routed to specific modules to process images, audio, text separately. The modules for processing images, audio may in-turn find text in the images, sounds and that text may get routed to text module.

{\bf Only frequent contexts are persisted}: If there are $m$ hash buckets in the LSH table we will only track contexts that appear at least with frequency $O(1/m)$, while others get ``timed out" and eventually forgotten as they will not appear with sufficient frequency --  we assume $m$ is at least the number of distinct contexts that need to be trained to correctly learn all the tasks. Note that this tracking of frequency of persistent and ephemeral contexts to ensure we catch anything with frequency at least $O(1/m)$ can be done in a total of $\tilde O(m)$ buckets -- one way to achieve this is to simply drop (time-out) from the system any context that does not appear within a time interval of $\tilde O(m)$; clearly, this way only $\tilde O(m)$ contexts are ever in the system at any one time.

{\bf A tail recursive view of the execution loop}: We can think of a tail recursive variant of Algorithm \ref{algo:execution-loop-full} where the loop is replaced a recursive call to itself at the end. In this case, the inside of the loop ``foreach sketch $S$" is replaced by a recursive call to Algorithm with input $S$ . This recursive view is also useful for analysis in certain cases. In some cases the algorithm may execute only the content inside the foreach loop involving applying the $f$ and the $h$ functions on the input sketch which could correspond to executing the leaf level of the recursion. The depth of the recursion (or the number of iterations) may be capped at some upper limit to prevent infinite loops. (Also see section \ref{app:misc} for a recursive view of a compound module)

\begin{algorithm}[t]
\SetStartEndCondition{ }{}{}%
\SetKwProg{Fn}{def}{\string:}{}
\SetKwFunction{Range}{range}%%
\SetKw{KwTo}{in}\SetKwFor{For}{for}{\string:}{}%
\SetKwIF{If}{ElseIf}{Else}{if}{:}{elif}{else:}{}%
\SetKwFor{While}{while}{:}{fintq}%
\DontPrintSemicolon%
\AlgoDontDisplayBlockMarkers\SetAlgoNoEnd\LinesNumbered%
\SetAlgoSkip{}

\KwIn{input sketch $T$ (this sketch may contain a desired output for training)} 

With some probability set leaf-level-recursion = TRUE \;

\uIf{leaf-level-recursion} 
{ 

    extract context $C = f(T)$ \;
    
    update access-frequency-count of bucket $h(C)$ \;
    
    \uIf{bucket $h(C)$ has a program $P$}{
        if P is marked for training, train it \;
        
        return P(S)
    }
    \uElse{
        \uIf{bucket $h(C)$ is frequently accessed}{
            initialize program at $h(C)$ with some random program
            % with program from similar non-empty context bucket 
            and mark it for training.\;
            
            with some probability {
                Fetch programs $P_i$ (possibly by some similarity criterion) \;
                return list of $P_i(S)$ \;
            }
        }
    }
}
\uElse{ 
    \ForEach{component sketch $S_i$ in T \, }{
    
    T = list of Algorithm1($S_i$)
    }
    T = pick  $k$ combinations of sketches in  T, and combine them into compound sketches:  
    
    return Algorithm(T)
    
}
\caption{Informal presentation of the recursive view of the main execution loop 
%to show how the architecture components fit together; details 
%such as how the routing module chooses programs, what counts as a ``nearby'' bucket, when to terminate the loop, etc are discussed in later sections.
}
\label{algo:informal_execution-loop-recursive}
\end{algorithm}

%For example, given a compound sketch from the attention module, it could iterate sequentially through the components of this sketch, at each step choosing to keep the component, drop the component, or terminate its iteration, each with probability $1/3$. Thus all combinations would be selected with some probability, but shorter ones would be preferred. An empty set of choices would then correspond to evaluating the function in the current bucket.

\section{Architecture v0}\label{sec:appendixarchitecturev0}

\begin{claim}\label{claim:independent-tasks}
Given an error rate $\epsilon>0$ and confidence parameter $\delta>0$ and $n$ independent tasks, each of which require at most $M=M(\epsilon,\delta)$ examples to learn to accuracy $1-\epsilon$ with probability $1-\delta$, and training data as described in above, with probability $1-(n+1)\delta$, Architecture v0 learns to perform all $n$ tasks to accuracy at least $1-\epsilon$ in $O(Mn\log \frac{n}{\delta})$ steps.
\end{claim}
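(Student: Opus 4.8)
The plan is to reduce the learning of all $N$ tasks to $N$ independent instances of the base PAC-learning problem and then bound the number of steps by a coupon-collector argument. First I would observe that in Architecture v0 the context function $f$ projects each input sketch onto its task descriptor $s_t$, and the hash $h$ sends the (unique) descriptors to distinct buckets; since every example carries its true task descriptor, routing is effectively deterministic and the bucket $h(s_t)$ accumulates exactly the examples belonging to task $t$. Consequently the combinatorial decisions of the routing module are NO-OPs, and within each bucket the base learner $\mathcal{A}_{\mathcal{M}}$ sees an i.i.d.\ sample from that task's fixed distribution---precisely the setting in which its PAC guarantee applies. The entire problem therefore decouples into $N$ separate learning problems.

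The key quantitative step is to show that $T = O(MN\log(N/\delta))$ steps suffice for every bucket to receive at least $M$ examples. Since at each step the task is drawn uniformly at random, the number of examples $X_i$ routed to task $i$ after $T$ steps is distributed as $\mathrm{Binomial}(T, 1/N)$ with mean $T/N$. I would set $T = cMN\log(N/\delta)$ so that $\mathbb{E}[X_i] = cM\log(N/\delta)$, and apply a multiplicative Chernoff bound to the lower tail: for a suitable absolute constant $c$, $\Pr[X_i < M] \leq \Pr[X_i < \tfrac12 \mathbb{E}[X_i]] \leq \exp(-\mathbb{E}[X_i]/8) \leq \delta/N$. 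A union bound over the $N$ tasks then shows that, except with probability $\delta$, every task has received at least $M$ examples within $T = O(MN\log(N/\delta))$ steps.

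Conditioned on this event, each bucket holds enough data for the PAC guarantee of $\mathcal{A}_{\mathcal{M}}$ to kick in: task $i$ is learned to accuracy $1-\epsilon$ except with probability $\delta$. A second union bound over the $N$ tasks bounds the total probability that some task fails to be learned by $N\delta$. Adding the $\delta$ failure probability of the coupon-collector event yields an overall failure probability of at most $(N+1)\delta$, which is exactly the stated confidence, and the step count is $O(MN\log(N/\delta))$ as claimed.

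I expect the only genuinely technical point to be the Chernoff/union-bound calibration of the constant $c$, so that the per-task shortfall probability is driven below $\delta/N$ while keeping the step count at $O(MN\log(N/\delta))$; everything else---correct routing, the per-task decomposition, and the final union bound---is bookkeeping following from the unique-descriptor assumption and the black-box PAC guarantee for $\mathcal{M}$. A subtlety worth flagging is that $M = M(\epsilon,\delta)$ already absorbs the per-task confidence $\delta$, so the two $\delta$-budgets (coupon collector versus learning) must be tracked separately to arrive at $(N+1)\delta$ rather than double-counting a single $\delta$.
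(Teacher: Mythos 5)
Your proof is correct and follows essentially the same approach as the paper: route each example by its task descriptor to a dedicated bucket, decompose into $n$ independent PAC-learning problems, and union-bound the failures. In fact, the paper's own proof is only a brief sketch asserting this decomposition, so your Chernoff/coupon-collector calculation supplying the $O(Mn\log\frac{n}{\delta})$ step count and the $(n+1)\delta$ accounting is a welcome fleshing-out of details the paper leaves implicit.
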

\begin{proof}
This follows from the fact that the problem essentially breaks down into $n$ separate supervised learning tasks. In the learning algorithm we simply route each sample using $f$ and $h$ to its corresponding task according to its task descriptor and use the learning algorithm $\mathcal{A}_{\mathcal{M}}$ to train the function $\hat{m}_t$ in the corresponding hash bucket.  
The algorithm for v0 falls into the framework of Algo.1. However, $f$ and $h$ are restricted and other routing module operations become NO-OP. Because in v0 the tasks are independent. 
\end{proof}

Here, we assumed that each task has a fixed, unique task descriptor. Using the locality-sensitive hash function, it is straightforward to extend v0 slightly to the case where each task is represented by noisy but well-separated task descriptors.

{\bf Noisy contexts}: Although we have been thinking of contexts as precise and fixed for a task, We can also relax the assumption that the context of a task should be identical each time, instead allowing some noise in the contexts. Our architecture can handle noisy contexts as we use an LSH table; we can easily replace the LSH-function with an $r$-LSH that makes $r$ different hash function for some $r$. Now each context accesses $r$ buckets, and programs can be encoded in a distributed/replicated fashion to render the contexts robust to noise. The following two Theorems show how the relevant information for a context can be stored in a distributed robust manner (like in an error correcting code) so that even having access to a fraction of the locations where it is encoded is sufficient to correctly recover the information -- this allows us to index the information using a "corrupted" 
version of the context.

\textbf{Assumption}:
Suppose there is desired sketch $v^*$ and the noise procedure that produces $v$ such that: 
$$
v \sim \mathcal{N}\left(v;  v^*, diag(\beta)\right).
$$
% then 
% $$
% \log
% $$

\begin{theorem}
If there is a ball of points of radius $\delta$ in sketch space so that all those points should go to the same program then for $r \ge n^{O(\delta)} $ the program will get programmed in {\em any} of the several hash locations the points map to (as long as the points are picked randomly) from the ball.
\end{theorem}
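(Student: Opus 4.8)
The plan is to instantiate $h$ as an amplified random-hyperplane LSH and reduce the statement to the classical near-neighbor guarantee, tracking how the collision exponent $\rho$ depends on the ball radius $\delta$. Recall (Appendix~\ref{appendix:lsh}) that for a single random hyperplane the collision probability of two vectors is $1-\theta/\pi$, where $\theta$ is the angle between them. For (normalized) points inside a ball of radius $\delta$ the pairwise angle is $\theta=O(\delta)$, so a single hyperplane keeps two ball-points together with probability $p_1 \ge 1 - O(\delta)$, while two points from distinct, well-separated contexts collide with probability at most some $p_2<1$ bounded away from $1$. This gives $\rho = \frac{\log(1/p_1)}{\log(1/p_2)} = \frac{O(\delta)}{\Theta(1)} = O(\delta)$, which is exactly the exponent appearing in the statement.

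First I would build the amplified hash in the standard AND/OR fashion: form each band $g_j$ as a concatenation of $k=\lceil \log_{1/p_2} n\rceil$ independent hyperplane hashes, and use $r$ such bands $g_1,\dots,g_r$. With this choice the per-band collision probability of two far-apart contexts is at most $p_2^{k}\le 1/n$, whereas the per-band collision probability of two ball-points is at least $p_1^{k}=n^{-\rho}$. Taking $r\ge n^{\rho}=n^{O(\delta)}$ bands, the probability that a fixed pair of ball-points fails to collide in every band is at most $(1-n^{-\rho})^{r}\le 1/e$; repeating the bands a further $O(\log(1/\delta'))$ times drives this failure probability below any desired $\delta'$. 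Hence, with high probability, every randomly drawn point of the ball shares a band-bucket with the center $v^*$, so the $r$ buckets $g_1(v^*),\dots,g_r(v^*)$ are exactly the locations into which the ball's points concentrate.

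Next I would argue that these buckets actually get \emph{programmed}. Because a $p_1^{k}=n^{-\rho}$ fraction of the ball's samples land in a given bucket $g_j(v^*)$, and (by the $p_2^{k}\le 1/n$ bound together with a union bound over the $n$ competing contexts) only $O(1)$ foreign contexts are expected to collide there per band, each such bucket is dominated by ball-points and crosses the frequency threshold that triggers program initialization in Algorithm~\ref{algo:informal_execution-loop}. The $O(1)$ contamination from far contexts constitutes a vanishing fraction of the bucket's traffic, so invoking the label-noise robustness of $\mathcal{A}_{\mathcal{M}}$ assumed in Section~\ref{sec:hll}, the program trained there is correct. For retrieval, a corrupted query $v$ drawn from $\mathcal{N}(v;v^*,\mathrm{diag}(\beta))$ lies in the ball of radius $\delta$ with high probability by Gaussian concentration, and by the same union over bands it collides with $v^*$ in at least one band; thus the program is recoverable from \emph{any} one of the several locations it was written to, giving the claimed error-correcting-code-like redundancy.

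The main obstacle is the two-sided bookkeeping: I must simultaneously choose $k$ and $r$ so that near-collisions are frequent (true positives, controlling programming) while far-collisions stay at $O(1)$ per bucket (false positives, controlling contamination), and then translate these collision counts into the ``frequently accessed'' condition of the execution loop. A secondary subtlety is making the angle-to-radius conversion $\theta=O(\delta)$ rigorous for the relevant (sub-)normalized sketches, and verifying that the Gaussian noise model keeps $v$ inside the radius-$\delta$ ball with the required probability; both are concentration arguments rather than new ideas.
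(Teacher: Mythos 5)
Your first half — instantiating $h$ as an amplified hyperplane LSH, deriving the exponent $\rho=\log(1/p_1)/\log(1/p_2)=O(\delta)$ from the angle-to-radius conversion, and concluding that $r\ge n^{O(\delta)}$ bands give a common bucket with high probability — is a concrete instantiation of exactly the step the paper takes, which simply invokes the generic LSH guarantee that contexts within distance $\delta$ collide with probability $n^{-O(\delta)}$. Where you genuinely diverge is in how the program comes to occupy \emph{all} of the relevant buckets. The paper does \emph{not} train each bucket independently: it adds a regularizer minimizing the sum of the norms of the program representations across the $r$ buckets, so that at a local minimum all $r$ copies converge to the same value (effectively pooling the training data of the whole ball), and at inference it averages the programs found in the non-empty buckets the query maps to. You instead argue that each bucket $g_j(v^*)$ individually accumulates enough clean traffic to cross the training threshold and learn a correct program on its own.

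That independent-training step is where your argument is weaker than the paper's, and it is worth seeing why the regularizer is there. First, each band-bucket only receives a $p_1^{k}=n^{-O(\delta)}$ fraction of the ball's samples, so programming every bucket independently multiplies the sample complexity by $n^{O(\delta)}$; the paper's pooled/regularized training avoids this. Second, your claim that the $O(1)$ expected foreign contexts colliding in a band contribute only a ``vanishing fraction'' of that bucket's traffic does not follow: a foreign context that collides with $g_j(v^*)$ deposits roughly the same $n^{-O(\delta)}$ fraction of \emph{its} samples there, so in the bands where a collision occurs the contamination is a constant fraction, and the $O(\epsilon)\mapsto O(\epsilon)$ robustness guarantee of $\mathcal{A}_{\mathcal{M}}$ gives you nothing useful. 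Since the theorem asserts the program appears in \emph{any} of the locations, you would at best establish it for the constant fraction of bands with no foreign collision. The paper's averaging-plus-regularization mechanism is precisely the device that makes the contaminated or data-starved buckets harmless, so if you keep your independent-training route you need either a stronger decontamination argument per bucket or to fall back on some form of cross-bucket aggregation.
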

\begin{proof}
Since we are using a locality sensitive hash table if we use $r$-LSH functions, the context doesn't need to point to the same set of buckets -- but as long as there is at least one common bucket it can retrieve the program. LSH guarantees that as long as the contexts $C$ and $C'$ are within $\delta$ distance, they will go to the same bucket with probability at least $n^{-O(\delta)}$. So if $r  \gg n^{O(\delta)}$ with high probability there will be an intersection. During inference we can look at all the non-empty buckets and take the average of the programs stored in all those buckets. During training if we add a regularizer that minimizes the sum of the norms of the program vector representation, then all the programs in the $r$-buckets will go to the same value.
Thus if a task is trained using large number of hashed buckets then it is highly resilient to change in context as all that is needed is for a few of the hash locations to intersect. This proves the Theorem.
\end{proof}

Thus even though the different contexts go to different sets of buckets those buckets contain the same program; this program sketch now becomes an identifier/common-sketch for this unique common context across these noise contexts.

{\bf Distributed storage of programs}: In fact, a program need not fit entirely in one bucket but may be assembled in a robust manner from the $r$-buckets. Thus a program may be stored across multiple hash buckets so that any small subset of them could be used to recover the program. Let us say the program is $s_p$ and the amount of program-field in each bucket is $s_b$. We will show how from a random subset of $l$ out of the $r$-buckets for this context is sufficient to assemble the program as long as $l > \tilde \Omega(s_p/s_b)$. The main idea is to associate each bucket $i$ with  a random sparse rotation matrix $R_i$. Then if a large set  of $r$ locations are trained to store a particular program $y$, any small subset  $\{i_1,.., i_l\}$ of those locations may be sufficient to read $y$. That is, $y = (R_{i_1} x_{i_1} +..+ R_{i_l} x_{i_l})/l$ where $x_i$ is the value stored at bucket $i$. This idea may also be used to store a program in a distributed fashion across entirely different contexts.

\begin{theorem}
There is a way to store a program $y$ in a distributed manner across $r$ buckets so that any random subset of $l$ of these buckets can be used to reconstruct the $y$, as long as $l > \tilde \Omega(s_p/s_b)$. 
\end{theorem}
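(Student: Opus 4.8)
The plan is to realize the distributed storage as a randomized sketch, reusing exactly the Johnson--Lindenstrauss-style machinery already underlying the sketches of \cite{ghazi2019recursive}. Associate with each bucket $i$ an independent random matrix $R_i\in\R^{s_p\times s_b}$, drawn sparse (e.g.\ a sparse-JL / random-rotation construction as already used for the $R$'s in Appendix~\ref{appendix:sketching}) and scaled so that $\E[R_iR_i^\top]=I_{s_p}$; concretely, i.i.d.\ mean-zero entries of variance $1/s_b$ suffice, since $(R_iR_i^\top)_{ab}=\sum_k R_{ak}R_{bk}$ has expectation $\delta_{ab}s_b(1/s_b)=\delta_{ab}$. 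To store the program $y\in\R^{s_p}$ we place at bucket $i$ the compressed value $x_i=R_i^\top y\in\R^{s_b}$, which fits in the available program field of size $s_b$. In the training picture of the previous theorem, the norm-minimizing regularizer is what drives the $r$ trained buckets to such a consistent encoding.

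To decode from a random subset $S=\{i_1,\dots,i_l\}$ I would output the average $\hat y=\tfrac1l\sum_{j\in S}R_{i_j}x_{i_j}=\tfrac1l\sum_{j\in S}R_{i_j}R_{i_j}^\top y$. First I would check unbiasedness: each term satisfies $\E[R_iR_i^\top y]=y$, so $\E[\hat y]=y$ and the estimator is correct in expectation. The work is then to control the fluctuation. Writing $A_i=R_iR_i^\top-I$, by left-rotational invariance of the construction $\E[A_i^2]$ is a multiple $cI$ of the identity, and a standard fourth-moment (Wishart) computation of $\E\|A_i\|_F^2=\sum_{a,b}\Var\big((R_iR_i^\top)_{ab}\big)=\Theta(s_p^2/s_b)$ gives $c=\tfrac1{s_p}\Tr\E[A_i^2]=O(s_p/s_b)$. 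Hence the per-bucket second moment is $\E\|A_iy\|^2=y^\top\E[A_i^2]y=O(s_p/s_b)\|y\|^2$.

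Because the $R_i$ (and the random subset) are independent, the averaged error $\hat y-y=\tfrac1l\sum_j A_{i_j}y$ is a sum of $l$ independent mean-zero vectors, so $\E\|\hat y-y\|^2=\tfrac1l\,O(s_p/s_b)\|y\|^2$. Choosing $l=\Omega\!\big(s_p/(s_b\eps^2)\big)$ drives this below $\eps^2\|y\|^2$, giving a constant-probability guarantee by Markov; the logarithmic overhead hidden in $\tilde\Omega(s_p/s_b)$ comes from boosting to high probability, either by a vector Bernstein inequality applied to $\sum_j A_{i_j}y$ (after a high-probability bound on $\|A_iy\|$) or by a median-of-means over $O(\log(1/\delta))$ disjoint sub-blocks of buckets. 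Either route yields $\|\hat y-y\|\le\eps\|y\|$ w.h.p.\ once $l>\tilde\Omega(s_p/s_b)$, which is the claim.

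The main obstacle I anticipate is this concentration step rather than the mean/variance bookkeeping: the summands $A_iy$ are quadratic forms in the (possibly sparse) random entries and are only sub-exponential, so a naive sub-Gaussian tail does not apply and one must invoke a Hanson--Wright / vector-Bernstein argument, taking care that sparsity of $R_i$ does not inflate the fourth moment beyond the $O(s_p/s_b)$ budget. A secondary subtlety is the precise reading of ``any random subset'': the i.i.d.\ structure makes a single random $S$ straightforward, whereas a worst-case statement over all $\binom{r}{l}$ subsets would require a union bound and hence a slightly larger $l$ (an extra $\log\binom{r}{l}$ factor, still absorbed by $\tilde\Omega$); I would therefore state the guarantee for a random subset and remark on the worst-case strengthening.
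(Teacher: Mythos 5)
Your proposal is correct, and it takes a genuinely different analytical route from the paper's. The paper's proof uses $s_p\times s_b$ selection matrices in which each column has exactly one $1$ in a random position, so that each bucket stores $s_b$ verbatim (rescaled) coordinates of $y$; recovery is then a coupon-collector/covering argument (with $l=\tilde\Omega(s_p/s_b)$ random buckets every coordinate of $y$ appears in at least one of them w.h.p.), and the norm regularizer forces all copies of a given coordinate to agree. Your proof instead treats the decoder $\hat y=\tfrac1l\sum_j R_{i_j}R_{i_j}^\top y$ as a randomized estimator with $\E[R_iR_i^\top]=I$, and bounds its fluctuation via the second moment $\E\|(R_iR_i^\top-I)y\|^2=O(s_p/s_b)\|y\|^2$, so that $l=\tilde\Omega(s_p/(s_b\eps^2))$ independent buckets give $\|\hat y-y\|\le\eps\|y\|$ after a vector-Bernstein or median-of-means boost. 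What your route buys: a single oblivious decoding formula with an explicit, quantitative error bound, and it quietly repairs a gap the paper glosses over --- with one-hot columns the plain $\tfrac1l$-average actually scales coordinate $j$ by its random multiplicity $N_j$ rather than by exactly $1$, so the paper's "average equals $y$ in expectation" claim needs either per-coordinate renormalization or precisely the kind of concentration argument you supply. What it costs: your reconstruction is approximate ($\eps$-relative error, with $1/\eps^2$ absorbed into the $\tilde\Omega$) whereas the paper's covering argument yields exact per-coordinate recovery once every coordinate is hit; since the theorem does not specify exact versus approximate reconstruction and the surrounding architecture is sketch-based, the approximate guarantee is in the spirit of the claim, but you should state that qualification explicitly. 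Your flagged concerns (sub-exponential tails of the quadratic forms, and random-subset versus worst-case-subset semantics) are the right ones and are handled correctly by the Hanson--Wright/median-of-means and union-bound remarks you give.
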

\begin{proof}
First look at the case where $s_p = s_b$.
For simplicity think of each rotation matrix as identity. Then we will show that at local minima all program-pieces $R_i x_i$ are identical. This is achieved by adding a regularizer that minimizes the sum of the norms of the program pieces $R_i x_i$  in the different buckets. The same argument holds if the matrices are full rank.

If $s_p > s_b$ first lets look at the limiting case when $s_b = 1$. So we are taking a set $l$ numbers and using it to get an $s_p$-dimensional vector $y$. This can be done by using a random sparse $s_p \times 1$ matrix $R_i$ for bucket $i$ and then averaging across the $l$ buckets; $R_i$ is a  binary vector with exactly one $1$ at a random coordinate, so when $R_i$ is when multiplied by a (scalar) input $x$ it puts it into a random coordinate of the output and keeps others zero. Now if we take $l \ge \tilde \Omega(s_p)$ such $R_i$ matrices with high probability, each of the $s_p$ coordinates will be $1$ in some of the $R_i$s. Thus in terms of representation, one can store the specific co-ordinate of $y$ in all the $x_i$ (scaled by $l$) where $R_i$ has a $1$ in that coordinate. Now the expected value of the average assembled from $l$ buckets will be $y$ in expectation -- high concentration can be achieved by making $l$ sufficiently large. Thus a specific co-ordinate of $y$ is stored in $1/s_b$ fraction of the buckets. It can also be ensured that this happens during training by using regularization; the regularization will force all the values in such buckets to be equal and identical to the desired value of that coordinate in the program. The exact same argument extends to the case when $s_b > 1$ except that now $R_i$ is a $s_p \times s_b$ random matrix where each column has exactly one $1$ in a random position.
\end{proof}

{\bf Programs may modify a global-program}: So far we assumed that all the $n$ tasks are independent. However instead we could have a global-program so that all variants of that global task that is already available. Note that the main algorithm loop states that a program in a bucket is initialized from a program in the nearest non-empty context bucket. If we assume that the global-program is in a bucket that is nearest to a new bucket then it will automatically start from there. Further note that we may not even need to copy the entire program to the new bucket, but simply train the delta (modification) there; thus in the new bucket we would store a pointer (sketch) to the global program and the delta represented as a vector. This gives the following claim.

\begin{claim}\label{claim:global-task-appendix}
Claim ~\ref{claim:independent-tasks} holds even if all $n$ tasks are derived from a global task.
\end{claim}

% \section{Variant where a sketch can be sent to a hash bucket ``recursively''}\label{appendix:recusive-variant}
% % this is probably not going to survive as a section on its own, it may just be a comment in an appendix or something

% {\color{blue} Merge these two sections}

% A CNN can be expressed very easily in this recursive view. That is, the image can be divided up into (say) 4 patches and the CNN called on each patch recursively.

% This view is also easier to prove theorems.

%This view is also easier to prove theorems.

%\section{Details on architecture implementation}\label{appendix:architecture-details}

%\section{Proofs}\label{appendix:proofs} [[instead of all proofs in one section, maybe separate section for each main section? Rina]]

\section{Architecture v1}\label{sec:appendixarchitecturev1}

This version will be used to learn a (latent) DAG of tasks where each task corresponds to the subtree rooted at a node. There is a (learnable) function at each node that recursively takes inputs the outputs of its child nodes. We show how this DAG (or an equivalent) one gets automatically learned in our architecture. The main argument is inductive where we show that the function at each node (or its equivalent) gets programmed at some bucket in our LSH table. The key challenge is in figuring out exactly which tasks are the child tasks for a new task to learned. In the worst case this can be done by trying all possible ${N \choose d}$ subset of $d$ nodes. In practice there may be hints in the input that can used to narrow the search space in to a smaller set of candidates. Section~\ref{sec:v2.decision-tree} shows how this can implemented using a modular decision tree that itself fits well within our architecture.

\begin{lemma}\label{lemma:v1_proof_appendix}
Suppose at each step, a task $t$ is chosen uniformly random from the set of tasks $\{t_1, \dots, t_N\}$ in a DAG of height $\ell$, along with one random sample $(x, y)$ where ${\hat{\phi}}^{(t)}(x) = y$. Then after 
$\ell M N \ln(1/\delta)$ steps all the tasks will be well-trained 
{ (training error rate $\leq \epsilon_L$ for each module at level $L$) }
 w.h.p. We will call SGD $O( \ell M N^{(1+d)} \ln(1/\delta))$ times during the training. Here, $M$ is  is the upper bound of all $M_L$. 
%  which depends on its corresponding function class $\mathcal{M}$.  
\end{lemma}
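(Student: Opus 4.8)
The plan is to argue by induction on the level $L$ of the dependency DAG, synchronized with the global task level that the algorithm maintains. The inductive hypothesis is that once the global level has been incremented to $L$, every task at a level strictly below $L$ has been marked well-trained and frozen, with its frozen model achieving error at most $\epsilon_{L-1}$. The base case $L=0$ is precisely the independent-tasks setting of Claim~\ref{claim:independent-tasks}: source tasks depend on nothing, so the atomic learner $\mathcal{A}_{\mathcal{M}}$ applied in each bucket learns them to accuracy $1-\epsilon_0$ once enough samples accumulate, with no interference across distinct task descriptors.

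For the inductive step I would fix a level-$L$ task $t$ and condition on its $\le d$ children being frozen and $\epsilon_{L-1}$-accurate. The key observation is that among the $O(N^d)$ candidate models maintained in bucket $h(s_t)$---one per subset of up to $d$ well-trained modules---there is a \emph{correct} candidate whose auxiliary inputs are exactly the outputs of the true children $\hat{\phi}^{(t'_1)},\dots,\hat{\phi}^{(t'_d)}$. On the fraction of inputs where all children are simultaneously correct (at least $1-d\epsilon_{L-1}$ by a union bound), this candidate sees clean features and clean labels, while the remaining $\le d\epsilon_{L-1}$ fraction behaves like adversarial corruption. Invoking the label-noise-robustness assumption on $\mathcal{A}_{\mathcal{M}}$, after $M_L$ samples this candidate attains error at most $Cd\epsilon_{L-1}=\tfrac12\epsilon_L$; this is exactly why the bookkeeping recurrence $\epsilon_L=(2dC)^L\epsilon$ is correct and why the factor of two leaves slack for verification. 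The empirical accuracy test over the next $O(\tfrac{d}{\epsilon_L}\log\tfrac{N}{\delta})$ samples then certifies, via a Chernoff bound with a union bound over all $N$ tasks and all $O(N^d)$ candidates, that this candidate is genuinely $\epsilon_L$-accurate while no candidate of true error exceeding $\epsilon_L$ is accepted.

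For the step count I would use a waiting-time argument. Since tasks are drawn i.i.d.\ uniformly, a fixed task fails to appear in $\tau$ consecutive steps with probability $(1-1/N)^\tau\le e^{-\tau/N}$, so a block of $\tau=N\ln(1/\delta)$ steps yields a fresh sample of $t$ with probability $1-\delta$; stacking $M\ge M_L$ such blocks (absorbing the union bound over blocks and tasks into the logarithm) shows that $O(MN\ln(1/\delta))$ steps after the level increment suffice for every level-$L$ task to collect its $M_L$ post-freeze samples and be certified. Summing over the $\ell$ levels gives the claimed $\ell MN\ln(1/\delta)$ total steps. Finally, because each arriving example triggers one SGD step on each of the $O(N^d)$ candidates in its bucket, the number of SGD invocations is the step count times $O(N^d)$, namely $O(\ell M N^{1+d}\ln(1/\delta))$.

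The main obstacle I expect lies in the verification step rather than in the timing: I must guarantee that the empirical check neither freezes an under-trained candidate---whose propagated error would poison every task built on top of it and break the induction---nor rejects the correct candidate, and that the $\tfrac12$ slack in $\epsilon_L$ is precisely enough to separate these two cases given the test-sample budget. Aligning the constants in the recurrence $\epsilon_L=(2dC)^L\epsilon$ with the test size $O(\tfrac{d}{\epsilon_L}\log\tfrac{N}{\delta})$ so that a single union bound over all levels, tasks, and candidates delivers overall confidence $1-\delta$ is the delicate part of the argument.
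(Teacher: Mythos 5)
Your proposal is correct and follows essentially the same route as the paper's proof: a waiting-time argument ($\tau = N\ln(1/\delta)$ steps for a fixed task to reappear), induction over the $\ell$ levels of the DAG, and an extra $O(N^d)$ factor in the SGD count from the candidate models per bucket. In fact you fill in details the paper's proof leaves implicit --- the error recurrence $\epsilon_L=(2dC)^L\epsilon$, the use of label-noise robustness on the $\le d\epsilon_{L-1}$ corrupted fraction, and the Chernoff-based certification step --- all of which are consistent with the architecture description in Section 4.
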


\begin{proof}
The learning algorithm follows the framework of Alg. \ref{algo:informal_execution-loop}. 
Let $t'$ be the one of the tasks that $t$ depends. Then we have that
\begin{equation}\label{prob_t_appears}
Pr[t \text{ does not appear } \tau \text{ steps after } t' 
]
= (1 - 1/N)^\tau \leq e^{-\tau/N}.    
\end{equation}
Therefore, after $t'$ is well-trained, if we wait for at least $\tau = N \ln(1/\delta)$, with probability $1-\delta$, $t$ will appear. Without loss of generality, we can assume that $t'$ is the the last sub-module task of $t$ that gets well-trained. Then after $\tau$ steps the training for $t$ becomes useful because we can call these well-trained sub-modules. Note that the probability in Eq.~\eqref{prob_t_appears} applies to any time step, so after the first $t$ arrives, if we wait for another $\tau$ steps, $t$ will appear again. Suppose $M$ is the amount of data that is needed to train function $m \in \mathcal{M}$, then after $M \tau = M N \ln(1/\delta)$ steps $t$ can be well-trained w.h.p. Since we know that basic tasks can be trained without calling other sub-modules, by using standard induction argument we know that all the tasks can be trained within $\ell M \tau = \ell M N \ln(1/\delta)$ steps.
(If $M$ is larger than $\log(N)$, then we only need $O(MN) $)
Because each bucket will maintain at most $O(N^d)$ models at a given time and will run one pass of SGD of each of them upon receiving a sample, we will call SGD for at most $O(\ell M N^{(+d)} \ln(1/\delta))$ times. 
\end{proof}

\begin{remark}
Note that we don't need to pass each data to all the $O(N^d)$ buckets at the same time. We can randomly choose buckets. For example, if $d = 10$ but the compound module  only calls $2$ submodules, then with high probability, we only need to run $O(N^2)$ steps. Further in practice the exploring among all $N$ tasks may not be needed as there may be some smaller candidate subset of only related tasks that need to be considered,
\end{remark}

In the proof of Lemma \ref{lemma:v1_proof_appendix} we implicitly assumed that all the different combinations of child tasks are tried in a single bucket for the parent task indexed by $h(s_t)$. However, in fact, there is limited space per bucket and the different combinations are actually tried in different contexts and hash buckets.
The following claims provide details about exactly which buckets are used in the training of a new task $t$.

\begin{claim}\label{claim:can-learn-parent-task}
Assuming child tasks are learned, the parent task will be learned in some bucket of the table (not necessarily the bucket corresponding to its original task-description context) in a further $O(nM/p)$ steps, where $p \in (0,1)$ is the probability of the routing module choosing the correct subset of children for the task.
\end{claim}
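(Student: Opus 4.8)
The plan is to reduce the claim to a waiting-time estimate on how long it takes for enough correctly-routed examples of task $t$ to accumulate in a single bucket, and then to invoke the robust learner $\mathcal{A}_{\mathcal{M}}$ on those examples. First I would fix the combinatorial object that the routing module must guess: a choice of at most $d$ already-learned child modules together with the correct wiring (contexts) that feeds their outputs as auxiliary inputs to the model for $t$. By hypothesis each such correct choice is made, independently at each visit, with probability at least $p$, and each distinct wiring reproducibly indexes a fixed bucket; call $b^*$ the bucket reached by the correct choice. Since tasks are sampled uniformly from the $n$ tasks, at any given step the event that (i) task $t$ is drawn and (ii) the routing module routes it correctly to $b^*$ occurs with probability at least $p/n$, and these events are independent across steps because both the task draw and the routing decision of Algorithm~\ref{algo:informal_execution-loop} are fresh random draws.

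Next I would argue that the examples landing in $b^*$ with the correct wiring are exactly the supervised examples needed to train $\phi^{(t)}$ with $\mathcal{A}_{\mathcal{M}}$. Because all child tasks are well-trained, on a random input their outputs agree with the true sub-task functions except on at most a $d\epsilon_{L-1}$ fraction of the input distribution; this corrupts the auxiliary features (equivalently, the effective labels) of at most a $d\epsilon_{L-1}$ fraction of the examples presented to the learner in $b^*$. Invoking the assumed robustness of $\mathcal{A}_{\mathcal{M}}$ to such corruption, once $M=M_L$ correctly-routed examples have accumulated the trained model attains accuracy at least $1-Cd\epsilon_{L-1}\ge 1-\epsilon_L$ (using $\epsilon_L=(2dC)^L\epsilon$), so the bucket is marked well-trained and serves as the module for $t$.

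The counting step is then routine. The number of correctly-routed examples of $t$ arriving in $T$ steps is a sum of independent Bernoulli variables each of mean at least $p/n$, so its expectation is at least $Tp/n$. Taking $T=c\,Mn/p$ for a suitable constant $c$ and applying a Chernoff bound shows that at least $M$ correctly-routed examples reach $b^*$ with high probability; since the confidence parameter $\delta$ is already absorbed into $M=M(\epsilon,\delta)$, no additional logarithmic factor is required. This yields the claimed $O(nM/p)$ bound on the number of steps.

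The main obstacle is the bookkeeping around $b^*$, not the probability estimate. One must verify that every correctly-routed example reproducibly indexes the \emph{same} bucket $b^*$, so that the $M$ useful examples genuinely accumulate in one place rather than being scattered, and that the many incorrectly-routed examples — which are precisely what forced the per-bucket-space splitting described above — only waste steps without polluting the statistics in $b^*$. This in turn requires that the correct wiring yields a stable context under $f$ and $h$, and that the per-step success probability stays bounded below by $p$ throughout the relevant window (e.g.\ during the uniform-exploration phase of the decision tree of Algorithm~\ref{algo:fdecisiontree}, cf.\ Claim~\ref{claim:decision-tree-appendix}); making the independence of successive routing decisions precise is the delicate point.
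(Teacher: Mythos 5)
Your proposal is correct and follows essentially the same route as the paper's proof: a per-step success probability of $p/n$ for the event that task $t$ arrives and is routed with the correct child-subset wiring, together with the observation that each distinct wiring indexes its own bucket so the $M$ useful examples accumulate in a single stable bucket $b^*$. Your version is in fact somewhat more careful than the paper's — you make the Chernoff-bound counting explicit and explicitly invoke the robustness of $\mathcal{A}_{\mathcal{M}}$ to the $d\epsilon_{L-1}$-fraction of corrupted auxiliary inputs, both of which the paper leaves implicit.
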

\begin{proof}
Suppose $C$ is the task id context for a task whose child tasks have all been learned. By our assumptions on task context similarity, the buckets corresponding to the child tasks will be among those that the routing module finds when it looks for trained buckets near to $C$. Therefore when the routing module runs the $k$ nearest buckets, combines their results, and chooses some subset of the components to keep as the context of the resulting sketch, it keeps exactly the right components in order to successfully learn the task with some probability $p$.

The context $C'$ of this new sketch references both the original task context $C$ and the combination of previous modules that contributed to it, so there is a separate hash bucket for each possible combination that the model tries, which prevents catastrophic forgetting while the routing module searches for the best combination. After processing at most $\lambda + nM/p$ examples (where $\lambda$ represents how many examples were processed before the prerequisite modules had matured) the function in bucket $h(C')$ will have with high probability learned to perform the parent task. 
\end{proof}

\begin{claim}\label{claim:functioncall-appendix}
Assuming the learning of a task has happened as per Claim~\ref{claim:can-learn-parent-task}, over time the execution pathway for a node gets programmed into the original bucket $h(s_t)$ for that task.
\end{claim}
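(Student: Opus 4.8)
The plan is to show that the reward-driven decision-tree mechanism at the original bucket $h(s_t)$ converges to deterministically reproducing the execution pathway that, by Claim~\ref{claim:can-learn-parent-task}, correctly computes $\hat{\phi}^{(t)}$; once this convergence occurs, following that pathway \emph{is} the program stored at $h(s_t)$. First I would make precise what ``execution pathway'' means here: it is the sequence of routing-module decisions---which child modules to invoke, how to combine their outputs into compound sketches, and which fields to retain as context---that starts from an input sketch $[s_t, x]$ and threads through the (already mature) child modules to reach the downstream bucket $h(C')$ where the parent function has been learned. Since every input for task $t$ is first routed to $h(s_t)$, this bucket is the root of all such decision sequences.

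Next I would invoke the decision-tree framework of Section~\ref{sec:v2.decision-tree}: all the subset-choosing and combining decisions made along the pathway are implemented as the probabilistic branches of Algorithm~\ref{algo:fdecisiontree}, i.e.\ a softmax over branch rewards with temperature $T$. The correct pathway corresponds to one root-to-leaf path in this tree, and its initial probability of being taken is $p = 1/2^{O(d^2 + d\log(N/d))}$, matching the number of ways to select and wire a correct subset of at most $d$ children among $N$ candidates (as in Theorem~\ref{Architecture:v15_main_theorem}). The key input to the argument is that, because the child modules are mature and $h(C')$ computes the task correctly, whenever this pathway is sampled it yields a REWARD sketch that is propagated back along the recorded computation history to every prior bucket---in particular to $h(s_t)$---while incorrect subset choices do not consistently produce reward. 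Thus the optimal-reward leaf of the decision tree is exactly the terminus of the correct pathway.

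Then I would apply Claim~\ref{claim:decision-tree-appendix} directly: with the stated temperature schedule, after $O((1/p)\log(1/\delta))$ tree-walk steps the optimal path is visited $\Omega(\log(1/\delta))$ times with probability $1-\delta$, and since each decision node records the best reward in its subtree, freezing $T$ to near zero converts every softmax branch into an argmax that selects the correct child. The routing module then deterministically regenerates the pathway from $h(s_t)$, and this hardened sequence of branches is precisely the ``program'' for $h(s_t)$, proving the claim.

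The main obstacle I expect is credit assignment across the recursive structure: one must guarantee that the reward earned at the downstream prediction bucket is attributed to the decisions made at $h(s_t)$ rather than being diffused among spuriously co-occurring pathways. This is where the recursive-input-sketch representation is essential, since it records the exact sequence of buckets that produced an output and therefore lets the reward be propagated back along only those buckets. I would also flag that the argument relies on the inductive hypothesis that all child modules are already mature, so that the reward landscape seen by the decision tree is stationary; without this, the softmax could concentrate prematurely on a suboptimal branch before the correct pathway becomes rewarding.
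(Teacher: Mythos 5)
Your proposal is essentially correct as an argument that the system converges to deterministically executing the right pathway, but it takes a genuinely different route from the paper. You argue via the decision-tree mechanism of Section~\ref{sec:v2.decision-tree}: the subset-choosing decisions are softmax branches, the correct pathway is the optimal-reward leaf, and Claim~\ref{claim:decision-tree-appendix} gives convergence to an argmax that deterministically regenerates the pathway. The paper instead argues via the knowledge-graph principle: each time the downstream bucket $h(C')$ succeeds, a copy of the sketch of that execution is recorded as a high-reward co-occurring example on the outgoing edges of \emph{every} bucket in the pathway; averaging many such compound sketches (Claim 5 of \cite{ghazi2019recursive}) smooths away the per-example details and leaves the constant part --- the pathway $C \rightarrow \{C_1,\dots,C_r\} \rightarrow C'$ --- dominating the outgoing edges at $h(C)$, which is what ``becoming the program for $h(C)$'' means there. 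Your route buys a quantitative convergence rate ($O((1/p)\log(1/\delta))$ tree walks) and a cleaner credit-assignment story via the recursive input sketches; the paper's route more directly delivers the literal content of the claim, namely that a single sketch representing the pathway ends up stored \emph{at} $h(s_t)$ rather than remaining distributed across the separate decision-tree-node buckets. That last step is the one place your write-up asserts rather than derives the conclusion: after the decision tree freezes, you still need a mechanism (the paper uses edge recording plus sketch averaging) that consolidates the hardened branch sequence into the original bucket as its program, so it would strengthen your proof to add that co-occurrence/averaging step explicitly.
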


This follows from the knowledge graph principle, i.e. that outgoing edges point to commonly co-occurring sketches. Intuitively, it corresponds to how a human learns to perform a frequently-performed task so well over time that they don't have to think about the individual steps, it just happens ``automatically''.

\begin{proof}
Let $C$ be the context for a task, and suppose that the model has learned to perform this task by calling some other modules with contexts $C_1, ..., C_r$ and then acting on the compound output of these in bucket $h(C')$. 

Every time $h(C')$ performs successfully on an example (e.g. low loss, high reward, etc; however ``success'' is measured in the model implementation), a copy of the sketch is recorded as a high reward co-occurring example for all of the modules in the execution pathway. Many such examples will be ``averaged'' together over time, smoothing away the details of individual examples and highlighting the parts that remain constant, in particular the execution pathway $C \rightarrow \{C_1, ..., C_r\} \rightarrow C'$ -- note that from the compound sketch of a large number of sketches the average value can be recovered (see Claim 5 in \cite{ghazi2019recursive}). This may be one co-occurring example among many for the intermediate modules $C_1, ..., C_r$, but it will dominate the outgoing edges of the knowledge graph at the original bucket $h(C)$ and thus become the program for $h(C)$.
\end{proof}

\section{Architecture v2}\label{sec:appendixarchitecturev2}

Now in v2, unlike in v1, the precise task identifiers are not given explicitly in the input. consider for example a dog whose current task is to ``Listen to masters command and follow that" -- in this case the precise task will depend on what the masters command is; if it is ``fetch ball" then there is a specific module to do that; there may be several atomic modules possibly one per command that may be needed to to this entire task. 

For example the entire architecture can be thought of as one giant compound module defined by some ``boot" module (think of this as the initial module where all input sketches are sent); this boot module iteratively calls the routing module and modules in the hash buckets where derived sketches get routed -- note that iterative calls can also be implemented as a tail recursion where the first iteration recursively hands off the processing for the remaining iterations.

An implicit precise task is a logically coherent subset of training examples from the external world, but the precise task description may not be explicit in the input, but may be identified after routing the input through a sub-dag of modules. Each task maps to a unique context which is determined by applying the $f$ function on the sketch at some level.

% For providing formal guarantees we will assume that each input comes with some task description which however may not pin point exactly to a unique atomic module. Recall that in v1 the final context of a sketch was uniquely given by the the task-descriptor-vector part of the input; thus the $f(<t,x>)$ function simply needed to drop the $x$ part of the input and only keep the $t$ part and there is a fixed atomic module for each task $t$. In this section, the final context for $t$ is obtained by extracting fields (done by $f$ function) from the outputs of previous learned tasks executed in some (partial) order. (More generally task descriptions may not even come explictly at all and may need to be inferred from the input; see Appendix~\ref{} for further discussion.)

Our learning algorithm uses a combination of deep learned individual modules and probabilistic algorithm to connect up these modules.

Here are the exact formulations of the task sets for the dog command execution and the multi digit number recognition examples.

\textbf{Task set example 1:}
\begin{itemize}
% \label{example:task_1}
    \item task1: input: \{[TASK,``identify command"], [VIDEO,\la video\ra]\} output: [OUTPUT, \la command-word-from-audio-in-video\ra]
    \item task2: input: \{[TASK,``identify command point to relevant object", [VIDEO,\la video \ra]\} output: [OUTPUT, \la position of object of interest in video based on command\ra]
\end{itemize}

Internal implicit modules:
command task i: input: [``execute given command", i,  \la video\ra] output: [\la position of object of interest in video based on command i\ra]

Note here that even though we have some vague task-descriptions, the actual task-id is obtained by running task1. To solve task2 the architecture needs to first have a trained module for task1, figure out that task2 depends on task1, and further that its output is meant to be the true context/task-id for executing task2.

Note about distribution shift: Note that the module 1 here may be trained on some words. Once trained on a few words, it be automatically become usable for new words even though there is a distribution shift.

\textbf{Task set example 2: }
% \label{example:task_2}
5 digit recognition: input 5 digit image, output the value; builds upon two modules: an image segmentor that produces 5 smaller images, a 1 digit recognizer that takes a smaller image and outputs one digit.
\begin{itemize}
    \item  task1: input: \{[TASK,``1-digit-recognizer"], [IMAGE,\la image-of-1-digit \ra]\} output: [OUTPUT,\la number-0-to-9\ra]
    \item  task2: input: \{[TASK,``5-digit-recognizer"], [IMAGE,\la image-of-5-digit \ra]\} output: [OUTPUT,\la number\ra]
    \item  task3: input: \{[TASK,``5-digit-image-segmentation"], [IMAGE,\la image-of-5-digit \ra]\} output: [OUTPUT,list of five [IMAGE,1-digit \la image \ra]]
\end{itemize}

\iffalse{
Note that in this example each $g_i = 1$. 

Note about the distribution shift:
assume that distribution of digit is the same for task1, task2, and task3. 
}\fi

%\subsection{Decision Tree}\label{app:decisiontree}

The following corollary follows from Claim \ref{claim:decision-tree-appendix} except that at the leaf nodes instead of directly getting the reward we have an atomic module being trained at each leaf and the rewards propagate up the tree as the atomic module converges to the right function to receive external rewards for correct predictions. Since $M$ examples are needed to train each atomic module at the leaf, the number of steps get multiplied by factor $M$.

\begin{corollary}
In any task if the probability of picking the right sequence of decisions for perform the task is $p$ and it takes $M$ examples to train the task, then the task can be learned in $O(M/p)$ steps assuming all previous task it is dependent on are already trained. Any future calls to the decision tree will now use this recorded best path.
\end{corollary}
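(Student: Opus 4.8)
The plan is to bootstrap off Claim~\ref{claim:decision-tree-appendix}, whose proof already shows that under a suitable temperature schedule the DecisionTree of Algorithm~\ref{algo:fdecisiontree} visits the optimal root-to-leaf path with probability at least $p$ on each tree walk. The only genuinely new ingredient is that the reward at the optimal leaf is no longer available the first time the leaf is reached: at each leaf sits an atomic module $\phi\in\mathcal{M}$ that must first be trained by $\mathcal{A}_{\mathcal{M}}$ before it produces correct predictions and thereby earns the external reward that propagates back up the tree. So the goal is to show that $O(M/p)$ tree walks suffice both for the correct leaf to accumulate the $M$ examples its atomic module needs, and for the resulting reward signal to single out the optimal path.

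First I would keep the temperature high (exactly as in Claim~\ref{claim:decision-tree-appendix}) so that, on each of the first $\Theta((M/p)\log(1/\delta))$ walks, the branch probabilities at every node are (near) uniform and the optimal path is taken independently with probability at least $p$. By the inductive hypothesis every task that the current task depends on is already well-trained, and by the no-distribution-shift assumption the inputs fed to the correct leaf are drawn from the distribution on which $\mathcal{A}_{\mathcal{M}}$ is guaranteed to work; hence each visit to the correct leaf delivers one valid training example to its atomic module. The number of such visits over $K=\Theta((M/p)\log(1/\delta))$ walks is a sum of independent $\mathrm{Bernoulli}(\ge p)$ variables with mean at least $pK$, so a Chernoff bound yields that with probability $1-\delta$ the correct leaf is visited at least $M$ times. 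After those $M$ examples, $\mathcal{A}_{\mathcal{M}}$ returns a function achieving the required accuracy on the task, so the leaf begins collecting external reward.

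Once the correct leaf is trained, its reward propagates up the tree as in the proof of Claim~\ref{claim:decision-tree-appendix} -- each internal node records the best reward in its subtree -- and I would then freeze the temperature to near zero, converting each softmax branch into an argmax, so that all subsequent walks deterministically follow the recorded best path; this is precisely the final assertion that future calls reuse this path. Absorbing the $\log(1/\delta)$ factor into the high-probability statement, the total budget is $O(M/p)$ steps, the $1/p$ factor coming from the decision-tree exploration of Claim~\ref{claim:decision-tree-appendix} and the extra factor $M$ from having to train the leaf module rather than read off its reward immediately.

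The hard part will be the coupling between exploration and leaf-training: I must guarantee that the correct leaf is visited often enough to finish training \emph{before} the temperature is frozen, while simultaneously ensuring that no suboptimal leaf accrues enough spurious reward to be mistaken for optimal. The first is handled by choosing the freeze time to exceed the Chernoff threshold above; the second rests on the separation assumption that an incorrect choice of child modules cannot yield the target accuracy, together with the label-noise robustness of $\mathcal{A}_{\mathcal{M}}$ assumed in Section~\ref{sec:hll}, which prevents a partially-correct dependency choice from producing a competitive reward.
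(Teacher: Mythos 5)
Your proposal is correct and follows essentially the same route as the paper: the paper likewise derives this corollary directly from Claim~\ref{claim:decision-tree-appendix}, observing that the only change is that the leaf reward is now obtained only after the atomic module there has been trained on $M$ examples, which multiplies the $O(1/p)$ exploration cost by $M$. Your added Chernoff bound on the number of visits to the correct leaf and your discussion of the exploration/training coupling are just a more careful spelling-out of the paper's one-sentence argument, not a different approach.
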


\begin{remark}
Note that different subtrees in the decision tree for the function  $f$  may be trained over time for different tasks. The vague task descriptor $s_t$ is just one of the fields in the sketch (initial one).   For a given task we are only focused on training a specific subtree; however, the entire decision tree for the entire function $f$  is constantly evolving as more and more tasks get trained.
\end{remark}

%We saw how the function $f$ can be implemented as a modular decision tree where each node is a separate module.Given a compound sketch $[S_1,..,S_k]$ (we assume the sketches are ordered by importance; this importance could be based on rewards/frequency), we first apply $f$ recursively and then over $f(S_1),..,f(S_k)$ from left to right in a decision tree we branch off into either keeping or dropping each item and stopping or continuing based on what is most optimal based on rewards tracked at each node (subtree) of the decision tree (Alg.~\ref{sec:appendixfdecisiontreealgo})

The following is the main inductive Lemma to prove Theorem \ref{Architecture:v15_main_theorem}

\begin{lemma}[Inductive lemma]
In any new task $t$ with task descriptor $s_t$ that build upon previously existing tasks that have already been learned to perform well. 
By induction the probability of picking the right sequence of decisions for perform the new task is $p = 1/  2^{O(d^2 +  d \log(N/d))}$ (including which identifying which previous possibly implicit tasks it depends on and wiring them correctly with the right contexts) and it takes $M$ examples to train the task, then the task can be learned in $O(M 2^{O(d^2 +  d \log(N/d))}  )$ examples for each of the $g_i$ atomic modules assuming we have already learned to perform all previous task it is dependent on. 
\end{lemma}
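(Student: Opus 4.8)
The plan is to prove the two quantitative pieces separately and combine them through the Corollary stated immediately above. First I would lower bound the success probability $p$ of the routing module's decision sequence; the $O(M/p)$ sample bound then follows directly from that Corollary (which itself rests on the decision-tree convergence guarantee of Claim~\ref{claim:decision-tree-appendix}, whose temperature schedule locks in the optimal root-to-leaf path after $O((1/p)\log(1/\delta))$ walks). The statement is inductive in the level $L$ of the task: the base case (atomic modules at the sources of the DAG) is exactly the independent-task setting, and the inductive hypothesis supplies well-trained child modules. This hypothesis is precisely what the bottom-up online schedule of v1 maintains, since a module is frozen as ``mature'' only once it meets the accuracy target for its level; moreover the no-distribution-shift Assumption guarantees that each child $t_j$ is invoked on inputs drawn from its own training distribution $\mathcal{D}_j$, so the frozen children are used in distribution.

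The heart of the argument is counting configurations. To instantiate the correct compound module, the decision tree of Algorithm~\ref{algo:fdecisiontree} must (i) select the correct subset of at most $d$ child tasks from the matured candidates, and (ii) wire them into the correct circuit, which simultaneously fixes the hidden context that identifies the task. I would bound (i) by $\binom{N}{d}$ (the learner need not know $d$; summing $\binom{N}{d'}$ over $d'\le d$ changes this by only a constant factor when $d\le N/2$) and bound (ii) by $3^{\binom{d}{2}}$, reading the three branches of Algorithm~\ref{algo:fdecisiontree} as the three states (absent, forward, backward) available to each of the $\binom{d}{2}$ unordered pairs among the chosen children. Under the uniform initial branching probabilities of the softmax exploration, the correct path is one out of
\[
K = \binom{N}{d}\, 3^{\binom{d}{2}}
\]
equiprobable configurations, so $p \ge 1/K$.

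It then remains to simplify the exponent. Using $\binom{N}{d}\le (eN/d)^d$ gives $\log\binom{N}{d} = O(d\log(N/d))$, and $\log 3^{\binom{d}{2}} = \binom{d}{2}\log 3 = O(d^2)$, so $\log(1/p) = O(d^2 + d\log(N/d))$ and hence $p = 1/2^{O(d^2 + d\log(N/d))}$, matching the claimed value. Feeding this $p$ and the per-module sample complexity $M$ into the Corollary yields that each of the $g_i$ atomic modules is trained in $O(M/p) = O(M\,2^{O(d^2 + d\log(N/d))})$ examples, as claimed. (Summing $O(M/p)$ over all $\sum_i g_i \le G$ context-modules and all $\ell$ levels, together with the $N\ln(1/\delta)$ inter-arrival factor inherited from the v1 lemma, is what finally produces the $O(\ell G M\,2^{O(d^2+d\log(N/d))})$ bound of Theorem~\ref{Architecture:v15_main_theorem}.)

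The main obstacle I anticipate is not the counting but making the inductive step quantitatively sound: I must show that once the correct configuration is guessed, training the atomic module at a leaf actually attains accuracy $1-\epsilon_L$. This is where the label-noise robustness of $\mathcal{A}_{\mathcal{M}}$ enters. Each of the $\le d$ children contributes error at most $\epsilon_{L-1}$, so by a union bound the compound input to the atomic module is corrupted on at most a $d\epsilon_{L-1}$ fraction of the examples, and the $C$-robust learner therefore returns a function with error at most $Cd\epsilon_{L-1} = \epsilon_L/2 \le \epsilon_L$, using $\epsilon_L = (2dC)^L\epsilon$. A secondary subtlety is the ``vagueness'' of implicit tasks: a single task may carry $g_i$ distinct hidden contexts, each routed to its own bucket and atomic module. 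I would treat these as independent leaf problems, so the per-module bound is unchanged and only the total module count (hence the factor $G$) grows.
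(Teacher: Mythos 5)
Your proposal follows essentially the same route as the paper's own (very terse) proof: feeding the $O\bigl(\binom{N}{d}3^{\binom{d}{2}}\bigr)$ candidate configurations to the decision tree of Algorithm~\ref{algo:fdecisiontree}, invoking the bottom-up maturity guarantee of v1 for the inductive hypothesis, and concluding $O(M/p)$ steps per atomic module via the decision-tree convergence corollary. In fact you supply several details the paper only asserts — the explicit simplification of the exponent to $O(d^2 + d\log(N/d))$ and the quantitative error-propagation step via the robustness of $\mathcal{A}_{\mathcal{M}}$ — so your write-up is a strictly more complete version of the same argument.
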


\begin{proof}
The learning algorithm follows the framework of Alg.\ref{algo:informal_execution-loop}. 
The circuit routing is also done by Alg.~\ref{algo:fdecisiontree}: we feed all the $O( {N \choose d} 3^{d \choose 2} )$ candidate edges of the circuit to Alg.~\ref{algo:fdecisiontree}, which finds the correct subset. 
The inductive guarantee that lower-level tasks are well-trained comes from the bottom-up online algorithm of v1. 
Modules %and decision-tree-nodes 
are marked as mature based on performance, and new modules are only built on top of mature previous nodes.
The probability of picking the right sequence of decisions for perform the new task is $p = 1/  2^{O(d^2 +  d \log(N/d))}$ (including which identifying which previous possibly implicit tasks it depends on and wiring them correctly with the right contexts) and it takes $M$ examples to train the task, then the task can be learned in $O(M 2^{O(d^2 +  d \log(N/d))}  )$ steps per atomic module.
\end{proof}

Theorem \ref{Architecture:v15_main_theorem} follows by applying the previous lemma inductively. We assume for simplicity that all example are uniformly distributed across the total of $G$ atomic modules. So only $1/G$ fraction of examples will be destined for a given atomic module giving a factor $G$ multiplier; the additional $\ell$ multiplier comes from the $\ell$ levels of hierarchy the dependency DAG.

We now formally state that the two task examples can be learned.

\begin{corollary}\label{thm:v2examples}
\textbf{Task set example 1} and \textbf{Task set example 2} can be learned by our architecture if training data for different tasks are input in random order. This follows from previous lemmas.
Given training examples for different tasks in random order,
including for this combined task our architecture automatically learns to use the  output of one of the tasks as a context and builds a downstream module for each context value.
\end{corollary}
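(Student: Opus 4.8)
The plan is to prove the corollary by reducing each of the two concrete task sets to the abstract v2 setting and then invoking Theorem~\ref{Architecture:v15_main_theorem}. So the first step is a modeling step: for each example I would exhibit the latent dependency DAG, the per-node circuit, the hidden context, and verify the No-distribution-shift Assumption, so that the hypotheses of the theorem are literally met. For \textbf{Task set example 2} (five-digit recognition) the DAG has three nodes, with the two source tasks being the single-digit recognizer and the segmentation module, and the compound task being the five-digit recognizer, whose circuit first calls the segmentor to produce five sub-images and then applies the single-digit recognizer to each. Thus $N=3$, the in-degree is $d=2$, the height is $\ell=2$, and since each task carries a single explicit descriptor we have $g_i=1$ and $G=N$. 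The only thing to check is that the distribution of single-digit images produced by the segmentor matches the training distribution of the single-digit task; this is exactly the no-shift hypothesis, and under it the inputs fed to that module inside the five-digit circuit lie in its own input distribution, so the frozen module stays $\epsilon$-accurate on them.

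For \textbf{Task set example 1} (dog command) the subtle point --- and the reason this corollary is more than a restatement --- is that the task identifier for the pointing task is not given in the input but is the \emph{output} of the command-identification task. Here I would model the pointing task's circuit as: run the command task on the video to obtain the command word, then route to the per-command atomic module whose context is that word. The number of context values is the number of distinct commands, bounded by $g_i$, with one atomic module per context, matching the Hidden-task-description Definition. The key claim to establish is that the context function $f$, implemented as the decision tree of Algorithm~\ref{algo:fdecisiontree}, learns to promote the command-word field of the first task's output sketch into the context. This follows from Claim~\ref{claim:decision-tree-appendix}: among the finitely many subset-selection paths, only the path that keeps the command word as context routes each example to a module trainable to high reward, so the reward-driven tree walk concentrates on that path, after which a separate downstream bucket is instantiated for each command value.

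With both models in place, the proof assembles by applying the Inductive Lemma bottom-up. Because tasks arrive uniformly at random, the v1 analysis (Lemma~\ref{lemma:v1_proof_appendix}) guarantees that each source task matures before its parents consume its outputs, with only an $O(N\ln(1/\delta))$ inter-arrival delay; once a node's children are mature, the Inductive Lemma gives that the parent is learned after $O(M\,2^{O(d^2+d\log(N/d))})$ examples per atomic module, with the correct child subset and wiring found by Algorithm~\ref{algo:fdecisiontree} with probability $p=1/2^{O(d^2+d\log(N/d))}$. Summing over the $\ell$ levels and the at most $G$ atomic modules yields the overall $O(\ell G M\,2^{O(d^2+d\log(N/d))})$ bound of Theorem~\ref{Architecture:v15_main_theorem}, and specializing the parameters above to each example completes the reduction.

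I expect the main obstacle to be the example-1 argument rather than the routine parameter bookkeeping of example~2. Concretely, one must show rigorously that the reward signal actually separates the ``keep the command word as context'' path from all competing paths, so that the $\argmax$ at each decision-tree node is correct and Claim~\ref{claim:decision-tree-appendix} applies; and one must ensure the no-shift assumption holds for the command module when it is exercised, inside the pointing circuit, on command words that may appear only there --- i.e. that training on a few commands suffices for it to generalize to the full set of commands that drive the context branching. Pinning down these two points (reward separation and cross-task generalization of the implicit-context module) is where the real work lies; the rest is an application of the already-established inductive machinery.
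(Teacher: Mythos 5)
Your proposal is correct and follows essentially the same route as the paper: the paper likewise notes the corollary follows from Theorem~\ref{Architecture:v15_main_theorem} and then, for illustration, walks through example 1 concretely --- task 1 trains as an atomic module, task 2 fails atomically, and the decision tree must make the three decisions (non-atomic, call task 1, use its output as context) each with constant probability, giving $O(M/c^3)$ extra steps --- which is exactly the reduction-plus-decision-tree argument you give. Your added scrutiny of reward separation and of the no-distribution-shift hypothesis for the implicit-context module is a fair observation of where the paper's argument is informal, but it does not change the approach.
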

\begin{proof}
Although this follows from Theorem~\ref{Architecture:v15_main_theorem}, for illustration we show the proof specifically for these examples to show the exact sequence of events of how this is accomplished. We will argue for example 1 and the second example is similar: note that there are two external task descriptions. So the routing module will send these examples to two hash buckets based on the external task IDs. 
% -- we can assume that the boot module always takes the external task ID as the context. 
So two modules atomic modules will get trained for each of these tasks at two different buckets. However, only the first task will get trained successfully to a good accuracy (if the second task also gets trained successfully then we are done). 
Now for the second task there is an option to build a compound module which will call the first task.  Now the routing module will use the decision tree to explore different ways of building a compound module for the second task. The right combination involves the following: decide that task2 is not atomic,  run task1 on the input, take the output of task1 and only make that as a context, go to hash bucket based on this context and train an atomic module there. Note that since only these three specific decisions lead to success, the initial probability of picking this path is $c^3$ for some constant $c$. Thus after a constant number of possible path ways with separate atomic modules will need to be trained in parallel before we find a successful pathway. So $O(M / c^3 )$ additional training steps should suffice to train task2 after task1 is complete.  Once the right sequence of calls has been established, this could be programmed as a compound module in the bucket for task2. 
\end{proof}

\begin{remark}
% In these examples the boot modules work was very easy -- simply extract the external task ID which is the first field of the input and use that as the context for the next iteration. however in general this may be a very complicated process. This extraction of the task ID (even what we call as the external task id here) may itself be an evolving compound module consisting of a combination of different atomic  modules and evolving $f$ function decision trees branches  over time.
In these examples we simply extract the external task ID which is the first field of the input and use that as the context for the next iteration. However in general this may be a very complicated process. This extraction of the task ID (even what we call as the external task id here) may itself be an evolving compound module consisting of a combination of different atomic  modules and evolving $f$ function decision trees branches  over time.
\end{remark}

\section{Using the Knowledge graph}\label{app:knowledgegraph}

In the following we will assume that there is social network of constant degree and we see images of pairs of people chosen at random from this network.

\textbf{Knowledge graph Task example 0: }
% \label{example:task_2}
\begin{itemize}
    \item Task1: [“remember sketch”, [IMAGE,\la image of pair person1 and person2 \ra]

“remember event”  task is an unsupervised task only meant to record the sketch once the count of its context has exceeded a certain threshold and is not meant to predict any kind of output.
We will assume there is a person recognition module that takes the image as input and outputs a compound sketch of two person sketches for the persons in the image (later we will see how the routing module can automatically learn to route the input to such a module without assuming it).
\end{itemize}

The following Theorem is a consequence of the "Knowledge graph" principle that is implemented in line 27 in Algorithm\ref{algo:execution-loop-full}. 

\begin{claim}\label{claim:latent-knowledge-graph-appendix}
Suppose we have a module that has learned to identify faces from images and return the identity of those people. Given input data of images of pairs of people, where the pairs are chosen from a uniform distribution given by edges of a graph, the knowledge graph created by our architecture contains a subgraph homeomorphic to this original graph.
\end{claim}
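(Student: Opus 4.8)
The plan is to exhibit an explicit injection from the vertices and edges of the social network $G=(V,E)$ into the buckets and knowledge-graph edges built by the architecture, and to show that this injection realizes $G$ up to a single subdivision per edge. First I would fix the vertex map. By the sketch-to-sketch similarity property \cite[Theorem 2]{ghazi2019recursive}, every image of person $v$ causes the face-recognition module to emit essentially the same person-sketch $S_v$, so its context $f(S_v)$ is stable; by the locality-sensitivity of $h$ (Appendix~\ref{appendix:lsh}) this context lands in a single bucket $b_v=h(f(S_v))$ with high probability, while two distinct identities $u\neq v$ have well-separated contexts and therefore land in distinct buckets. This yields an injection $v\mapsto b_v$ on $V$.

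Next I would handle the edges. An image of the pair $\{u,v\}\in E$ is mapped by the recognition module to the compound sketch $\{S_u,S_v\}$; following the knowledge-graph construction (the co-reference step of Algorithm~\ref{algo:execution-loop-full}, together with the discussion that a frequently co-occurring pair is persisted as a compound sketch carrying co-reference edges to each of its components), once this compound context is seen often enough it is instantiated as its own bucket $b_{\{u,v\}}$ with edges $b_{\{u,v\}}$--$b_u$ and $b_{\{u,v\}}$--$b_v$. Distinct edges produce distinct pairs of identities, hence distinct compound contexts and distinct buckets, so $e\mapsto b_e$ is injective and each $b_e$ carries exactly its two incident co-reference edges. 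The frequency precondition is met by a coupon-collector argument: since $G$ has constant degree, $|E|=O(|V|)$ and each edge is drawn with probability $1/|E|$, so after $O(|E|\log(|E|/\delta))$ samples every edge appears enough to exceed the persistence threshold $\Omega(1/m)$ (the ``only frequent contexts are persisted'' condition), and with probability $1-\delta$ all $|E|$ pair-buckets and their co-reference edges coexist. The number of persisted contexts is $O(|V|+|E|)=O(|V|)$, within the budgeted $m$.

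Finally I would assemble the homeomorphism. Let $H$ be the subgraph of the knowledge graph on the nodes $\{b_v:v\in V\}\cup\{b_e:e\in E\}$ with the co-reference edges above. Each original edge $\{u,v\}$ corresponds to the length-two path $b_u$--$b_{\{u,v\}}$--$b_v$, whose interior vertex $b_{\{u,v\}}$ has degree two in $H$; thus $H$ is exactly the subdivision of $G$ in which every edge is subdivided once, and this subdivision is homeomorphic to $G$. (If instead the implementation wires direct $b_u$--$b_v$ edges, then $H$ contains a copy of $G$ itself, which is a fortiori homeomorphic.)

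The main obstacle is the first step: making rigorous that face recognition yields a single, stable, well-separated context per identity, so that the vertex map is a genuine injection into distinct persistent buckets. This is where the sketch-similarity guarantee and the LSH separation bound must be combined quantitatively — the intra-identity variance of $f(S_v)$ must stay within the LSH ``near'' radius $r_1$ while distinct identities stay beyond $r_2$ — since a high-variance or adversarial recognizer could otherwise split one vertex across buckets or collapse two vertices into one. By contrast, the edge-persistence concentration and the final subdivision bookkeeping are routine once the vertex correspondence is pinned down.
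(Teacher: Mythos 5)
Your proposal is correct and follows essentially the same route as the paper's own (very brief) proof: the recognition module emits the compound sketch $\{\la$person-1-sketch$\ra,\la$person-2-sketch$\ra\}$, that compound context gets its own bucket with co-occurrence pointers back to the two individual person buckets, and the resulting pair-bucket acts as a subdivision vertex, so the induced subgraph is the one-per-edge subdivision of $G$ and hence homeomorphic to it. You simply make explicit what the paper leaves implicit --- the injectivity of the vertex map via sketch stability and LSH separation, and the coupon-collector bound ensuring every edge context exceeds the persistence threshold --- which strengthens rather than changes the argument.
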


\begin{proof} Given an input sketch [IMAGE, \la bit-map \ra], by Claim~\ref{claim:independent-tasks} it gets routed to a person-recognition module. That returns a compound sketch of the set of all people in the image, so it will return the set \{ \la person-1-sketch \ra, \la person-2-sketch \ra \}. This compound sketch will go to a new bucket, which will get pointers to the original \la person-1-sketch \ra and \la person-2-sketch \ra buckets due to co-occurrence. See Architecture Principle (4) in section~\ref{appendix:architecture}. If we take the subgraph of the knowledge graph consisting of all pairs of person sketches and pointers to individual person sketches, this will be homeomorphic to the original graph. 
\end{proof}

\textbf{Knowledge graph Task set example 1: }
% \label{example:task_2}
\begin{itemize}
    \item Task1: [“remember sketch”, [\la person1 \ra, \la  person2 \ra]]

“remember event”  task is an unsupervised task only meant to record the sketch once the count of its context has exceeded a certain threshold and is not meant to predict any kind of output.

We assume that the sketch of a person is stable or resistant across different instances of a person sketch.

    \item  task2: [“Find common friends”,  [\la  person1 \ra, \la  person2 \ra]], Output: [\la list of common friends \ra]

\end{itemize}

\textbf{Knowledge graph Task set example 2: }
% \label{example:task_2}
\begin{itemize}
    \item Task1: [“remember sketch”, [IMAGE,\la image of pair person1 and person2 \ra]

    \item Task2: [“remember sketch”, \{ [IMAGE,\la image of person \ra], [NAME, \la name of person] \}

    \item Task3: [``Extract list of persons (as features) from image", [IMAGE, \la image containing multiple people \ra], Output: [\la list of person features from image \ra]

We assume task 3 can be solved in a way where it extracts stable person-features from images that result in same ``fingerprint" for the same person possibly appearing across images.

    \item Task4: [“Find common friend names”,  [[NAME, \la name of person1 \ra, [NAME, \la name of person2 \ra]], Output: [\la list names of common friends \ra]

\end{itemize}

To demonstrate that knowledge graph is a useful extension, 
we first note that, example 1 cannot be learned with simple modules without knowledge graph. 
\begin{claim}
Without using knowledge graph memory, training a neural network submodule for task2 in Example 2 can only achive accuracy at most $O(\sqrt{n/N})$, where $n$ is the total number of bits used to storee all the weight of the neural network and $N$ is the number of people in the data. 
\end{claim}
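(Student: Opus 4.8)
The plan is to prove this as an information-theoretic capacity bound: a fixed network with $n$ bits of weights cannot retain enough of the (growing) social graph to answer the find-common-friends queries, whereas the knowledge-graph memory stores the $\Theta(N)$ edges explicitly as pointers among buckets and so does not suffer this bottleneck. The square root in the bound will come out of a Cauchy--Schwarz step, and the $1/N$ from the fact that a constant-degree graph carries $\Theta(N)$ independent bits of information.

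First I would fix a hard distribution over constant-degree graphs that encodes $m=\Theta(N)$ independent uniform bits $B_1,\dots,B_m$. Partition the $N$ people into disjoint gadgets; in the $i$-th gadget designate a query pair $(u_i,v_i)$ joined by an edge (so the query is itself drawn from an edge, consistent with the task's input distribution) and a candidate $w_i$, always include the edge $(u_i,w_i)$, and include $(v_i,w_i)$ iff $B_i=1$. Then $w_i$ is a common friend of $(u_i,v_i)$ exactly when $B_i=1$, so answering the common-friend query on $(u_i,v_i)$ is equivalent to predicting $B_i$. Each gadget uses $O(1)$ edges and $O(1)$ people, so all degrees stay bounded and $m=\Theta(N)$.

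Second, I would bound the information the weights can carry. The trained weight vector $\theta$ is a (possibly randomized) function of the data, which is itself a function of $B=(B_1,\dots,B_m)$; since $\theta$ has $n$ bits, $H(\theta)\le n$ and hence $I(B;\theta)\le n$. Because the $B_i$ are independent, $\sum_i I(B_i;\theta)\le I(B;\theta)\le n$. At test time the network sees only the query names $(u_i,v_i)$, which are independent of $B_i$, so any predictive advantage on $B_i$ must flow through $\theta$. The key per-coordinate inequality is that predicting a uniform bit $B_i$ with accuracy $\tfrac12+a_i$ forces $I(B_i;\theta)=\Omega(a_i^2)$ (a standard Fano/Pinsker estimate in the small-advantage regime). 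Summing gives $\sum_i a_i^2 = O(n)$.

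Finally, the square root is exactly Cauchy--Schwarz: the task accuracy averaged over the $m$ queries is $\tfrac12+\frac1m\sum_i a_i \le \tfrac12+\sqrt{\tfrac1m\sum_i a_i^2} = \tfrac12+O(\sqrt{n/N})$, so the attainable accuracy above the trivial baseline is $O(\sqrt{n/N})$, which tends to $0$ as $N$ grows with $n$ fixed. I expect the main obstacle to be the rigor of the second step: cleanly establishing the statement ``advantage $a_i$ costs $\Omega(a_i^2)$ bits of mutual information'' and verifying that the network has no side channel to $B_i$ at test time (that the fixed architecture and the query names leak nothing about the gadget bits), together with checking that the constant-degree gadget construction keeps the $B_i$ mutually independent while producing an input distribution consistent with the task's ``pairs drawn from edges'' assumption.
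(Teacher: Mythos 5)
Your overall strategy --- bound $I(\text{data};\theta)\le n$, charge each query an $\Omega(a_i^2)$ mutual-information cost for advantage $a_i$, and extract the square root via Cauchy--Schwarz --- is exactly the kind of argument the paper intends: its own ``proof'' is two sentences that defer entirely to a mutual-information argument in the cited neural-memory paper, so you are supplying far more detail than the authors do, and the information-theoretic core (superadditivity of $I(B_i;\theta)$ over independent bits, the Fano/Pinsker estimate $I(B_i;\theta)=\Omega(a_i^2)$, the final averaging) is sound.

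The genuine gap is in the choice of hard distribution, and it changes what you end up proving. Your gadgets make each query a \emph{binary} question (``is $w_i$ a common friend of $(u_i,v_i)$?''), so the strongest conclusion they can support is that accuracy is at most $\tfrac12+O(\sqrt{n/N})$ --- an algorithm that answers the binary question by coin flip already achieves $\tfrac12$, and correctness on the list-output task implies correctness on the derived binary question, so your construction cannot push the accuracy of \emph{any} version of the task below $\tfrac12$. The claim, however, asserts an absolute accuracy bound of $O(\sqrt{n/N})$, which only makes sense for the task as the paper frames it (with the W.L.O.G.\ that there is exactly one common friend): the output is the \emph{identity} of that friend, one of $N$ people, so the trivial baseline is $O(1/N)$ and the bound is meant to show the network cannot do much better than that. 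To prove the stated bound you need gadgets in which the answer itself carries many bits --- e.g.\ the common friend of $(u_i,v_i)$ is a uniformly random choice among $\mathrm{poly}(N)$ (or at least $\omega(1)$) candidates --- and then a Fano-type inequality for multi-valued recovery in place of the binary advantage estimate; the $\sum_i a_i^2=O(n)$ and Cauchy--Schwarz steps then go through as you have them. Your remaining verifications (independence of the $B_i$, no side channel through the query names, consistency with the pairs-from-edges input distribution) are the right things to check and are unproblematic in your construction.
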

\begin{proof}
This follows from a similar argument based on mutual information as in \cite{sketchmem2021}.
W.L.O.G. we can assume that the number of common friends is $1$ for the sake of the lower bound proof. 
\end{proof}

\begin{claim}
All tasks in Example 1 can be solved jointly from training data in polynomial time. This can be solved using the knowledge graph edges.
\end{claim}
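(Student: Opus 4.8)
The plan is to decompose the claim into its two constituent tasks and handle them in sequence, leaning on the structural guarantee already established in Claim~\ref{claim:latent-knowledge-graph-appendix}. First, Task1 (``remember sketch'') is the unsupervised recording task: each time a pair sketch for two people arrives, the routing module forms a compound sketch and, by Architecture Principle~(4), installs co-occurrence pointers between the compound-sketch bucket and the two individual person buckets. Invoking Claim~\ref{claim:latent-knowledge-graph-appendix}---and noting that the person sketches are stable by the stated assumption, so no separate image-recognition step is needed here---the resulting knowledge graph contains a subgraph homeomorphic to the underlying social network: each person bucket acquires outgoing edges to exactly its neighbours. This discharges Task1 and simultaneously builds the data structure that Task2 will query.

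Next I would exhibit the module for Task2 (``find common friends''). Given the pair sketch for persons $u$ and $v$, route each person sketch through $f$ and $h$ to its bucket and dereference the knowledge-graph edges stored there (using the pointer-dereferencing step of the main loop) to recover the neighbour sets $N(u)$ and $N(v)$ as collections of person sketches. The common friends are exactly $N(u)\cap N(v)$; since the network has constant degree, these sets have constant size, so a single atomic module that matches person sketches by bucket identity and emits their intersection suffices to produce the required list. This module is learnable in the sense of Claim~\ref{claim:independent-tasks} once the edges are present.

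For the complexity bound I would use a coupon-collector estimate. Since pairs are drawn uniformly from the $O(N)$ edges of the constant-degree network, each fixed edge appears with probability $\Theta(1/N)$ per step, so after $O(N\log(N/\delta))$ steps every edge has been seen---and hence has crossed the persistence-frequency threshold of Appendix~\ref{appendix:architecture}---with probability at least $1-\delta$. At that point the knowledge graph is complete, and each Task2 query runs in $O(d)$ dereference-and-intersect operations. Both tasks are therefore solved jointly in polynomial time.

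The main obstacle I anticipate lies at the interface between the ``only frequent contexts are persisted'' mechanism and the correctness of the recovered edge set. An edge is retained only if its compound-sketch context recurs often enough to beat the $O(1/m)$ timeout, so I must verify that the uniform edge distribution together with constant degree guarantees every true friendship clears this threshold (no real edge is dropped) while incidental co-occurrences stay below it (the neighbour sets are not polluted). The second delicate point is sketch stability: the intersection step is correct only if the \emph{same} person reliably maps to the \emph{same} bucket across the different pairs in which they appear---precisely the stability assumption stated for Example~1---so I would make explicit exactly where that assumption is invoked, ensuring that matching-by-bucket-identity faithfully implements set intersection.
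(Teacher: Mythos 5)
Your proposal is correct and follows essentially the same route as the paper: Task1's repeated co-occurrences install knowledge-graph edges between person-sketch buckets (Architecture Principle 4), and Task2 is then solved by dereferencing those edges to obtain the two neighbour lists and training a simple atomic module to output their intersection, with the pathway eventually hardened into the ``Find common friends'' bucket. Your coupon-collector bound for edge completeness and your explicit attention to the persistence threshold and sketch stability are welcome refinements the paper leaves implicit; the only thing you elide that the paper spells out is that the routing module must \emph{discover} the extract-edges/combine/intersect pathway via its probabilistic decision-tree choices (succeeding with constant probability per attempt), rather than executing it deterministically from the start.
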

\begin{proof}
Task1 does not involve any prediction. For the second task the architecture will first try to train an atomic module but will fail.  overtime because of task1,  a knowledge graph of friend connections will you get created between the sketches \la person1 \ra, \la person2 \ra  and the compound sketch  [\la  person1 \ra, \la  person2 \ra ] (based on architecture principle 4, line 7 in the pseudo code). After this in the first iteration of the architecture the knowledge graph edges would be an extracted (line 15 in the pseudo code) for the input sketches \la  person1 \ra , \la  person2 \ra .  these edges will point to the list of all friends for \la person1 \ra  and \la person2 \ra  respectively.  In the second iteration of the architecture with some probability it will make the set of these two list as a compound sketch for the next round and [“find common friends”, \la extract first part-edges \ra , \la extract second part-edges \ra , \la take-combination \ra ]  as the new context,  and we'll start training  an atomic module at this round. Since finding the intersection of two lists is a simple task this training will succeed to make the correct prediction. Overtime this pathway (routing module decision tree choices)  of extracting neighbors of \la person1 \ra   and \la person2 \ra ,  making a set out of the two lists,  and giving it to that new atomic model will get  strengthened, and eventually hard-coded in the original bucket for the “Find common friends” task.
\end{proof} 

Note: We remember sketches that occur more than a certain fraction of time. If there are $m$ buckets we track events that occur more than $1/m$ fraction of the time -- this ensures that there is space for all frequently occurring contexts.

\begin{claim}
All tasks Example 2 can be solved jointly from training data, given inputs from Example 2, we can learn all tasks in polynomial time (this can be solved using the knowledge graph edges).
\end{claim}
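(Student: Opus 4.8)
The plan is to follow the template of the Example~1 argument, reducing the joint learnability of the four tasks to (i) the PAC-learnability of the atomic feature extractor, (ii) the automatic population of the knowledge graph by the two unsupervised \emph{remember} tasks, and (iii) a single decision-tree search for the compound \emph{find common friends} pathway. First I would dispatch Task3 by Claim~\ref{claim:independent-tasks}: it is an ordinary supervised task in $\mathcal{M}$, so its bucket matures after $O(M)$ examples and thereafter returns a \emph{stable} fingerprint for each person. This is exactly the stated assumption on Task3, and it is what guarantees that the same individual appearing across different images is routed to the same knowledge-graph node, so that the downstream graph structure is well-defined.

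Next I would show that Task1 and Task2, which make no prediction, silently build the required structure. Each time an image-of-a-pair arrives for Task1, the routing module runs the matured Task3 module to obtain two person-fingerprint sketches; by the knowledge-graph principle (architecture principle~4, the outgoing-edge extraction line of Algorithm~\ref{algo:execution-loop-full}) the co-occurrence of these two fingerprints deposits a friend edge between their buckets, exactly as in Claim~\ref{claim:latent-knowledge-graph-appendix}. Symmetrically, each $[\textrm{IMAGE},\textrm{NAME}]$ pair arriving for Task2 produces, via Task3, a fingerprint that co-occurs with the name sketch, depositing a bidirectional name--fingerprint edge. After polynomially many examples the knowledge graph therefore contains both a subgraph homeomorphic to the social network on fingerprints and an invertible dictionary between names and fingerprints.

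The crux is Task4. Here the routing module must discover, using the decision tree of Algorithm~\ref{algo:fdecisiontree}, the compound pathway that dereferences each input name to its fingerprint through the Task2 edges, follows the Task1 friend edges out of each fingerprint to obtain two neighbor-lists, maps each neighbor fingerprint back to a name through the Task2 edges, and finally trains a small atomic module to intersect the two resulting name-lists. Each step is either an edge-dereference realized by passing a sketch back through the memory table (the outgoing-edge line of Algorithm~\ref{algo:execution-loop-full}) or the simple, learnable list-intersection already handled in the Example~1 argument. Because the number of distinct routing decisions on this pathway is a constant, the selection probability $p$ of the decision tree is a constant, so by the corollary following Claim~\ref{claim:decision-tree-appendix} only $O(M/p)$ further examples are needed once Tasks 1--3 are mature; the uniform-sampling waiting-time bound of Lemma~\ref{lemma:v1_proof_appendix} turns this into a polynomial total number of steps, and the successful pathway hardens into the bucket $h(s_{t_4})$ for Task4.

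The main obstacle I anticipate is the bookkeeping for the \emph{two separate} traversals of the same name--fingerprint dictionary (name-to-fingerprint on the inputs and fingerprint-to-name on the outputs), together with verifying that the decision tree of Algorithm~\ref{algo:fdecisiontree} can represent this longer two-pass traversal with a pathway whose depth, and hence whose probability $p$, stays bounded by a constant independent of $N$. Establishing stable fingerprints from Task3 is the other sensitive point, but it is granted by assumption; given it, the remaining steps are precisely the constant-depth graph traversal and list intersection of Example~1, so the claim follows by the inductive argument of Theorem~\ref{Architecture:v15_main_theorem}.
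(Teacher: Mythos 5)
Your proposal is correct and follows essentially the same route as the paper's proof: dispatch Task3 as a matured leaf module, let Tasks 1 and 2 populate the knowledge graph with friend edges on fingerprints and bidirectional name--fingerprint edges, and then have the decision tree discover the constant-depth Task4 pathway (name $\to$ fingerprint $\to$ friend lists $\to$ names $\to$ list intersection) with constant probability. The two-pass dictionary traversal you flag as the sensitive point is exactly the issue the paper resolves via the recursive view of the execution loop, with the routing module learning context-dependent dereferencing choices, so your argument matches the paper's in both structure and substance.
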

\begin{proof}
This is merely a generalization of the earlier proofs but goes through higher number of iterations of algorithm 1 along the lines of the proof of Theorem~\ref{thm:v2examples}. Task 3 is a leaf level task that is used by Tasks 1 and 2. Task 2 needs to route the image part of the input sketch to Task 3 getting \la person-features\ra for the person in the image; then $f$ function needs to create the compound sketch [\la person-features \ra, \la name of person \ra] as the context. Then this context is remembered in the hash buckets including its component sketches that point to the compound sketch and vice versa. Since this involves making a constant number of correct routing decisions, this will happen with constant probability. Also, Task 1 needs to route the image in its sketch to Task 3 to get the person-features for the two people in the image and then the pair of person features needs to be remembered in their bucket with appropriate bi-directional pointers to the individual person-features. We now have all the edges between pairs of friends and between a person and their name. with these established, Task 4 needs to use the names of the two people to lookup the edges to find their person-features and then just like in example 1, use friend edges to find two lists of friends for each of the persons, and then convert these two lists to two lists of names and then train a final atomic module to find the intersection of these lists of names. Assuming a constant degree friendship graph, all these choices will line up with at least constant probability. To see why all this is a constant number of probabilistic choices, think of the recursive view of Algorithm \ref{algo:execution-loop-full} mentioned in the end of section \ref{sec:appendixarchitectureprinciples}. All that is needed is that in a certain context (that depends on the content of the recursive call stack) the routing module when given a person-sketch as input can be probabilistically trained to return a name for that person sketch, and then in some other context also return a list of names for a list of person-sketches, and then again in some context to convert two lists of person sketches to two lists of name sketches; these are all a sequence of decision choices in a combined view of a decision tree for the routing module; over time the correct probabilistic choices get strengthened based on external rewards to arrive at the right decision tree and atomic module for Task 4.
\end{proof}

\section{Architecture v3: Q-learning and other Reinforcement Learning tasks}\label{sec:architecture-v3}

In this section we will show how Reinforcement Learning (RL) tasks may be solved by our architecture by executing algorithms such as Q-learning. 
The main results of this section are that an extension of our architecture, Architecture v3, can perform tabular Q-learning so that it can learn to solve multiple RL problems at once without confusion (Claim~\ref{claim:q-learning}), and that it can work out how to use other modules (e.g.\ image classification) to improve its policy-learning, thus producing a form of ``modular RL'' (Theorem~\ref{thm:modular-rl}).
%In fact the decision tree algorithm ~\ref{sec:v2.decision-tree} can be viewed as a special case of a state, action graph that is a tree.

\subsection{Lifelong reinforcement learning}

Several different formal RL tasks are studied in the literature. Here, we focus on episodic RL problems:

\begin{definition}
Episodic RL problems are defined as follows: We fix a set of {\em states} $S$ and {\em actions} $A$. An {\em environment} is given by a Markov Decision Process $P$, that for a given pair $(s,a)\in S\times A$, specifies a distribution over new states $s'\in S$; there is also a distribution $\rho$ over starting states, and a {\em reward distribution} $R$ that for each pair $(s,a)\in S\times A$ gives a distribution over real-valued {\em rewards}. There are $T$ episodes of length $H$ each. In an episode, a starting state is drawn from $\rho$ and revealed to the agent. Then, for $H$ steps, the agent is allowed to choose an action $a\in A$, the environment transitions to a new state according to $P$ and gives the agent a reward according to $R$.  
\end{definition}

For simplicity, we will first consider a lifelong learning setting in which the episodes for different, independent environments are interleaved, similarly to the setting of Section~\ref{sec:v0}.

\begin{definition}\label{def:multi_RL}
We define the lifelong RL problem with independent environments as follows: suppose we have $N$ environments with their own corresponding MDPs and rewards. The interaction still consists of episodes of length $H$. At the beginning of an episode, one of the $N$ environments $E_i$ is chosen by $i \in \texttt{Uniform}([N])$, and a starting state is generated $s \sim \rho_i$. $i$ and $s$ are revealed to us, and we interact with $E_i$ for $H$ steps.
In the next episode, a new environment is again chosen by independently sampling $i \in \texttt{Uniform}([N])$ and a starting state is independently sampled $s \sim \rho_i$, and we interact with the new environment for $H$ steps. We keep doing this for $T$ episodes.
\end{definition}

We will assume, moreover, that the interaction with the environments during an episode of the lifelong RL problem has a specific form, as follows.
%{\bf Sketches of state input, action output and reward inputs from environment}
We introduce new, specific types of input and output sketches: one to input a state from the environment, another to output an action, and a third to possibly receive a reward for that action.
The input data for RL problems arrives as sketches of state input from the environment and a possible set of actions in the form $S_{in}$ = [RL-CONTEXT,[\la rl-state\ra, \la possible-actions\ra]]. An action is taken by outputting an [ACTION, \la action-choice\ra, \la rl-state\ra] sketch for choosing a specific action in the state rl-state.  Rewards for an action are provided as the compound tuple sketch [[REWARD, \la r\ra], [ACTION, \la taken-action\ra, \la rl-state\ra]]. (Note that our convention is that the all-caps fields here represent some constant label/enum type and lower case fields may be variable ``arguments"). [[Nte that multiple RL problem instances can be fed into our system as the state could correspond to the state from any of the problems. Later in subsection .. we will see how the state may not be given explicitly but may need to be inferred using other modules just as task and context is inferred in v2]]

{\bf Architecture v3 details}: To handle such RL specific sketch inputs and outputs, architecture v3 extends Architecture v2 by introducing PROGRAM-type sketches and EXECUTE-type sketches, allowing it to pass programs to other buckets and execute these programs in the new buckets. The former has the format [PROGRAM, \la program-sketch\ra], where \la program-sketch\ra can be interpreted as a program that can be executed on some input. When the routing module comes across a sketch 
s = [[EXECUTE, [PROGRAM, \la program-sketch\ra]], \la input-sketch\ra], it executes the program \la program-sketch\ra on input \la input-sketch\ra after going bucket corresponding to $f(s)$ -- no separate program needs to be created at $h(f(s))$.

\subsection{ Tabular Q-learning}\label{app:qlearning}

In this section we show that our architecture can be used to perform Q-learning.
Specifically, we will consider Q-learning with a tabular episodic Markov Decision Process (MDP). 
In this environment, there is a set of states $\mathcal{S}$ with $|\mathcal{S}| = S$, and a set of actions $\mathcal{A}$ with $|\mathcal{A}| = A$. Each episode has a finite number of steps $H$. 
At the beginning of each episode the initial state $x_1 \in \mathcal{S}$ is arbitrarily chosen. At each step $h$, the agent observes a state $x_h \in \mathcal{S}$, picks an action $a_h \in \mathcal{A}$, receives an reward and transitions to the next state $x_{h+1}$ drawn from some distribution $P_h$ depending on the current state and action.
In this MDP, we use $P_h$ to denote the transition probabilities at step $h \in [H]$: $P_h(\cdot | x, a)$ is the distribution over next state that the agent got transitioned into if  at step $h$ it takes action $a \in \mathcal{A}$ at state $x \in \mathcal{S}$. Let $r_h(x, a) \in [0, 1]$ denote the deterministic reward it receives. 
We use $\pi_h$ to denote the policy of actions at $h$. That is, $\pi_h: \mathcal{S} \to \mathcal{A}$. 
The goal is to compute policies such that the actions chosen by the policy at each step give accumulated rewards at each episode is maximized.

First  note that the decision tree learning algorithm in section~\ref{sec:v2.decision-tree} can be viewed as a special case where the state action graph is a tree and all rewards are at the leaves. We extend that idea to Q-learning with general state graphs.
%
% For the routing module when a new context arises learning the f function can be thought of as learning a new decision subtree branched by task-id: Say a new sketch S arises it comes to f module and goes to the [F-FUNCTION-CONTEXT] bucket where we assume the first field of S is the task-id. f  needs to decide which fields of S to keep in the output of f. Here it could go over the fields of S and probabilistically keep or drop each field; each of these choices could be thought of as being made over a decision tree of ``state,action" pairs where a state is the part of C that has already been processed and action is whether the next field is to be kept or dropped (and also perhaps all remaining fields to be kept).
%
Now consider a general state-action graph, not just a decision tree. We give here a high-level overview of how an RL algorithm can be implemented in Architecture v3, with a view to showing that it can implement a tabular Q-learning algorithm (Claim~\ref{claim:q-learning})
%; see also Appendix~\ref{app:qlearning}).

We define a special module called the RL-module, that gets executed on sketches with the context RL-CONTEXT. This module outputs a sketch $S_{RL}$=[PROGRAM, \la rl-state-sketch \ra] -- note that it doesn't execute the RL algorithm but simply outputs it as a program. We assume that this program sketch is hard-coded into the architecture, since the goal here is to show that our architecture can learn using a specified RL algorithm, not that it is capable of developing its own algorithm from scratch. 

When $S_{in} = $ [RL-CONTEXT,[\la rl-state\ra, \la possible-actions\ra]] is input, the RL-CONTEXT context is looked up to get $S_{RL}$, and then $S_{in}$ and $S_{RL}$ are combined using attention to produce the compound {\em state sketch} $S$ = [[EXECUTE, $S_{RL}$], $S_{in}$]. This then goes to the bucket $h(f(S))$ where \la rl-state-sketch \ra is executed on input [rl-state, possible-actions] to output a specific taken-action sketch $S_{action}$. The $S_{action}$ is a recursive sketch [ACTION, \la taken-action\ra, \la rl-state\ra] that leads to (or is followed by) the reward input sketch $S_{reward} = $ [[REWARD, \la r\ra], [ACTION, \la taken-action \ra, \la rl-state \ra]] for that taken-action edge and a next $S_{in}'$  sketch that inputs the next state from the environment. Since this will get propagated back along the knowledge graph, the reward will get accounted at the bucket for [ACTION, \la taken-action\ra , \la rl-state\ra] (or equivalently, the outgoing edge of rl-state corresponding to taken-action). The rl-state-sketch encodes the specific details of the RL algorithm, e.g. hyperparameters and exploration method, tracking rewards on each action, tracking temperature, and converging on the best action for a state. 

% \begin{remark}\label{remark:no-confusion-multiple-problems}
% Observe that by using the compound sketch $S$ (which has been ``decorated'' by the RL-module to include the rl-state-sketch) as the rl-state, no confusion arises in the knowledge graph even if e.g. the same image appears in multiple types of problem: the sketches for the RL and non-RL tasks will have different contexts and hence different hash buckets.\end{remark}

% The rl-state when executing in the [rl-state, possible-actions] bucket has the special ability to ``lookhead" in the possible rewards of each candidate [next-action, rl-state] possible future bucket simulating the situation/pretending as if that action was taken (note that if that simulation lasts several steps, this becomes like the A* search into several possible moves).

Note also that the rl-state and action may be discrete states/actions or sketches of more complex/continuous states and action possibilities -- in the latter case we are taking advantage of the ``discretizing'' property of the $h(f())$ function that maps sketches to hash buckets-ids to simplify our state/action space.

\subsubsection{One loop of the Q-learning algorithm}

\begin{enumerate}
\item Input arrives as a sketch $S_{in} = $ [RL-CONTEXT,[\la rl-state\ra, \la possible-actions\ra]]  containing the state $S = $ \la rl-state\ra, a list of  \la possible-actions\ra $ = [ A_i ]$, and the hint RL-CONTEXT that this a Q-learning problem.
\item Because of the RL-CONTEXT context, this gets sent to the Q-learning bucket.
\item The Q-learning bucket outputs $(P,S)$, where $P$ is a program.  get $S_{RL}$, and then $S_{in}$ and $S_{RL}$ are combined using attention to produce the compound {\em state sketch} $S$ = [[EXECUTE, $S_{RL}$], $S_{in}$].
\item In the bucket $h(f(P,S))$, we run $P$ on $S$. $P$ looks at the list of actions / outgoing edges and samples an action $A$.
\item The output of the bucket $h(f(P,S))$ is a command to take action A in the environment. This generates a new state $S'$ and reward $R(S', A, S)$. Sketch these into a new sketch $S'$ and Q-learning hint.
\item As above, $f$ sends $S'$ to the Q-learning bucket to pick up the program $P$ and outputs $(P, S')$.
\item In the bucket $h(f(P,S'))$, the program $P$ looks at the outgoing edges to get the Q-values $Q(S',A')$ and computes the new Q-value for $(S,A)$.
\item Backprop: update the outgoing edges and Q-values.
\end{enumerate}

% Q-values are stored: as weights on the outgoing edges? as a table in the bucket h(f(P,S))?

Note: if the number of actions for each state becomes large, then an alternative version where we visit the state-action buckets may work better.

\begin{claim}\label{claim:q-learning}
Architecture v3 can do tabular Q-learning. Its implementation is compatible with the UCB-Hoeffding algorithm given in \cite{jin2018q}, ensuring it can learn an $\epsilon$-optimal policy in $O(1/\epsilon^2)$ episodes. Further, Architecture v3 can solve multiple RL problems at once, without conflict between the different sets of Q-values and without needing to know in advance how many separate problems there are or allocate resources in advance.
\end{claim}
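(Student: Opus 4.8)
The plan is to exhibit a concrete, hard-coded RL program sketch that realizes the UCB-Hoeffding update of \cite{jin2018q}, and then to verify that the content-addressed execution loop of Architecture v3 applies that update with exactly the schedule their analysis assumes. First I would fix the bookkeeping. For the finite-horizon episodic MDP the relevant table is indexed by a within-episode step $h\in[H]$ together with a state-action pair, so I would have the recursive state sketch carry its depth $h$ and treat the action-edge sketch for $(x,a)$ at step $h$ as the address of a single bucket. The summary-statistics field of that bucket (Figure~\ref{fig:hashbucket}) stores the current value $Q_h(x,a)$ and the visit count $N_h(x,a)$; the outgoing knowledge-graph edges of a state bucket enumerate its admissible actions, and the reward sketch $S_{reward}$ delivers $r_h(x,a)$ back to the corresponding action edge exactly as in the one-loop walkthrough above. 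Thus every object the algorithm manipulates has a canonical home in the memory, reachable by content.

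Second, I would verify that one pass of the loop implements one UCB-Hoeffding step. When the state sketch arrives, the RL-CONTEXT lookup returns the hard-coded program $P$; in the bucket $h(f(P,S))$ the program reads the stored values $Q_h(x,\cdot)$ along the outgoing action edges, adds the Hoeffding bonus $b_t$ with $t=N_h(x,a)$, and samples the optimistic action. After the environment returns the next state $x'$ and the reward, a second invocation dereferences the next-state bucket to read $V_{h+1}(x')=\min\{H,\max_{a'}Q_{h+1}(x',a')\}$ along its outgoing edges, and the backprop step performs $Q_h(x,a)\leftarrow(1-\alpha_t)Q_h(x,a)+\alpha_t\bigl(r_h(x,a)+V_{h+1}(x')+b_t\bigr)$ with the count-dependent rate $\alpha_t=(H+1)/(H+t)$, then increments $N_h(x,a)$. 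Because every quantity this update needs is either local to the bucket or reachable by a single pointer-dereference through the memory table, the architecture reproduces the update verbatim; invoking the guarantee of \cite{jin2018q} and the standard regret-to-PAC conversion then yields an $\epsilon$-optimal policy after $O(1/\epsilon^2)$ episodes, with the $H,S,A$ dependence absorbed into the constant.

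Third, for the multi-problem claim I would appeal to content addressing and lazy allocation. Under Definition~\ref{def:multi_RL} the states of distinct environments $E_i$ carry distinct, well-separated sketches, so by the LSH separation property (Appendix~\ref{appendix:lsh}) their state-action buckets are disjoint; hence the value- and count-fields for different environments never share a bucket and cannot conflict. Since buckets are instantiated only when a context is accessed frequently enough, the agent allocates storage for $E_i$ only upon first encountering it, so it need neither know $N$ in advance nor pre-allocate resources. The key structural point I would make precise is that interleaving is \emph{transparent} to each per-environment table: because $N_h(x,a)$ is incremented only on visits to that exact bucket, the subsequence of updates witnessed by $E_i$'s buckets is identical to the trajectory a standalone UCB-Hoeffding run on $E_i$ in isolation would generate, so the single-environment guarantee applies unchanged to each $E_i$ and a union bound over the $N$ environments gives the overall conclusion.

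The main obstacle I anticipate is precisely this transparency argument: I must rule out any cross-talk arising from the shared routing module, the softmax exploration of the decision tree, or LSH collisions between similar states of different MDPs. I would discharge the first two by noting that the RL program $P$ is hard-coded, so no decision-tree exploration governs the value updates themselves—the only stochasticity is the in-episode action sampling, which is part of the specified algorithm and already accounted for in the analysis of \cite{jin2018q}. The collision issue I would handle by promoting the implicit separation in Definition~\ref{def:multi_RL} to an explicit margin between the state sketches of different environments, so that the $r$-LSH scheme of Appendix~\ref{sec:appendixarchitecturev0} keeps their buckets distinct with high probability; a union bound over the $O(NSAH)$ relevant buckets then controls the total collision probability and completes the argument.
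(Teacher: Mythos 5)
Your proposal follows essentially the same route as the paper's proof: a hard-coded program sketch realizing the UCB-Hoeffding update, Q-values and counts stored locally in content-addressed state(-action) buckets, rewards propagated back along knowledge-graph edges to the matching action edge, and lazy bucket instantiation plus LSH separation for the multi-environment claim. If anything, your version is more faithful to \cite{jin2018q} than the paper's sketch — you index the table by the within-episode step $h$ and use the truncated $V_{h+1}(x')=\min\{H,\max_{a'}Q_{h+1}(x',a')\}$, whereas the paper writes a discounted update with $\gamma$, and you make explicit the no-cross-talk/union-bound argument that the paper's two-sentence multi-problem proof leaves implicit.
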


% \begin{proof}[Sketch of proof]There is a bucket for each state and a bucket for each edge (corresponding to state-action pairs). Algorithm-specific hyperparameters such as the learning rate, exploration policy, and discount are provided by the RL-module, and can be chosen to match \cite[Algorithm 1]{jin2018q}. The Bellman update occurs during the process of updating the knowledge graph edges. Finally, since the architecture generates new hash buckets as and when new contexts arise, and stores the Q-values for each state in its corresponding hash bucket, it can expand dynamically as needed.\end{proof}

% See section~\ref{app:qlearning} for the full proof.

\begin{proof}
We assume that the input data is formatted as described above, and that the RL-module with hardcoded Q-learning program-sketch is provided. The Q-learning algorithm needs to do two things: at state $s$ it needs to choose an action $a$ according to some exploration method (random, greedy, $\epsilon$-greedy, etc), and at the subsequent state $s'$ it needs to identify the maximum Q-value for $s'$ and perform the tabular Bellman update 
\[Q(s,a) \leftarrow (1-\alpha)Q(s,a) + \alpha\Big(R(s',a,s) + \gamma \max_{a'}Q(s',a')\Big)\]
for $s$; we can assume that hyperparameters such as the learning rate $\alpha$ and the discount factor $\gamma$ are both encoded in the RL-module. We use the recursive nature of sketches to combine these two steps: essentially we describe the process at state $s'$ and note that the ``update previous state'' step is empty if the rl-state sketch $S'$ does not point to a previous state.

So let $S'$ be the current rl-state-sketch, which (if it occurred as a result of taking some action $a$ = \la previous-taken-action \ra at rl-state $s$) includes a recursive copy of the sketch [\la previous-taken-action \ra, $S$] and also the value of the reward $R(s',a,s)$ obtained from this action.

At the bucket $h(f(S'))$ the Q-learning program-sketch looks up the maximum Q-value of the available actions at $S'$ (these may be stored e.g. as a table in the bucket $h(f(S'))$ or as weights on the outgoing edges) and uses this to both choose its next action and to compute the reward $R:=R(s',a,s) + \gamma \max_{a'}Q(s',a')$ to be accounted to the previous state-action pair (if any). It returns this information as the compound tuple sketch [[ACTION, \la taken-action \ra], [REWARD, $R$]]; recall that \la taken-action \ra is a recursive sketch that includes the previous state-action pair (if it exists). Now this action is executed in the environment (which may provide another state $S''$ for the next round), while the knowledge-graph-updating process passes this sketch back to the bucket $h(f(S))$ of the previous state-sketch. Its similarity to the edge with matching \la previous-taken-action \ra component ensures that the reward is accounted to the correct state-action pair, and since the sketch compounding process essentially produces a weighted average of similar components (with weights that can be specified by the user/RL-module), this completes the Bellman update.

To see that this is compatible with \cite{jin2018q}, observe that we need only change details that are hardcoded in the RL-module: replace $\alpha$ with $\alpha_t:= (H+1)/(H+t)$ where $H$ is the episode horizon and $t$ is the frequency count for this state-action pair, and add a bias term $b_t := c\sqrt{H^3\iota/t}$ to the reward ($c$ and $\iota$ are constants given in \cite[Theorem 1]{jin2018q}). Based on online-to-batch conversions, the regret can be arbitrarily small when the number of episodes is large enough; thus we can achieve $\epsilon$ error rate if we have $O(S A / \epsilon^2)$ episode samples.
\end{proof}

\begin{claim}
The above implementation can be extended to include deep Q-learning, where each state bucket learns and stores a parametrized Q-function for that state.
\end{claim}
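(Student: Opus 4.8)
The plan is to show that only two components of the tabular implementation (Claim~\ref{claim:q-learning}) need modification---what each state bucket stores, and the hard-coded update rule inside the RL-module's program-sketch---while all of the routing, recursive-sketch, and knowledge-graph machinery carries over verbatim. First I would recall that the architecture already permits each bucket to hold a trainable neural network as its program $P$, and that these parameters are updated by backpropagation whenever an example is evaluated in that bucket (Appendix~\ref{appendix:architecture}). So I would replace the explicit table of Q-values at bucket $h(f(S))$ with a small trainable network $Q_\theta(\cdot,\cdot)$. Because $h\circ f$ is locality sensitive, all rl-states that are close in sketch space route to the same bucket, so $Q_\theta$ performs fine-grained function approximation across exactly that cluster of states while the LSH table supplies the coarse state aggregation; this is the sense in which each state bucket ``learns and stores a parametrized Q-function for that state.''

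Next I would replace the tabular averaging update inside the hard-coded program-sketch with a single gradient step on the temporal-difference loss. Concretely, when the recursive reward sketch [[REWARD, \la r\ra], [ACTION, \la taken-action\ra, \la rl-state\ra]] is propagated back along the knowledge graph to the previous state's bucket $h(f(S))$---exactly as in the tabular proof---the program computes the target $y = R(s',a,s) + \gamma\max_{a'}Q_{\theta'}(s',a')$, where the $\max$ term is the very same lookup at bucket $h(f(S'))$ that the tabular version already performed, and then takes an SGD step to reduce $(Q_\theta(s,a)-y)^2$. Since the sketch-compounding process already delivers the matched \la taken-action\ra edge and the reward value to this bucket, no new routing or bookkeeping is required---only the numerical operation performed on arrival changes. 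The handling of multiple simultaneous RL problems is inherited unchanged, since distinct problems' states still hash to distinct buckets under the RL-CONTEXT routing, so their Q-networks never interfere.

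The main obstacle is that, unlike the tabular case, we cannot import a clean sample-complexity guarantee: the UCB-Hoeffding bound of \cite{jin2018q} relies on the exactness of tabular updates and does not transfer to function approximation, and deep Q-learning admits no comparable worst-case convergence guarantee in general. I would therefore frame the claim as one of \emph{implementability} rather than of convergence---the architecture faithfully executes the deep-Q update loop and inherits whatever guarantees the underlying DQN procedure enjoys, but we assert no new rate. A secondary subtlety is stability: a faithful implementation may use a target network $\theta'$ (a periodically frozen copy of $\theta$) together with an experience-replay buffer, and both are already expressible---the former as a frozen ``mature'' copy of the bucket's program, the latter via the sketch memory's ability to store and replay past event sketches (Appendix~\ref{appendix:sketching})---so no new primitives are needed.
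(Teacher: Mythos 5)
The paper provides no proof of this claim at all: it is stated bare, and a later remark in the same section explicitly says that ``the implementation of this is beyond the scope of this paper.'' So there is no paper argument to compare yours against --- your proposal supplies something the authors chose not to write down. Judged on its own terms, it is sound and stays within the primitives the architecture already defines: buckets are allowed to hold trainable neural networks as their programs $P$ (updated by backpropagation when the routing module trains in that bucket), the reward sketch $[[\mathrm{REWARD}, \langle r\rangle], [\mathrm{ACTION}, \langle \text{taken-action}\rangle, \langle \text{rl-state}\rangle]]$ already propagates back to the previous state's bucket in the tabular proof of Claim~\ref{claim:q-learning}, and the $\max_{a'}$ lookup at $h(f(S'))$ is the same cross-bucket operation the tabular version performs. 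Your two modifications --- swapping the Q-table for $Q_\theta$ and swapping the Bellman averaging for a TD gradient step --- are exactly the minimal delta, and your observation that the LSH table supplies coarse state aggregation while $Q_\theta$ interpolates within a bucket matches the paper's own remark about graceful generalization to continuous state spaces.

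Two things are worth keeping as you have them. First, your honest reframing of the claim as one of \emph{implementability} rather than convergence is the right call: the UCB-Hoeffding guarantee cited for the tabular case genuinely does not transfer, and the paper makes no quantitative assertion here either. Second, your note that target networks and replay are expressible via frozen ``mature'' program copies and the sketch memory is a nice touch, though strictly optional for the claim as stated. The only caution is semantic: the claim's phrase ``a parametrized Q-function for that state'' is ambiguous between one network per bucket (your reading, which is the natural one given locality-sensitive hashing collapses similar states into one bucket) and one network per literal environment state; your reading is the only one under which ``parametrized'' buys anything, so it is defensible, but you might state the interpretation explicitly.
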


\begin{claim}\label{claim:multiple-rl-problems-appendix}
Architecture v3 can solve multiple independent RL problems at once, without conflict between the different sets of Q-values and without needing to know in advance how many separate problems there are or allocate resources in advance.
\end{claim}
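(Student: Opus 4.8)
The plan is to reduce the multi-problem case to the single-problem guarantee already established in Claim~\ref{claim:q-learning}, by arguing that the content-addressed memory keeps the $N$ problems' bookkeeping physically disjoint, so that the interleaved execution is, from the point of view of any one environment, indistinguishable from running that environment in isolation.

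First I would make precise the sense in which the problems are separated. In the setting of Definition~\ref{def:multi_RL} the environment index $i$ is revealed together with the state $s$ at the start of each episode, so the incoming sketch $S_{in}=[\text{RL-CONTEXT},[\langle\text{rl-state}\rangle,\langle\text{possible-actions}\rangle]]$ can be taken to encode $i$ inside $\langle\text{rl-state}\rangle$ (e.g.\ as a label in its own subspace). Two state-sketches drawn from different environments therefore differ in their environment-label field, and by the sketch-to-sketch separation property together with the LSH guarantee (Appendix~\ref{appendix:lsh}) they map to distinct buckets with high probability. Consequently the Q-value table, the visitation count $t$, and the UCB bonus $b_t$ for a state-action pair $(s,a)$ of environment $E_i$ live in a bucket that no state-action pair of any other environment can index into.

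Second, I would observe that Q-learning is an online algorithm whose update for a pair $(s,a)$ depends only on the history of visits to $(s,a)$ and its successors within the same environment. Because the per-pair count $t$ appearing in the learning rate $\alpha_t=(H+1)/(H+t)$ and in the bonus $b_t=c\sqrt{H^3\iota/t}$ is incremented only when that specific bucket is visited, the interleaving of episodes from different environments merely reorders updates belonging to distinct problems and leaves the update sequence internal to each problem exactly as it would be in isolation. Thus Claim~\ref{claim:q-learning} applies verbatim to each $E_i$, and each converges to an $\epsilon$-optimal policy once it has received $O(SA/\epsilon^2)$ of its own episodes. The ``no advance allocation'' requirement is handled automatically: buckets are instantiated on first (sufficiently frequent) access, so neither $N$ nor the per-environment state counts need be known beforehand.

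The remaining step is a scheduling/concentration argument bounding the total number of episodes. Since each episode draws $i\sim\texttt{Uniform}([N])$, a coupon-collector style estimate (as in the proof of the v1 Lemma) shows that after $O\!\left(N\,\tfrac{SA}{\epsilon^2}\log\tfrac{N}{\delta}\right)$ total episodes every environment has, with probability $1-\delta$, received at least its required $O(SA/\epsilon^2)$ episodes, so all $N$ policies are simultaneously $\epsilon$-optimal. I expect the main obstacle to be the first step---rigorously ruling out cross-problem bucket collisions---since that is where the informal sketch-separation and LSH assumptions must be pinned down; once disjointness of the buckets is granted, the rest follows by invoking Claim~\ref{claim:q-learning} per environment and a standard uniform-sampling bound.
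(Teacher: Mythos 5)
Your argument is correct and rests on the same two observations as the paper's own (two-sentence) proof: distinct rl-state contexts from different environments hash to distinct buckets, so the Q-values and counts are stored disjointly, and buckets are instantiated on demand, so no advance allocation is needed. Your additional elaborations---encoding the environment index in the state sketch, noting that interleaving only reorders updates across (not within) environments, and the coupon-collector bound on total episodes---are sound refinements of detail the paper leaves implicit rather than a different approach.
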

\begin{proof}
New hash buckets are created by the architecture as new contexts (i.e. rl-states) arise, allowing it to expand dynamically as needed. The Q-values for each state are stored locally within the corresponding rl-state bucket.
\end{proof}

{\bf Advantages of using Architecture v3 for Q-learning}: Implementing Q-learning in our architecture comes with several key benefits, including:
\begin{itemize}
\item Graceful generalization to continuous state spaces: since $f$ drops the extraneous environmental details from a sketch, the model automatically groups together similar states.
\item Learning an environment model: By passing sketches of executed actions back along the knowledge graph, the outgoing edges of the bucket corresponding to a state $s$ can store not only the Q-values of state-action pairs $(s,a)$ but also frequency counts of tuples $(s',a,s)$: that is, we learn a model of the transition function as a free side-effect.
\end{itemize}

\begin{remark}
While we have focused on tabular Q-learning here for simplicity, we note that many different RL algorithms could be ``dropped in'' simply by changing the program-sketch provided to the RL-module. We have also restricted our attention here to what happens one step back along the knowledge graph, but by backtracking the sketches further it becomes possible for earlier states to use this data in future to ``look ahead'' several steps. Finally, we note that this modular architecture should also lend itself well to deep Q-learning approaches, where each state bucket learns and stores a parametrized Q-function for that state; however, the implementation of this is beyond the scope of this paper.
\end{remark}

% \begin{proof}
% tbd
% \end{proof}

\subsection{Modular Q-learning} 
In the problem of Sec.\ \ref{app:qlearning}, each state can be explicitly observed. However, in reality, the recognition of the states might need to be learned.
The real power in our architecture comes from its ability to seamlessly combine RL decision making with other types of task, e.g. classification. As a simple example, consider a situation where we need to choose one of a limited number of actions in response to an image of a person displaying one of several gestures. A single large RL model probably {\em could} learn an effective policy for this task, but since it can't identify the indirect association {\em image $\rightarrow$ gesture $\rightarrow$ response to gesture} it could just have easily have learned to react to some spurious patterns in the training dataset. Meanwhile, our architecture would simultaneously try this approach (i.e. try to learn a policy directly from the images) and also explore the possibility of using related modules as part of its decision. Assuming it had already developed a gesture-classification module, pathways that make use of this module would be consistently high reward and therefore preferred over the direct approach. Indeed, as we show below, we need not even assume the prior existence of the gesture-classification module: our architecture can learn to solve both problems simultaneously (assuming it is provided with training data for both problems).

\begin{definition}
We define the modular RL problems as follows: suppose we have $N$ environments with their own corresponding MDPs and rewards, and $N'$ classification problems where one identifies the states of the RL. 
% modular tasks as in the problem of v2. 
At each time step, uniform randomly, we are either given a labeled sample of one of $N'$ classification tasks, or put into one of the $N$ environment and interact with it for $H$ steps as in \textbf{Definition}~\ref{def:multi_RL}.
We assume that the data distribution of state identification task is $P$. 
% The task identifiers and environment identifiers are not explicitly given but inferred implicitly as in the problem of v2. 
% The outputs of the classification problems are used as states of the RL problem. 
\end{definition}

\begin{definition}
A distribution $P$ $m$-dominates distribution $Q$ if for all $x$ in the sample space, we have $Q(x)\leq m P(x)$. 
\end{definition}

If the data distribution $P$ of state identification during training $m$-dominates state classification distribution $Q_i$ conditioned on state $s_i$ $ \forall i = 1,\dots, S$, then if classification module can achieve error rate of $\epsilon/m$ on $P$, it can also achieve error rate of $\epsilon$ on each $Q_i$. Indeed, let $E$ be the event that we make an error in classification, then since $P(E) = \sum_{x \in E} P(x)$ we have that for all $i$:
$$
Q_i(E) = \sum_{x \in E} Q_i(x) \leq 
\sum_{x \in E} m P(x) 
= m P(E) \leq \epsilon. 
$$

However, to guarantee $P$ can $m$-dominate \emph{every} $Q_i$, then we need $m\geq S$. Indeed, consider that the supports of all $Q_i$ are disjoint: $D_i \cup D_j = \emptyset \ \ \forall i \neq j$ where $D_i = \{x : Q_i(x) \neq 0\}$. Then we have 
$$
m = m \sum_{\forall x} P(x)
\geq m\sum^S_{i=1} \sum_{x \in D_i} P(x)
\geq \sum_{i=1}^S \sum_{x \in D_i} Q_i(x)
= S.
$$

Intuitively, we are assuming that there is some data-generating distribution $P$ that can cover every $x$ in the support of each $Q_i$ with some properly lower bounded probability mass.   
An example of such a $P$ would be the uniform mixture of $Q_1,\dots, Q_S$. That is $P(x) = (1/S) \sum_{j=1}^SQ_j(x)\  \forall x$. In this case, it's immediate that $P$ can $S-$dominates each $Q_i$ since $S P(x) = \sum_{j=1}^S Q_i(x) \geq Q_{i}(x) \ \  \forall i$. 
From another perspective, if $P$ allows us to visit each state $s_i$ at least $\mu$ fraction of the time, then $P$ can $(1/\mu)$-dominate each $Q_i$.

\begin{theorem}\label{thm:modular-rl}
% Given training data for both a classification problem and an RL problem, where the data arrives in random order and the output of the classification module is meant to be the state for the RL problem,  
Given a modular RL problem,
Architecture v3 figures this connection out automatically and uses the classification module as part of its RL solution.
\end{theorem}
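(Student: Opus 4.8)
The plan is to realize the modular RL problem as a two-node instance of the latent dependency DAG from Theorem~\ref{Architecture:v15_main_theorem}, with the state-classification task as a source node and the RL task as a compound node that depends on it, and then to glue together guarantees already established for the two ingredients: supervised learning of the classifier (via the $m$-dominance bound proved just above the theorem) and tabular Q-learning (Claim~\ref{claim:q-learning}). The routing decision tree of Section~\ref{sec:v2.decision-tree} (Alg.~\ref{algo:fdecisiontree}) is what performs the ``figuring out'': it explores both the direct pathway (route the raw state-sketch straight to a Q-learning bucket) and the modular pathway (first send the input through the classification module, then use its output as the RL-context), and I will argue the latter wins on reward.

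First I would dispatch the classifier. Because at each step a classification sample arrives with constant probability under the uniform interleaving of Definition~\ref{def:multi_RL}, a standard waiting-time argument (as in the v1 analysis) shows the state-identification bucket accumulates enough labeled examples that $\mathcal{A}_{\mathcal{M}}$ trains it to error at most $\epsilon/m$ on the training distribution $P$. By the $m$-dominance computation already carried out preceding the theorem, this immediately yields error at most $\epsilon$ on each conditional state distribution $Q_i$. The module is then marked mature and frozen, so downstream tasks may call it.

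Next I would analyze the RL node. Once the classifier is mature, one branch explored by Alg.~\ref{algo:fdecisiontree} routes each incoming RL image through it and keeps only the recovered discrete state as the context; this is exactly the setting of Claim~\ref{claim:q-learning}, so Architecture v3 runs tabular UCB-Hoeffding Q-learning on the \emph{true} $S\times A$ state space and converges to an $\epsilon$-optimal policy in $O(SA/\epsilon^2)$ episodes, accruing near-optimal long-run reward (the $\epsilon$-level misclassifications act as label noise, absorbed by the $O(\epsilon)$ robustness of $\mathcal{A}_{\mathcal{M}}$ assumed in Section~\ref{sec:hll}). Any direct branch, by contrast, must either conflate genuinely distinct states or latch onto spurious image features, so it cannot represent the optimal policy and its achievable reward is bounded below that of the modular branch by a gap $\Delta>0$. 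Feeding this reward gap into Claim~\ref{claim:decision-tree-appendix}, the annealed softmax of Alg.~\ref{algo:fdecisiontree} concentrates on the maximum-reward branch, so the modular pathway is selected and hardened into the program for the RL bucket with probability $1-\delta$.

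The main obstacle is making the reward-gap argument quantitative: I need the modular branch's long-run reward to strictly exceed that of \emph{every} direct branch by a margin $\Delta$ large enough for the temperature schedule of Claim~\ref{claim:decision-tree-appendix} to separate them, which in turn requires that no member of the atomic class $\mathcal{M}$ can solve the RL task from raw inputs as well as the composed (classifier $\to$ Q-learning) pipeline. I would formalize this as a representational/identifiability assumption on the problem (the gesture/state truly mediates the reward, as in the informal discussion preceding the theorem) and then combine the regret bound of \cite{jin2018q} with the $\epsilon$ classification error to lower-bound the modular reward and upper-bound the direct reward. A secondary technical point is propagating the classifier's $\epsilon$ error through the transition dynamics so that the Q-learning guarantee degrades only gracefully, which follows from the no-distribution-shift assumption together with the noise-robustness of the learner.
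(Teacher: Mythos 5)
Your proposal takes essentially the same route as the paper, whose entire proof consists of the single sentence that the result ``follows by combining modules along the same lines as the proof of Corollary~\ref{thm:v2examples}'' --- i.e., the routing decision tree explores both the direct pathway and the classifier-mediated pathway, and the latter is selected because it accrues higher reward, exactly your argument. Your write-up is in fact considerably more detailed than the paper's (invoking the $m$-dominance bound and Claim~\ref{claim:q-learning} explicitly), and the quantitative reward-gap/identifiability issue you correctly flag as the main obstacle is simply not addressed in the paper's proof either.
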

\begin{proof}
This follows by combining modules along the same lines as the proof of Theorem~\ref{thm:v2examples}
\end{proof}

This demonstrates one of the key points of our architecture: it is capable of handling multiple types of problem in a uniform way, and hence is able to  combine them and exploit the relationships that arise organically in the knowledge graph. Combined with the fact that it can expand and create new modules as it discovers new concepts, the result is an extremely flexible architecture capable of solving complex multi-layered problems.

\section{Case Study: Card Game}\label{sec:appendixcasestudy}

%{\color{blue}maybe rename this section?}

Consider  a card game where one has to pick a card from the cards in hand that has the same number as on the top card on the deck on table, the desired module is to ``identify card from hand that has same number as that of top card'' and output sketch is ``put that card on the table''. This module gets executed when one hears ``make your next move''.

The main sketches in consideration are [AUDIO, ``make your next move''], [CARD-ON-TABLE, \la top-card-sketch \ra], [CARDS-IN-HAND, \la card-in-hand-sketch \ra], From the on-going game there is also a sketch [CARD-GAME, [NAME, ``sequence''].

Let us assume that the system has already learned about cards and there is a module to recognize the type of a card that takes as input a card-image [CARD-IMAGE, \la image-sketch \ra] and produces a sketch [CARD, \la card-sketch \ra] that contains the number and type of the card. So \la card-sketch \ra = [CARD-DESCRIPTION, \{[CARD-NUMBER, \la number-sketch \ra], [CARD-COLOR, \la color-sketch \ra], [CARD-SYMBOL, \la card-symbol-sketch \ra]\}]

We will also assume based on knowledge of previous card games that the sketches [CARD-ON-TABLE, \la top-card-sketch \ra], [CARDS-IN-HAND, \la card-in-hand-sketch \ra] are produced from the visual input [IMAGE, \la input-image-of-scene] -- based on the sketch [CARD-GAME, [NAME, ``sequence"]] and the sketches that the visual analysis module outputs attention is paid on sketches related to cards.

The compound sketch ([AUDIO, ``make your next move''], [CARD-ON-TABLE, \la top-card-sketch \ra], [CARDS-IN-HAND, \la card-in-hand-sketch \ra], [CARD-GAME, [NAME, ``sequence'']]) hits a new hash bucket that needs to learn the specific new module. Since this bucket is empty it may be initialized from other buckets for similar buckets; if there is a program for another similar card game it will have the same structure but with a different card-game name. A new program will get trained for this new game starting from that program. That program will call one program to ``identify the best card in hand'' and then another one to ``put that card on the table''. Only the former needs to be modified. The version is simple and can be trained from a few examples: from the card sketches on hand and the card sketch on top of the table it needs to output the one in hand that matches the number of the one on table.

\begin{figure}
\includegraphics[trim = 20 50 0 15, clip, width=\textwidth]{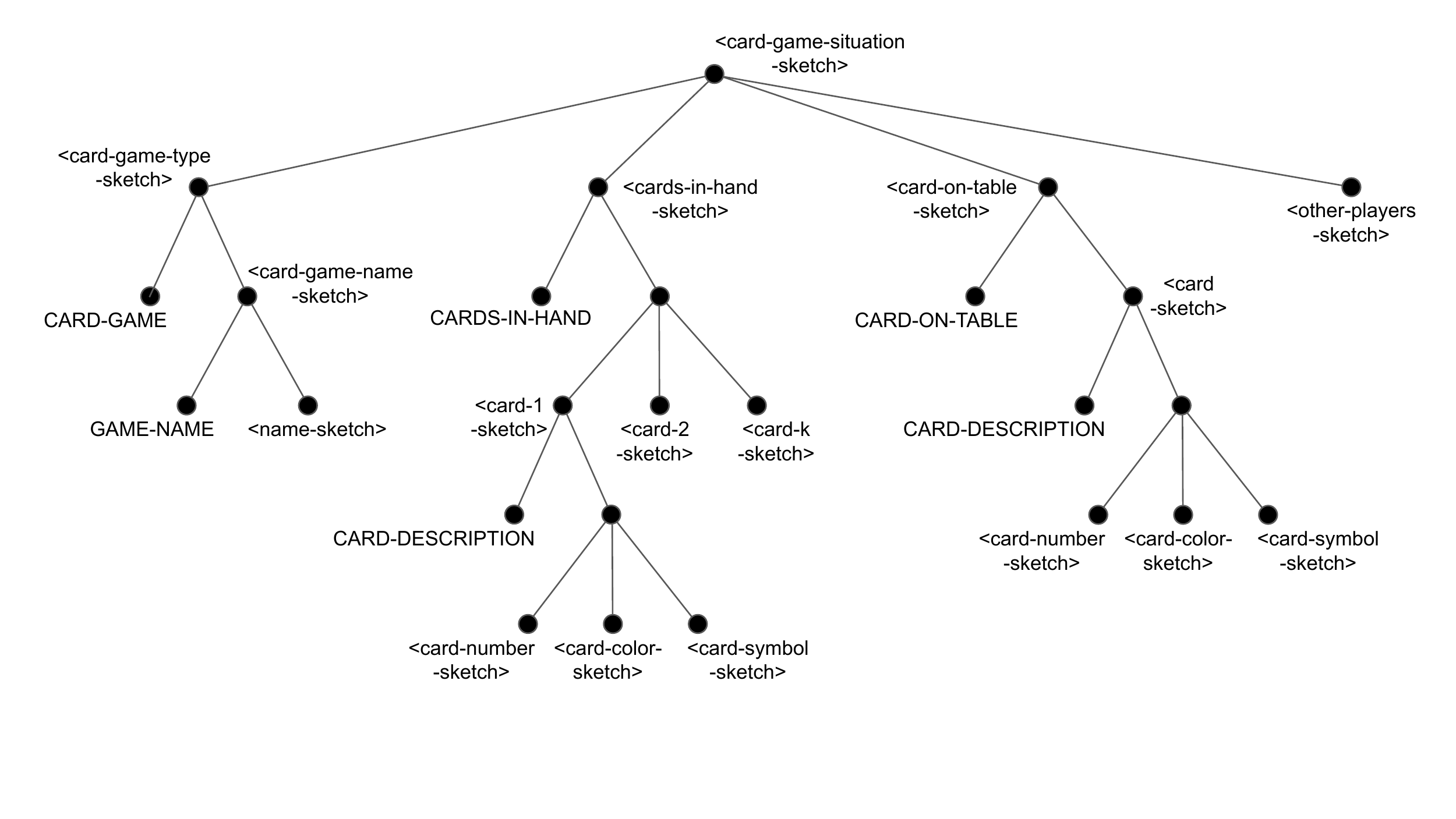}
\caption{Relevant sketches in a card game where one has to pick a card from the cards in hand that has the same number as on the top card on the deck on table. The desired module is to “identify card from hand that has same number as that of top card” and output sketch is “put that card on the table”. The main sketches in consideration are [CARD-ON-TABLE, \la top-card-sketch \ra], [CARDS-IN-HAND, \la card-in-hand-sketch \ra] that get triggered on the input [AUDIO, ``make your next move'']
}
\label{fig:card-game-sketch-tree}
\end{figure}

\section{Misc}\label{app:misc}

\subsection{Correspondence between programs, modules, networks, embeddings}

Each embedding in an embedding table in any deep network can be viewed as a ``module'' that modifies the ``program'' defined by the upper layers. Similarly each embedding in a hash table can be viewed as a ``program'' embedding. Program embeddings can be interpreted more generally where they go through some main/global network that generates an encoding of another deep network.

We could also construct new programs by finding a cluster of related programs and doing a low rank decomposition of their vector representations. The low rank approximation can be viewed as ``subroutines'' and the programs can be viewed as combinations of these subroutines.

\subsection{CNNs as simple recursive functions}
% this is probably not going to survive as a section on its own, it may just be a comment in an appendix or something

A compound module can ``call'' other modules. With this ability, there is a simple modular recursive view of a CNN instead of the normal bottom up view where we start with small patches and keep combining them to form bigger and bigger patch representations as one goes up the network. 

In the recursive view, the image goes to a module that handles patches at the higher level that recursively calls the module for the lower level patches. Given an input image form a set of patches (of the largest size with the appropriate stride); ``call'' the module for each of these patches to get a sketch for each patch; combine all these sketches for these patches to get a single sketch for the full image. (Note that this is a top down view where the module for the patch of the certain size recursively calls the module for the patches of the next largest size.)

\subsection{Task id's may not be present}

Although we have assumed even in v2 that there are some external Task descriptions (that may not be entirely explicit), in real life we do not get such a separate field but receive only an endless sequence of sights and sounds. The external task description may in fact also be ``inferred" from the video input -- for example the task ``PLAY CARD GAME" may be inferred from the context around the current images or from the the previously seen/heard inputs.

One limitation is that we haven’t gotten into how logic, reasoning, and language could be handled uniformly. While we believe there could be modules that evolve for these, the conceptual details as to how they would work in this architecture have not been investigated.

\subsection{Experiment details}

\subsubsection{Five digit recognition}
For the modular approach, the sub-modules are as follows. 

\begin{enumerate}
    \item For the image segmentation task, the sub-module is a convolutional neural network with the following layers: a convolutional layer with $32$ output channels and $3 \times 3$ kernel; a flatten layer; a fully-connected layer with $128$ output units; a fully-connected layer with $64$ output units, and a fully-connected layer with $5$ output units (output layer, with output being the horizontal segmentation coordinates).
    \item For the single digit recongition task, the submodule is a convolutional neural network with the following layers: a convolutional layer with $32$ output channels and $3 \times 3$ kernel; a flatten layer; a fully-connected layer with $128$ output units; a fully-connected layer with $64$ output units, and a fully-connected layer with $10$ output units (output layer, with output being logits of the $10$ output classes).
\end{enumerate}

For the end-to-end approach, the model is a convolutional neural network with the following layers:  a convolutional layer with $32$ output channels and $3 \times 3$ kernel; a flatten layer; a fully-connected layer with $128$ output units; a fully-connected layer with $64$ output units, a fully-connected layer with $256$ output units, and a fully-connected layer with $10000$ output units (output layer, with output being the horizontal segmentation coordinates).

\subsubsection{Intersection of halspaces}
For the modular approaches, all the modules are $3$-layer fully-connected network. Number of hidden units for each layer is $10$, $50$ and $2$ (output layer). For the end-to-end approach, the model is a $3$-layer fully-connected network with $100$, $500$ and $2$ units.

For the $K=10$ case, we noticed that the final solution doesn't require all the $10$ sub-modules to be trained first. We repeat the experiment for $30$ times and for most times, it trains $7$ sub-modules and the final model uses the output of these $7$ modules and the raw input $x_i$ to successfully predict the intersection of $10$ halfspaces.

\end{document}